\documentclass[lettersize,journal]{IEEEtran}
\usepackage{amsmath,amsfonts}
\usepackage{algorithmic}
\usepackage{algorithm}
\usepackage{array}
\usepackage[caption=false,font=normalsize,labelfont=sf,textfont=sf]{subfig}
\usepackage{textcomp}
\usepackage{xcolor}
\usepackage{stfloats}
\usepackage{url}
\usepackage{verbatim}
\usepackage{graphicx}
\usepackage{cite}
\newtheorem{lemma}{Lemma}
\newtheorem{proposition}{Proposition}

\newtheorem{definition}{Definition}
\newtheorem{theorem}{Theorem}

\begin{document}

\title{{Truthful Online Preference Aggregation for LLM Fine-Tuning in Mobile Crowdsourcing }}

\author{Shugang~Hao,~\IEEEmembership{Member,~IEEE,} 
        and~Lingjie~Duan,~\IEEEmembership{Senior Member,~IEEE}
\thanks{Part of this work has appeared in IEEE ICASSP 2025 \cite{hao2025online}.}

\thanks{The research of Lingjie Duan was supported by Guangdong Provincial Key Lab of Integrated Communication, Sensing and Computation for Ubiquitous Internet of Things (No. 2023B1212010007).}

\thanks{Shugang Hao is with the Singapore Wireless Innovation Centre, Singapore University of Technology and Design, Singapore, 487372 Singapore. Lingjie Duan is with the Internet of Things Thrust and the Artificial Intelligence Thrust, Hong Kong University of Science and Technology, Guangzhou 511455, China.  E-mail: shugang\_hao@sutd.edu.sg, lingjieduan@hkust-gz.edu.cn. (Corresponding author: Lingjie Duan.)}

}

\maketitle

\begin{abstract}
To better serve users’ demands in mobile applications (e.g., navigation), mobile crowdsourcing platforms can iteratively align large language model (LLM)-generated content (e.g., AI-generated traffic condition predictions) with human feedback collected from crowdsourcing workers (e.g., mobile users). However, workers may strategically misreport their online preference feedback to maximize their influence or payment. Existing pipelines in mobile crowdsourcing (e.g., EM-based weight estimation) fail to identify the most accurate worker in this online setting, resulting in a linear regret $\mathcal{O}(T)$ over $T$ time slots. {In this paper, we study truthful online preference aggregation for LLM fine-tuning in mobile crowdsourcing.} We formulate a new dynamic Bayesian game to model the multi-agent online learning process between the platform and strategic mobile workers. We propose a novel online weighted aggregation mechanism that dynamically adjusts each worker’s weight in the preference aggregation according to their feedback accuracy. We prove that our mechanism ensures truthful feedback from strategic workers and achieves a sublinear regret $\mathcal{O}(\sqrt{T})$ over $T$ time slots. We further extend our mechanism to a challenging scenario with limited worker feedback per time slot, still guaranteeing a sublinear regret $\mathcal{O}(\sqrt{T})$. Experiments on LLM fine-tuning with real-world datasets further demonstrate significant performance gains of our mechanisms over benchmark schemes.
\end{abstract}

\begin{IEEEkeywords}
LLM fine-tuning, mobile crowdsourcing, truthful mechanism design, regret analysis.
\end{IEEEkeywords}

\section{Introduction}\label{S1}

To better serve users’ demands in mobile applications, mobile crowdsourcing platforms can iteratively align large language model (LLM)-generated content with human feedback collected from crowdsourcing workers (e.g., \cite{touvron2023open}, \cite{xu2024cached}). For example, navigation platforms (e.g., Waze) continuously collect human feedback on traffic conditions, routes, and system recommendations, providing human feedback required for iterative LLM alignment in dynamic mobile environments (e.g., \cite{waze_overview, waze_crowdsourcing}). Mobile conversational AI applications (e.g., ChatGPT and Gemini mobile) continuously collect users’ interaction feedback (e.g., binary ratings, response regenerations, and follow-up corrections) to assess and improve system-generated responses (e.g., \cite{ouyang2022training}, \cite{google_gemini_feedback}).

{However, recent studies find that selfish workers may strategically misreport their online preference feedback to maximize their influence or payment  (e.g., \cite{sun2024mechanism, soumalias2024truthful, park2024rlhf, conitzer2024social}). For example, there is a renowned ``wet bias" where a weather forecaster as a worker or a predictor may deliberately report an exaggerated probability of precipitation to increase the influence of his forecast in the weather forecasting platform's final prediction (e.g., \cite{roughgarden2017online}). Besides, a substantial number of Amazon Mechanical Turk (MTurk) workers are found to strategically misreport their responses to platform-elicited screening questions (e.g., falsely claiming required demographics, prior experience, or device ownership) that are explicitly requested by the platform, aiming to increase their access or weight to higher-paying tasks (e.g., \cite{chandler2017lie, kennedy2020shape}).} 
Nevertheless, existing adaptive aggregation pipelines (e.g., EM-based weighting \cite{perez2021online, perez2018adaptive} and Hedge-style online learning \cite{fanuniversal, zhou2023efficient}) largely assume passive or truthful reporting and ignore the possibility of strategic misreporting by workers. Our first research question arises:
\begin{itemize}
\item \textit{Q1. How vulnerable is the current practice of LLM fine-tuning against selfish workers?}
\end{itemize}

Later, we prove that such current practice fails to identify the most accurate worker in the online learning process. Recent work (e.g., \cite{sun2024mechanism, park2024rlhf, soumalias2024truthful, dubey2024auctions,xu2023epvisa}) proposes monetary mechanism design to elicit truthful preferences from strategic workers in LLM fine-tuning. However, such payment-based mechanisms largely focus on one-shot or offline preference elicitation and do not consider online interactions, where workers have more room to strategically misreport and play with the platform for long-term influence.

We also find some recent studies on online or iterative LLM alignment (e.g., \cite{dongrlhf}, \cite{ye2024online}, \cite{xiong2024iterative}), where the system performs LLM fine-tuning using periodically-collected human annotations. Yet, these studies focus on preference feedback from a single worker and do not address diverse human feedback from multiple workers. Further, they assume that a worker is always truthful to provide his real preference feedback, which does not capture the strategic misreport from multiple workers.

In the related literature of algorithmic game theory, there are relevant non-monetary mechanism studies on facility location games (e.g., \cite{asadi2022collaborative}, \cite{chen2024mechanism}, \cite{li2024strategyproof}), where the system aims to incentivize customers' truthful reporting of their locations to optimize facility placement. Each customer can strategically misreport his location to mislead the facility placement as close to his location (preference) as possible. The popular ``median" scheme (e.g., \cite{conitzer2024social}, \cite{wang2024positive}) to aggregate multi-agent reports is widely used to return customers' truthful reporting. Yet, later we prove that it can incur a non-vanishing regret over time. Thus, our second research question arises: 
\begin{itemize}
    \item \textit{Q2. How to design a truthful and regret-efficient mechanism against selfish workers in LLM fine-tuning for mobile crowdsourcing? }
\end{itemize}

Note that motivating truthful feedback from workers while achieving vanishing regret is highly challenging. First, workers' true preferences are hidden and may vary across time, making it difficult for the platform to detect or correct misreports and reliably infer these preferences (e.g., \cite{sun2024mechanism}). Furthermore, because the most accurate worker is unknown and must be learned online, the platform finds it hard to dynamically assign weights in a way that guarantees vanishing regret in the presence of strategic behavior.

We summarize our key novelty and main results as follows.
\begin{itemize}

\item {\textit{Truthful online preference aggregation for LLM fine-tuning in mobile crowdsourcing:} In this work, we study the design of a truthful online preference aggregation mechanism in mobile crowdsourcing applications, where heterogeneous workers may strategically misreport their preference feedback to maximize their long-term influence or payment. The aggregated preferences serve as the human-feedback dataset for iteratively fine-tuning a downstream LLM at the platform. Unlike the LLM literature studying either a single worker or offline preference feedback (e.g., \cite{sun2024mechanism}, \cite{park2024rlhf}, \cite{soumalias2024truthful}, \cite{dubey2024auctions}, \cite{dongrlhf}, \cite{ye2024online}, \cite{xiong2024iterative}), we focus on \textit{how a platform can incentivize selfish workers' truthful feedback through online aggregation mechanism design.}}

    \item \textit{Non-vanishing regrets of current  practice:} We prove that the current crowdsourcing practice (e.g., EM-based weight estimation) fails to identify the most accurate worker and can lead to a non-vanishing regret $\mathcal{O}(T)$ over $T$ time slots. Further, we prove that the popular median scheme in the algorithmic game theory literature still incurs a linear regret $\mathcal{O}(T)$.

    \item \textit{Our novel truthful online weighted aggregation mechanism:} We first formulate a new dynamic Bayesian game to model the multi-agent online learning process between the platform and strategic workers. We then propose a novel online weighted aggregation mechanism to dynamically adjust workers’ weights in the preference aggregation according to their feedback accuracy during the online learning process. We prove that our mechanism guarantees workers' truthful preference feedback and achieves a vanishing regret $\mathcal{O}(\sqrt{T})$ over $T$ time slots. {We further prove that our mechanism is responsive to new high-quality workers under the uniform step-size $\alpha$, and remains robust under bounded noisy verification of the ground-truth system state.}
    
    \item \textit{Extension to limited worker feedback:} 
    In practice, collecting feedback from multiple workers can be difficult due to cost and coordination challenges, which can in turn slow down the online learning process in LLM fine-tuning (e.g., \cite{lambert2025reinforcement}). We further extend to address a challenging scenario where only one worker's preference feedback is available per time slot. We propose a novel online mixed selection mechanism to ensure truthful feedback from any strategic worker while maintaining a sublinear regret $\mathcal{O}(\sqrt{T})$. Experiments on LLM fine-tuning based on real-world datasets further demonstrate significant performance gains of our proposed mechanisms compared to benchmark schemes. 
\end{itemize}

The rest of this paper is organized as follows. Section~\ref{S_RW} reviews related work. Section~\ref{S2} introduces the system model and the dynamic Bayesian game formulation for online mobile crowdsourcing based on LLM fine-tuning iterations. Section~\ref{S3} analyzes three common schemes used in the literature as benchmarks for our mechanism
to compare later. Section~\ref{S4} details our proposed mechanism design and analysis. Section~\ref{S5} extends the framework to limited worker feedback. Section~\ref{S6} presents experimental results on real-world datasets. Section~\ref{S7} concludes.

\section{{Related Work}}\label{S_RW}

In this section, we discuss four lines of existing work most relevant to our study.

\textit{Online LLM alignment with human feedback.} Recent studies on online or iterative LLM alignment perform LLM fine-tuning using periodically-collected human annotations to keep the policy aligned with evolving user preferences. Xiong et al. \cite{xiong2024iterative} formulate iterative preference learning as a KL-regularized optimization against a reference model and provide theoretical guarantees under the assumption of truthful single-worker feedback. Ye et al. \cite{ye2024online} extend this framework with general preference models and derive convergence rates under online interactions. Dong et al. \cite{dongrlhf} propose an online RLHF workflow that integrates reward modeling with iterative policy updates from streaming human feedback. More recent work studies online direct alignment without an explicit reward model, including iterative DPO variants and online preference optimization under distribution shift (e.g., \cite{ye2024online, xiong2024iterative}). However, these studies focus on preference feedback from a single worker and assume that the worker is always truthful, which does not capture the strategic misreport that arises in mobile crowdsourcing where heterogeneous selfish workers compete for long-term influence or payment.

\textit{Monetary mechanism design for truthful preference elicitation.} Another line of recent work proposes monetary mechanism design to elicit truthful preferences from strategic workers in LLM fine-tuning. Sun et al. \cite{sun2024mechanism} design payment mechanisms for fine-tuning with multiple reward models, ensuring incentive compatibility through monetary transfers. Soumalias et al. \cite{soumalias2024truthful} propose truthful aggregation mechanisms for LLMs in online advertising, where workers' valuations are elicited through auction-style payments. Park et al. \cite{park2024rlhf} study heterogeneous feedback aggregation under personalization with monetary incentives. Dubey et al. \cite{dubey2024auctions} further develop auction mechanisms with LLM-generated summaries, and Xu et al. \cite{xu2023epvisa} design auction mechanisms for real-time physical-virtual synchronization in human-centric metaverse settings. However, such payment-based mechanisms largely focus on one-shot or offline preference elicitation and do not consider online interactions over time, where workers have more room to strategically misreport across iterations and shape long-term outcomes. In contrast, our work focuses on non-monetary mechanism design under repeated interactions, where the platform incentivizes truthful feedback through dynamic weight adjustment rather than monetary transfers, which is more practical in mobile crowdsourcing where per-query monetary settlement to mobile users is costly to implement.

\textit{Non-monetary mechanism design in algorithmic game theory.} In the algorithmic game theory literature, the median scheme on a one-dimensional space is known to be group-strategyproof when agents have single-peaked preferences, dating back to the classical median voter result of Moulin \cite{moulin1980strategy}. This foundational result has motivated a substantial body of work on truthful mechanisms without money, particularly in facility location games (e.g., \cite{asadi2022collaborative, chen2024mechanism, li2024strategyproof}), where each customer can strategically misreport his location to bias the placement toward his own preference, and the median scheme is widely used to elicit truthful reports (e.g., \cite{conitzer2024social, wang2024positive}). Recent extensions consider obnoxious facility location \cite{li2024strategyproof} and group-fair variants under intra-group externalities \cite{wang2024positive}. A parallel line of work on peer prediction (e.g., \cite{prelec2004bayesian, miller2005eliciting}) and recent extensions to online and information-elicitation-without-verification settings (e.g., \cite{liu2017machine, witkowski2022incentive}) further studies non-monetary truthful elicitation when ground truth is unknown or delayed. However, these mechanisms are designed for static or one-shot decisions and do not address the online learning aspect with vanishing-regret guarantees. Moreover, we prove later in Section~\ref{S3.2} that the median scheme incurs a non-vanishing regret in our online mobile crowdsourcing setting, because it fails to give full credit to the most accurate worker even when all workers report truthfully.

\textit{Adaptive weighting schemes in crowdsourcing and online learning.} In the crowdsourcing and online learning literature, several adaptive weighting schemes have been proposed to dynamically aggregate feedback from multiple workers. The Dawid-Skene model \cite{dawid1979maximum} is the foundational truth-inference framework, and its EM-based extensions (e.g., \cite{perez2021online, perez2018adaptive}) treat the true outcome as a hidden variable and iteratively estimate worker reliability via Expectation-Maximization. Hedge-style online learning (e.g., \cite{fanuniversal, zhou2023efficient}) updates worker weights via exponential decay on observed losses and provides $\mathcal{O}(\sqrt{T})$ regret under truthful reporting. EXP3 and its variants (e.g., \cite{kash2024slowly, khodak2023meta}) further extend these ideas to the partial-feedback bandit setting using inverse propensity scoring. More recent work studies strategic crowdsourcing where workers may misreport to game the aggregation, including truthful peer grading \cite{shah2016double}, no-regret incentive-compatible online learning \cite{freeman2020no}, and incentive-aware federated bandits \cite{wei_truthfedban_2024}. However, existing strategic-aware schemes either rely on monetary payments, focus on offline or one-shot elicitation, or do not provide vanishing-regret guarantees under online multi-worker interactions with verifiable ground truth. We later prove in Sections~\ref{S3} and \ref{S5.3} that none of the adaptive schemes above guarantees both truthfulness and vanishing regret under selfish workers in our setting, motivating our new mechanism design.

\begin{figure}
    \centering
    \includegraphics[width=\linewidth]{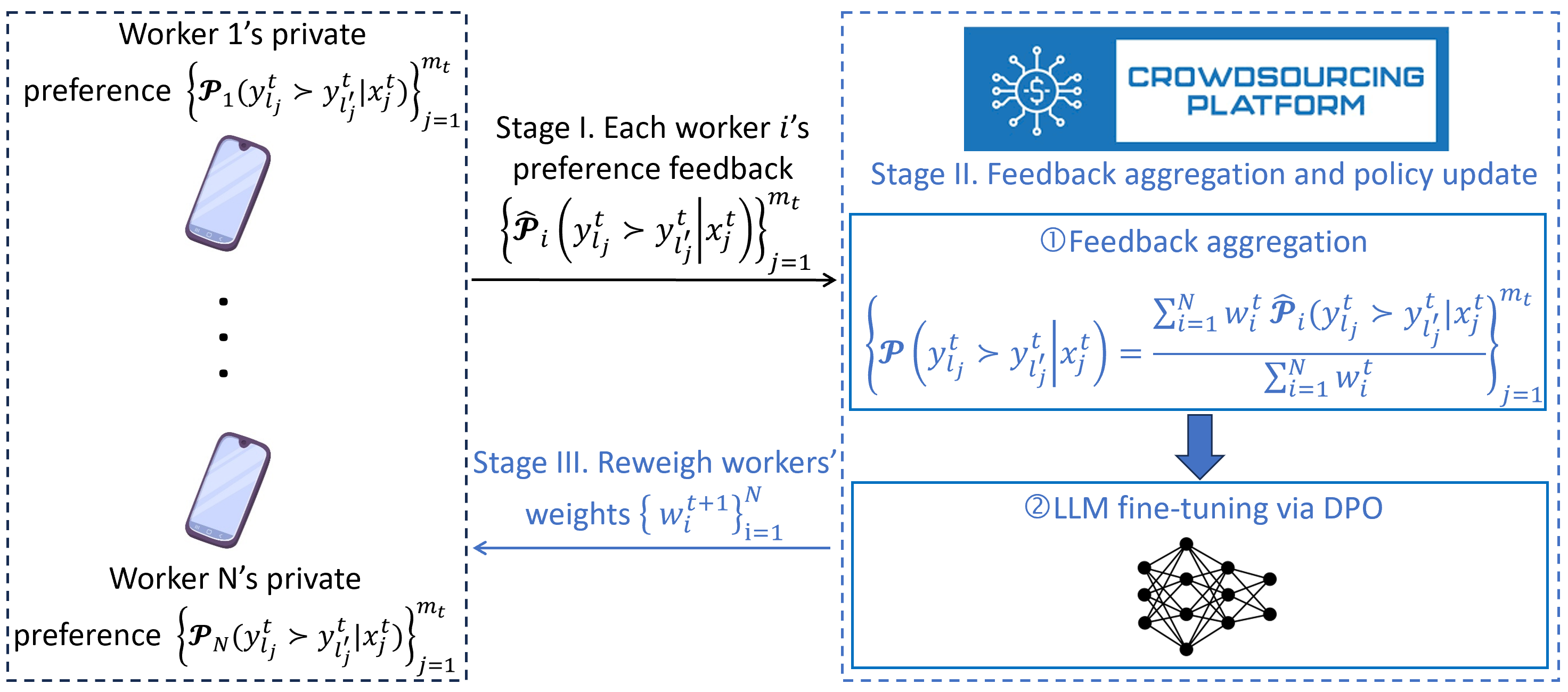}
    \caption{System model of LLM fine-tuning for mobile crowdsourcing. During each time slot $t \in [T]$, in Stage I, each mobile worker (e.g., mobile user) first reports his preference on the pairwise responses of each prompt (e.g., a traffic-state query in navigation or a band-availability query in spectrum sensing). In Stage II, the crowdsourcing platform (e.g., Waze) aggregates workers' feedback for fine-tuning the LLM and updates its policy. In Stage III, the platform adjusts each worker $i$'s weight $w_i^{t+1}$ according to his feedback accuracy for the next time slot $t+1$'s iteration.}
    \label{Fig1}
\end{figure}

\section{System Model and Problem Formulation}  \label{S2}

In Section~\ref{S2.1}, we introduce our system model. In Section~\ref{S2.2}, we formulate a new dynamic Bayesian game and give desired properties for guiding our late mechanism design.

\subsection{System Model of Online Mobile Crowdsourcing Based on LLM Fine-Tuning}\label{S2.1}

{LLM fine-tuning is increasingly used to enhance mobile applications by producing more accurate and verifiable outputs (e.g., \cite{karim2025large}). For example, navigation applications require route explanations that correctly reflect real-time traffic conditions (e.g., \cite{alsheeb2023towards}), while mobile spectrum-sensing systems rely on channel-usage interpretations that must align with the true occupancy state of the spectrum (e.g., \cite{aygul2025machine}). However, these mobile applications operate in highly dynamic environments, where traffic patterns and routing contexts can change minute by minute, and spectrum conditions and interference levels vary rapidly due to user mobility and network fluctuations. As a result, user preferences and feedback distributions evolve over time, causing offline fine-tuning to become quickly outdated. This dynamic nature necessitates online or iterative LLM fine-tuning, where the platform continuously incorporates newly collected mobile feedback to maintain reliable system performance under varying real-world conditions.}

Based on the above facts, we consider LLM fine-tuning for mobile crowdsourcing in which the crowdsourcing platform (e.g., Waze) receives preference feedback from $N \ge 2$ mobile workers (e.g., mobile users) over $T$ time slots (e.g., a weekly update cycle \cite{bai2022training, touvron2023llama, xiong2024iterative}). Each time slot $t \in [T]$ consists of the following three stages (as in Fig.~\ref{Fig1}).

\textit{1) Stage I. Online Worker Feedback}: 
The platform draws $m_t$ prompts $\{x_j^{t}\}_{j=1}^{m_t}$ from the context space $\mathcal{X}$ and a pair of candidate responses $\{(y_{l_j}^{t}, y_{l_j'}^{t} \mid x_j^{t})\}_{j=1}^{m_t}$ for each prompt from the response space $\mathcal{Y}$ (e.g., \cite{xiong2024iterative}). 
Each prompt corresponds to a task with a ground-truth system state. For navigation service, a prompt can be ``Does live traffic data indicate congestion on Route A between 5{:}20 PM and 5{:}40 PM?", where $y_{l_j}^{t}$ and $y_{l_j'}^{t}$ represent two alternative route explanations 
(e.g., one stating that congestion is present and another stating that conditions are clear).
For spectrum sensing, a prompt can be ``Do the measured signals indicate that the 3.5 GHz channel at (40.7, -74.0) is idle at time $t$?”, where $y_{l_j}^{t}$ and $y_{l_j'}^{t}$ represent two candidate channel-occupancy interpretations 
(e.g., one claiming the channel is idle and another claiming it is busy).
Such prompts have a correct answer determined by real-world conditions, so each pair of responses has a binary ground-truth system state.

The platform then shares $\{(x_j^t, y_{l_j}^{t}, y_{l_j'}^{t})\}_{j=1}^{m_t}$ with $N$ workers and collects their preference feedback. 
Each worker $i \in [N]$ forms a continuous private preference $\mathcal{P}_i(y_{l_j}^{t} \succ y_{l_j'}^{t} \mid x_j^t) \in [0,1],$ based on his own local observations (e.g., \cite{soumalias2024truthful,rafailov2024direct}). 
In navigation tasks, this belief may come from the worker's real-time mobile traffic view or recent travel experience. 
In spectrum-sensing tasks, it may come from locally measured signal strength or device-level sensing results. 
Worker $i$ holds the belief of the ground-truth system state as $p_j^t \sim \mathrm{Bernoulli}(\mathcal{P}_i(y_{l_j}^{t} \succ y_{l_j'}^{t} \mid x_j^t)),$
where realization $p_j^t = 1$ means response $y_{l_j}^{t}$ is preferred over $y_{l_j'}^{t}$ and $p_j^t = 0$ otherwise. {Note that such location-specific sensing context is distinctive to mobile crowdsourcing, where worker heterogeneity stems from spatial and temporal locality that the platform cannot observe directly. This motivates our Bayesian game formulation in Section~III-B.}

Aiming to increase his long-term influence or payment, each worker $i$ may report a continuous value $\hat{\mathcal{P}}_i(y_{l_j}^{t} \succ y_{l_j'}^{t} \mid x_j^t) \in [0,1]$
that differs from his true preference $\mathcal{P}_i(y_{l_j}^{t} \succ y_{l_j'}^{t} \mid x_j^t)$
(e.g., \cite{sun2024mechanism}, \cite{soumalias2024truthful}). 
 The platform and other workers are uncertain of his $\mathcal{P}_i(y_{l_j}^{t} \succ y_{l_j'}^{t}|x_j^t)$ realization.

\textit{2) Stage II. Online Feedback Aggregation and Policy Update}: After receiving each worker $i$'s reported preference values 
$\{\hat{\mathcal{P}}_i(y_{l_j}^{t}$$\succ$$ y_{l_j'}^{t}$$\mid$$x_j^t)\}_{j=1}^{m_t}$ for $i \in [N]$, 
the platform aggregates them using the current weight $w_i^{t}$ for each prompt $j \in [m_t]$:
\begin{align}\label{eq0}
    \mathcal{P}(y_{l_j}^{t} \succ y_{l_j'}^{t} \mid x_j^t)
= \frac{\sum_{i=1}^N w_i^{t} \hat{\mathcal{P}}_i(y_{l_j}^{t} \succ y_{l_j'}^{t} \mid x_j^t)}
{\sum_{i'=1}^N w_{i'}^{t}}.
\end{align}
All workers begin with a uniform weight $w_i^1 = 1$, which corresponds to the standard LLM fine-tuning practice of treating early feedback equally  
(e.g., \cite{rafailov2024direct,christiano2017deep,ouyang2022training}).

{The resulting aggregated preferences 
$\{\mathcal{P}(y_{l_j}^{t} \succ y_{l_j'}^{t} \mid x_j^t)\}_{j=1}^{m_t}$ 
directly form the human-feedback dataset 
$\mathcal{D}_t = \{(x_j^t, y_{l_j}^t, y_{l_j'}^t, \mathcal{P}(y_{l_j}^t \succ y_{l_j'}^t \mid x_j^t))\}_{j=1}^{m_t}$ 
for time slot $t$. 
Using this dataset, the platform updates the LLM policy via direct preference optimization (DPO), which solves the KL-regularized objective against a reference policy $\pi_{\texttt{ref}}$ as follows  \cite{xiong2024iterative,rafailov2024direct}:
\begin{align}\label{eq-dpo}
    \min_{\pi_t} -\mathbb{E}_{\mathcal{D}_t} \ln \sigma\bigg(\beta \ln \frac{\pi_t(y|x)}{\pi_{\texttt{ref}}(y|x)} - \beta \ln \frac{\pi_t(y'|x)}{\pi_{\texttt{ref}}(y'|x)}\bigg),
\end{align}
where $\sigma(\cdot)$ denotes the logistic function and $\beta$ is a parameter controlling the deviation from $\pi_{\texttt{ref}}$. DPO is a reward-model-free method that optimizes the policy directly from preference data without learning an explicit reward model. Our aggregation mechanism therefore influences the policy optimization step by determining the preference labels in $\mathcal{D}_t$, while leaving the underlying DPO update procedure unchanged.} {To reduce the platform-side fine-tuning cost, we adopt parameter-efficient LoRA fine-tuning (e.g., \cite{hu2022lora}), which keeps the base LLM weights frozen and only updates low-rank adapter matrices inserted into the attention projection layers.}

\textit{3) Stage III. Reweighing Workers}: After deploying the updated policy, the platform verifies the ground-truth system state $p_j^t \in \{0, 1\}$ for each prompt $j \in [m_t]$.\footnote{{Verification becomes available only at the 
end of each update cycle, so the delay does not block the online 
learning loop. Our mechanism also degrades gracefully under bounded 
noisy verification of the ground-truth system state, as formalized by Proposition~\ref{prop-robust} in Section~\ref{S4}.}} {In navigation applications, the platform verifies traffic conditions using authoritative infrastructure-side data that becomes available only after a delay, such as the California DOT's Performance Measurement System (PeMS) \cite{chen2001freeway}, which collects flow and occupancy measurements from nearly $40{,}000$ physical induction loop detectors embedded in the pavement. PeMS data are operated by the state DOT and released to third-party platforms only through delayed feeds, so they serve only as a post-hoc verification signal for worker reweighing rather than as a real-time substitute for worker reports. In spectrum-sensing applications, the fusion center likewise cannot observe primary-user activity in real time but can verify it afterwards by decoding ACK/NACK packets \cite{li2013sharp, liu2010indexability, zhao2005decentralized} or accessing post-transmission spectrum access system databases \cite{fcc_cbrs_2015, marshall2017three, winnf_sas_2017}.}
{Note that such infrastructure-side verification is distinctive to mobile crowdsourcing and unavailable in generic crowdsourcing tasks (e.g., MTurk). Since it arrives only after a delay, it serves as a post-hoc signal for worker reweighing in \eqref{e4} rather than a real-time substitute for worker reports.}

The platform then updates each worker $i$'s weight or payment $w_i^{t+1}$ for the next time slot:
\begin{align}\label{eq}
    w_i^{t+1}=f_i(\{\{\mathcal{\hat{P}}_i(y_{l_j}^{t} \succ y_{l_j'}^{t}|x_j^t)\}_{j=1}^{m_t}\}_{i=1}^N, \{p_j^t\}_{j=1}^{m_t})
\end{align}
according to his reported feedback $\{\{\mathcal{\hat{P}}_i(y_{l_j}^{t} $$\succ$$ y_{l_j'}^{t}|x_j^t)\}_{j=1}^{m_t}\}_{i=1}^N$ and  ground-truth system states $\{p_j^t\}_{j=1}^{m_t}$.

By strategically manipulating his reported preference 
$\hat{\mathcal{P}}_i(y_{l_j}^{t} \succ y_{l_j'}^{t} \mid x_j^t)$ (e.g., \cite{sun2024mechanism}, \cite{soumalias2024truthful}), each worker $i$ aims to maximize his long-term influence or payment from the platform, measured as his expected cumulative weight over $T$ time slots (e.g., \cite{zhang2012reputation},\cite{xie2016design}):
\begin{align}\label{e2}
&u_i(\{\{\mathcal{\hat{P}}_i(y_{l_j}^{t} \succ y_{l_j'}^{t}|x_j^t)\}_{j=1}^{m_t}\}_{t=1}^{T}) :=  \\ 
&\mathbb{E}\!\left[\sum_{t=1}^{T} w_i^{t}\bigg(\!\!\{\{\mathcal{\hat{P}}_i(y_{l_j}^{t-1} \!\!\succ\!\! y_{l_j'}^{t-1}|x_j^{t-1})\}_{j=1}^{m_{t-1}}\}_{i=1}^N,\! \{p_j^{t-1}\}_{j=1}^{m_{t-1}}\!\!\bigg)\right]. \nonumber
\end{align}
Here, the expectation is taken over the ground-truth system states $p_j^t$. We adopt the natural utility of expected cumulative weight.\footnote{Our mechanisms remain truthful when \eqref{e2} uses any increasing $g_i(\cdot)$ of expected cumulative weight. Extending to path- or distribution-dependent utilities (e.g., $\mathbb{E}[\sum_t g_i(w_i^t)]$) is however inherently difficult because workers' reports affect both current outcomes and the next weight $w_i^{t+1}$, thereby changing future selection probabilities and the distribution of the entire weight trajectory. Truthfulness would then require ruling out deviations that profit by reshaping this trajectory/distribution, which is challenging in sequential stochastic mechanisms.} 

On the other hand, the platform's alignment loss based on its aggregation over $T$ time slots is given as follows:
\begin{align*}
   L= \sum_{t=1}^T\frac{1}{m_t}\sum_{j=1}^{m_t}\bigg(\sum_{i=1}^N  
\frac{w_i^{t}\mathcal{\hat{P}}_i(y_{l_j}^{t} \succ y_{l_j'}^{t}|x_j^t)}{\sum_{i'=1}^N w_{i'}^{t}} - p_{j}^t\bigg)^2,
\end{align*} 
which is defined as the mean square error (MSE) between the platform's weighted aggregation in \eqref{eq0} and the realized binary outcome. It wants to improve the feedback accuracy in the aggregation by assigning the largest weight to the most accurate worker at time $t$ and such assignment will change over time. As each worker gives feedback over $T$ time slots, the ideal choice for the platform is to commit to the worker $i^*$ incurring the least average feedback loss over $T$ time slots:
\begin{align*}
    i^* = \arg\min_{i \in [N]} \sum_{t=1}^T\frac{1}{m_t}\sum_{j=1}^{m_t}\big(\mathcal{P}_i(y_{l_j}^{t} \succ y_{l_j'}^{t}|x_j^t) - p_{j}^t\big)^2.
\end{align*}
However, the best worker is unknown in the online iteration. The platform then turns to reducing the feedback accuracy regret between online weighted aggregation and offline choice of the best worker in hindsight as follows:
\begin{align}
    R(T) :=& \sum_{t=1}^T\frac{1}{m_t}\sum_{j=1}^{m_t}\bigg(\sum_{i=1}^N  
\frac{w_i^{t}\mathcal{\hat{P}}_i(y_{l_j}^{t} \succ y_{l_j'}^{t}|x_j^t)}{\sum_{i'=1}^N w_{i'}^{t}} - p_{j}^t\bigg)^2 \nonumber \\
    &- \min_{i \in [N]} \sum_{t=1}^T\frac{1}{m_t}\sum_{j=1}^{m_t}\big(\mathcal{P}_i(y_{l_j}^{t} \succ y_{l_j'}^{t}|x_j^t) - p_{j}^t\big)^2. \label{e3}
\end{align}

{Although our LLM fine-tuning for mobile crowdsourcing and federated learning (FL) both involve distributed edge participants, they address fundamentally different problems and therefore require different mechanism-design objectives. FL needs to use all participants' data to train a global model and view all data authentic and useful (e.g., \cite{wei_truthfedban_2024, wu_fedab_2023, zhao_truthful_fl_2023}). In contrast, our objective is to identify the best worker with the most accurate feedback over time to assign him largest weight and discard those inaccurate ones with vanishing weights.}

Note that the worker's utility in \eqref{e2} may not align with the platform's objective in \eqref{e3}, leading to untruthful feedback for a large weight (e.g., \cite{freeman2020no}). For example, suppose that prompt number $m_t=1$ and time slot number $T=1$. The platform updates $w_i^{t+1}=1$ if $|\mathcal{\hat{P}}_i^t-p_j^t|\leq 0.2$, $w_i^{t+1}=0.5$ if $|\mathcal{\hat{P}}_i^t-p_j^t|\in(0.2, 0.5]$, and $w_i^{t+1}=0$ otherwise. A worker $i$ holding $\mathcal{P}_i^t=0.6$ obtains an expected weight of $0.3$ in total by truthfully reporting. However, he can obtain an expected weight of $0.6$ by misreporting any $\mathcal{\hat{P}}_i^t \geq 0.8$, increased from being honest.  Therefore, it is crucial for the platform to properly design the weight update function for any worker's truthful feedback and a small regret.

{\textit{Remark (Communication and computation 
overhead).} LLM fine-tuning is performed centrally at the platform; 
workers only upload scalar preference reports 
$\{\mathcal{\hat{P}}_i(y_{l_j}^{t} \succ y_{l_j'}^{t}|x_j^t)\}_{j=1}^{m_t}$ 
per slot. In typical mobile crowdsourcing settings, $m_t$ and $N$ 
are on the order of tens to a hundred. For $N = 50$ and $m_t = 20$, 
each worker's per-slot uplink is $4 m_t = 80$~bytes, totaling 
$\sim 4$~KB at the platform; no model parameters, adapters, or 
gradients are transmitted. The per-slot aggregation in \eqref{eq0} 
and weight update in \eqref{e4} introduced by our mechanism cost 
$(2N+1) m_t + 2N m_t + N \approx 4000$ floating-point operations. 
The dominant cost is the platform-side LoRA fine-tuning, which is 
independent of our mechanism and trains only a small fraction of the 
full LLM (e.g., $0.24\%$ for GPT-2 124M), keeping GPT-2 fine-tuning tractable on a single NVIDIA A100 GPU. Workers perform no on-device LLM computation.}

\subsection{Dynamic Bayesian Game Formulation for RLHF}\label{S2.2}

Based on our system model above,  we formulate the multi-agent online learning between the crowdsourcing platform and $N$ strategic workers as a new dynamic Bayesian game:
\begin{itemize}
    \item In Stage I of each time slot $t$$\in$$[T]$, each worker $i$ with its private preference $\{\mathcal{P}_i(y_{l_j}^{t}$$\succ$$y_{l_j'}^{t}|x_j^t)\}_{j=1}^{m_t}$ determines his feedback $\{\mathcal{\hat{P}}_i(y_{l_j}^{t}$$\succ$$y_{l_j'}^{t}|x_j^t)\}_{j=1}^{m_t}$ (may not be the truth) to maximize his utility in \eqref{e2}.
    \item In Stage III of each time slot $t$$\in$$[T]$, the platform updates each worker's weight $w_i^{t+1}$=$f_i(\{\{\mathcal{\hat{P}}_i(y_{l_j}^{t} \succ y_{l_j'}^{t}|x_j^t)\}_{j=1}^{m_t}\}_{i=1}^N, \{p_j^t\}_{j=1}^{m_t})$ for reducing regret in \eqref{e3}.
\end{itemize}
 
Note that there is no strategic decision for any worker or the platform
in Stage II. We need to carefully design an online aggregation mechanism for ensuring each worker's truthful feedback and a vanishing regret over time. We define the desired properties as below.

\begin{definition}[Truthfulness of Worker Feedback]\label{def1}
An online weighted aggregation mechanism $\mathcal{M}$ is truthful if each worker $i$$\in$$[N]$ obtains a larger long-term influence or payment in \eqref{e2} over the whole $T$ time slots through truthful feedback instead of misreporting in the meantime, i.e., 
\begin{align*}
    \mathbb{E} \bigg[\sum_{t=1}^{T} w_{i}^{t} \big(&\{\mathcal{P}_i(y_{l_j}^{t-1} \succ y_{l_j'}^{t-1}|x_j^{t-1})\}_{j=1}^{m_{t-1}}, \{p_{j}^{t-1}\}_{j=1}^{m_{t-1}}, \\
    &\{\{\mathcal{\hat{P}}_k(y_{l_j}^{t-1} \succ y_{l_j'}^{t-1}|x_j^{t-1})\}_{j=1}^{m_{t-1}}\}_{k=1, k\ne i}^N\big)\bigg] \\
    \geq \mathbb{E} \bigg[\sum_{t=1}^{T} w_{i}^{t} \big(&\{\mathcal{\hat{P}}_i(y_{l_j}^{t-1} \succ y_{l_j'}^{t-1}|x_j^{t-1})\}_{j=1}^{m_{t-1}}, \{p_{j}^{t-1}\}_{j=1}^{m_{t-1}}, \\
   & \{\{\mathcal{\hat{P}}_k(y_{l_j}^{t-1} \succ y_{l_j'}^{t-1}|x_j^{t-1})\}_{j=1}^{m_{t-1}}\}_{k=1, k\ne i}^N\big)\bigg].
\end{align*}
\end{definition}

\begin{definition}[High Efficiency in Sublinear Regret $R(T)$ in \eqref{e3}]\label{def2}
    An online weighted aggregation mechanism $\mathcal{M}$ is efficient if its time-average regret $R_\mathcal{M}(T)/T$ is vanishing in the time slot number $T$, i.e., $\lim_{T\to\infty} 
\frac{R_\mathcal{M}(T)}{T} = 0.$
\end{definition}

\section{Benchmark Schemes}\label{S3}

In this section, we analyze three recent adaptive weighting schemes from the crowdsourcing and algorithmic game theory literature, serving as fair benchmarks to compare against later.

\subsection{Benchmark 1: EM-based Weight Estimation Scheme}

In the crowdsourcing literature, EM-based weight estimation (e.g., \cite{perez2021online, perez2018adaptive}) treats the true outcome $p_j^t$ as a hidden binary variable and iteratively estimates both the worker weights $w_i^t$ and the most likely outcome via Expectation-Maximization. We consider EM instantiations commonly used in crowdsourcing truth inference (e.g., Gaussian or Dawid--Skene--type models), where the E-step yields a monotone, majority-consistent estimate of the latent variable and the M-step rewards proximity to that estimate. Unfortunately, this scheme is not truthful and incurs a non-vanishing regret.

\begin{lemma}\label{L0}
   The benchmark 1 of EM-based weight estimation scheme is not truthful and incurs a regret in \eqref{e3} as $R_1(T)=\mathcal{O}(T)$, leading to a non-vanishing time-average regret $\lim_{T\to\infty}\frac{R_1(T)}{T}> 0$. 
\end{lemma}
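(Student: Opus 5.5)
Both parts of Lemma~\ref{L0} follow from explicit counterexample instances, one for truthfulness and one for regret. We fix a generic EM instantiation with the two structural properties stated before the lemma. First, the E-step produces an estimate $\tilde p_j^t\in[0,1]$ of the hidden outcome that is monotone in the reports and majority-consistent: if a weighted majority of reports lies above $1/2$, then $\tilde p_j^t$ is pushed toward $1$, and conversely. Second, the M-step sets $w_i^{t+1}$ as a decreasing function of the distance $|\hat{\mathcal{P}}_i - \tilde p_j^t|$ (for example Gaussian $w_i^{t+1}\propto 1/(\hat\sigma_i^2)$ with $\hat\sigma_i^2$ the squared deviation from $\tilde p$). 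The key feature is that the weight update never uses the verified $p_j^t$, only the consensus $\tilde p$.

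\emph{Non-truthfulness.} We take $m_t=1$, $T=2$ and $N=3$ workers. Two workers report a common value close to $1$, which makes the E-step estimate $\tilde p^1\approx 1$ regardless of the third worker's report. Let worker $3$ hold the true preference $\mathcal{P}_3=0.6$. Reporting truthfully gives deviation $0.4$ from $\tilde p^1$. Misreporting $\hat{\mathcal{P}}_3=1$ instead matches the consensus and gives deviation $\approx 0$. Because $w_3^2$ is decreasing in this deviation, the misreport strictly raises $w_3^2$. This holds in expectation over $p^1$, since $p^1$ does not enter the weight update, so the inequality of Definition~\ref{def1} fails. More generally, we will argue that the best response is to herd toward the predicted consensus.

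\emph{Linear regret.} We construct a stationary instance that is repeated every slot. There are $N=3$ workers and $m_t=1$. Workers $1$ and $2$ are inaccurate and always report the same wrong value, say $\mathcal{P}_1^t=\mathcal{P}_2^t=1-p^t$ (or $0.9$ when $p^t=0$). Worker $3$ is perfectly accurate, with $\mathcal{P}_3^t=p^t$, so $i^*=3$ and the benchmark loss is $0$. We then prove by induction on $t$ that $w_1^t=w_2^t\ge w_3^t$. The base case is the initialization $w_i^1=1$. For the inductive step, the majority-consistent E-step returns $\tilde p^t$ on the side of workers $1$ and $2$, and the M-step then assigns them weight at least that of worker $3$, which preserves the ordering. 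Consequently the aggregate in \eqref{eq0} places at least $2/3$ of its mass on the wrong answer. The per-slot loss is therefore at least $(2/3)^2\cdot c$ for a constant $c>0$, giving $R_1(T)\ge cT\cdot 4/9$. Combined with the trivial upper bound $R_1(T)\le T$ (all terms lie in $[0,1]$), this yields $R_1(T)=\Theta(T)$ and $\lim_{T\to\infty} R_1(T)/T>0$. This instance uses truthful reports, which only strengthens the conclusion: failure occurs even without strategic behavior.

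The main obstacle is making the induction rigorous for a generic EM instantiation rather than a specific formula. The risk is that some E-step could weight worker $3$ enough to flip $\tilde p$ after several rounds. To handle this, we first use exact agreement of workers $1$ and $2$, so that their combined weight always strictly exceeds worker $3$'s. We then invoke monotonicity to conclude that $\tilde p^t$ stays within $1/2$ of their report. If this turns out to be delicate, a fallback is to increase the number of colluding inaccurate workers to $N-1$ for large $N$, which makes majority-consistency immediate.
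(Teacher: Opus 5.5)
Your proposal is correct. It follows the same skeleton as the paper: a herding counterexample for truthfulness, and a perfectly accurate minority worker against a wrong colluding majority for regret. Both halves, however, are instantiated differently.

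\emph{Non-truthfulness.} The paper fixes a binary Dawid--Skene model with uniform reliability $w_0\in(\frac12,1)$, a Beta prior and a posterior-mean M-step. It takes ``truthful'' to mean sampling $\hat y_k^1\sim\mathrm{Bernoulli}(q)$, and computes the exact gain $(1-q)(\gamma_0+\gamma_1-1)/(\alpha+\beta+1)>0$ from always reporting $1$ when the other $N-1$ workers report $1$. You instead use continuous reports and a distance-to-consensus M-step, so the gain is deterministic because $p^1$ never enters the update. This matches the continuous-report model of Section~\ref{S2} better than the paper's randomized binary report. To turn it into a proof, you need two fixes:
\begin{enumerate}
\item Commit to one concrete E-step rather than the loose ``$\tilde p^1\approx 1$ regardless of the third report''. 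A precision-weighted mean works: the unanimous profile gives $\tilde p^1=1$ exactly, while the truthful profile $(1,1,0.6)$ leaves worker~3 at strictly positive distance.
\item Regularize the $1/\hat\sigma_i^2$ rule so that zero deviation does not produce an infinite weight.
\end{enumerate}

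\emph{Regret.} The paper lets $N-1$ workers report $1-p_j^t$. It then appeals to the multiplicity of EM stationary points to assert that some EM trajectory locks onto the majority, so that the aggregate equals $1-p_j^t$ and the loss is exactly $T$. Your induction $w_1^t=w_2^t\ge w_3^t$ instead gives a per-slot loss of at least $4/9$ (with reports exactly $1-p^t$) directly from \eqref{eq0}. It does not need the accurate worker's weight to vanish, does not rely on any particular EM trajectory, and works even under truthful reports. This is more rigorous than the paper's argument. Your fallback of $N-1$ colluders is exactly the paper's construction.
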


The proof is given in Appendix~A of the supplementary material. Since EM relies on statistical consistency of worker reports, strategic workers can inflate their apparent reliability by aligning with the majority, earning disproportionate weights without accurate feedback.

\subsection{Benchmark 2: Hedge Scheme}

The Hedge scheme (e.g., \cite{fanuniversal, zhou2023efficient}) updates each worker $i$'s weight by exponential decay on the squared feedback loss:
\begin{align*}
    w_i^{t+1} = w_i^t \cdot e^{-\eta \cdot \frac{1}{m_t} \sum_{j=1}^{m_t}  (\mathcal{\hat{P}}_i(y_{l_j}^{t} \succ y_{l_j'}^{t}|x_j^t) - p_j^t)^2}, 
\end{align*}
for $i \in [N]$ and $t \in [T]$, where $\eta$ is the learning rate. Unfortunately, Hedge is not truthful as shown below.

\begin{lemma}\label{L1}
   Benchmark 2 of Hedge scheme is not truthful. 
\end{lemma}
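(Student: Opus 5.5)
To show that Hedge is not truthful, it suffices to exhibit one instance in which some worker strictly gains in \eqref{e2} by misreporting. I will take the simplest setting, $T=2$ and $m_1=1$. The worker's utility is then $w_i^1+\mathbb{E}[w_i^2]=1+\mathbb{E}[w_i^2]$. Under Hedge, $w_i^2$ depends only on worker $i$'s own report, so the other workers' reports play no role.

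Write $q=\mathcal{P}_i(y_{l_1}^{1}\succ y_{l_1'}^{1}\mid x_1^1)$ for the worker's private belief. The ground truth satisfies $p_1^1\sim\mathrm{Bernoulli}(q)$ from his perspective. Reporting $r$ gives
\begin{align*}
g(r):=\mathbb{E}[w_i^2]=q\,e^{-\eta(1-r)^2}+(1-q)\,e^{-\eta r^2}.
\end{align*}
Truthfulness would require $r=q$ to maximize $g$ over $[0,1]$ for every $q$. My plan is to show this fails.

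The first route is a first-order check. Differentiating,
\begin{align*}
g'(r)=2\eta\big[q(1-r)e^{-\eta(1-r)^2}-(1-q)\,r\,e^{-\eta r^2}\big].
\end{align*}
At $r=q$ this becomes $2\eta q(1-q)\big[e^{-\eta(1-q)^2}-e^{-\eta q^2}\big]$. This is strictly positive whenever $q\in(1/2,1)$, because then $(1-q)^2<q^2$. So a worker with $q>1/2$ strictly gains by reporting slightly above $q$, and symmetrically a worker with $q<1/2$ gains by reporting slightly below. This is exactly the exaggeration behind the "wet bias."

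A concrete numeric instance is also worth giving, in the same style as the example in Section~\ref{S2.1}. Take $\eta$ large, say $\eta=10$, and $q=0.6$. Truthful reporting yields about $0.6e^{-1.6}+0.4e^{-3.6}\approx 0.132$. Reporting $r=1$ yields $0.6+0.4e^{-10}\approx 0.6$, which is strictly larger.

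The only delicate step is confirming that the sign of $g'(q)$ does not depend on $\eta>0$. It does not, since $e^{-\eta x}$ is strictly decreasing in $x$ for every $\eta>0$. Hence the violation holds for every learning rate, and the instance extends to any $T\ge 2$: the worker behaves truthfully in all other slots and deviates only in slot $1$. Because the Hedge update is multiplicative and later factors do not depend on slot-$1$ reports, this single deviation strictly raises $w_i^t$ for every $t\ge2$ in expectation. Together these steps contradict Definition~\ref{def1}.
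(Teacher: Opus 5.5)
Your proposal is correct and follows essentially the same route as the paper: both compute the expected next-slot weight $q e^{-\eta(1-r)^2}+(1-q)e^{-\eta r^2}$ as a function of the report $r$, and both show its derivative at $r=q$ has the sign of $q-\tfrac12$, so truthful reporting is not even a local maximizer when $q\neq\tfrac12$. The paper treats general $T$ by writing $u_i=1+w_i^2 M_i$ with $\mathbb{E}[M_i]>0$, which is your multiplicative extension and rests on the same implicit independence across slots; your $\eta=10$, $q=0.6$ numeric instance is a valid concrete supplement.
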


The proof is given in Appendix~B of the supplementary material. As a worker with a large private preference can earn a higher weight by exaggerating his belief rather than reporting truthfully, Hedge fails to guarantee his truthful reporting.

\subsection{Benchmark 3: Median Aggregation Scheme}\label{S3.2}
In the algorithmic game theory literature, the ``median" scheme is widely used to incentivize truthful reporting from selfish agents (e.g., \cite{conitzer2024social}, \cite{wang2024positive}).
\begin{definition}[Median Aggregation Scheme]\label{defm} 
The platform first re-organizes workers' feedback $\{\mathcal{\hat{P}}_i(y_{l_j}^{t} \succ y_{l_j'}^{t}|x_j^t)\}_{i=1}^N$ in an increasing order as $\mathcal{\hat{P}}_{k_1,j}^t \leq \cdots \leq \mathcal{\hat{P}}_{k_N,j}^t$ for each prompt $j \in [m_t]$ in each time slot $t \in [T]$. It then chooses the median $\mathcal{\hat{P}}_{k_s,j}^t$ as its aggregation, where the index $s = \frac{N}{2}$ if $N$ is even and $s=\frac{N+1}{2}$ otherwise.
\end{definition}
Yet, this scheme still incurs non-vanishing regret.
\begin{lemma}\label{L1-1}
 The platform's regret in \eqref{e3} under the benchmark 3 of the median scheme is $R_3(T) = \mathcal{O}(T)$, leading to a non-vanishing time-average regret $\lim_{T\to\infty}\frac{R_3(T)}{T}>0$.  
\end{lemma}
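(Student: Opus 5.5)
The plan is to prove Lemma~\ref{L1-1} by an explicit construction. We exhibit one instance (a fixed population of $N$ workers with fixed private preferences) in which the median aggregation stays bounded away from the best worker's accuracy in every time slot. A per-slot gap then sums to a linear regret.

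Since the median scheme is truthful in the single-peaked sense, we may assume every worker reports $\mathcal{\hat{P}}_i=\mathcal{P}_i$. This makes the regret in \eqref{e3} a purely statistical quantity. The lemma only claims that the regret is not sublinear, so it is enough to bound the expected regret below on one instance.

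The construction, for simplicity with $N=3$ (and similarly for general $N$), takes $m_t=1$ and a binary ground truth. Worker $1$ is perfectly informed: $\mathcal{P}_1(y_{l_j}^t\succ y_{l_j'}^t|x_j^t)=p_j^t$, so his cumulative loss is $0$. The remaining workers are uninformative and report constants, for example $\mathcal{P}_i=1/2$ for $i\ge 2$. For general $N$, a majority of $N-1$ workers report $1/2$.

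In this instance the median of the reports is always $1/2$. Each slot therefore contributes $(1/2-p_j^t)^2=1/4$ to the platform's loss. Meanwhile $\min_i\sum_t(\mathcal{P}_i-p_j^t)^2\le 0$, attained by worker $1$. Hence $R_3(T)\ge T/4$, which is $\mathcal{O}(T)$ and satisfies $\lim_{T\to\infty}R_3(T)/T\ge 1/4>0$.

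To make the example more natural, one could instead let worker $1$ report $\mathcal{P}_1=0.9$ when $p=1$ and $0.1$ when $p=0$. The others would report i.i.d.\ noisy beliefs centered at $1/2$. Taking expectations over $p_j^t\sim$ Bernoulli then still gives a constant per-slot gap $c>0$, by linearity of expectation.

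The main obstacle is consistency with the model: workers' beliefs must be valid private preferences with $p_j^t\sim\mathrm{Bernoulli}(\mathcal{P}_i)$ from each worker's own perspective. A perfectly informed worker together with uninformed ones is not jointly coherent under a single Bernoulli draw. I would handle this by treating the ground truth as drawn from the best worker's belief, so that worker $1$'s preference is calibrated and degenerate (or nearly so). The other workers simply hold miscalibrated or uninformative beliefs, which the model allows since beliefs come from heterogeneous local observations.

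Finally, I would note why the median cannot recover in this instance. The median has no memory, so it never shifts weight toward worker $1$, and the constant gap persists in every slot. This is exactly the intuition the paper states: the median fails to give full credit to the most accurate worker even under truthful reporting.
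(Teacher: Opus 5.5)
Your proposal is correct and follows essentially the same route as the paper's proof. A perfectly accurate worker $o$ with $\mathcal{P}_o = p_j^t$ has zero hindsight loss, while the median stays a constant distance from $p_j^t$ in every slot, so the per-slot gap sums to linear regret. Your version is slightly more explicit: the paper simply posits a median squared error $c_j^t\in[\frac{1}{2},1]$, whereas you exhibit reports that realize such a gap (the other workers at $1/2$, so the median is $1/2$ with loss $1/4$ per slot). This holds for every slot when $N\ge 3$; for $N=2$, where the paper's index $s=N/2$ selects the minimum, it holds on the slots with $p_j^t=1$, which still gives linear regret. Your concern about Bernoulli coherence is unnecessary, because \eqref{e3} is evaluated on a realized sequence of states and preferences, so a fixed deterministic instance suffices.
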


The proof is given in Appendix~C of the supplementary material. Even under truthful reporting, the median fails to fully weight the most accurate worker $o$ with $\mathcal{P}_o(y_{l_j}^{t} \succ y_{l_j'}^{t}|x_j^t) = p_j^t$, yielding an $\mathcal{O}(T)$ aggregation loss while the best worker in hindsight incurs zero loss. These non-vanishing regrets across all three benchmarks motivate our truthful mechanism below.

\section{Truthful Online Weighted Aggregation Mechanism Design and Analysis}\label{S4}

As benchmarks 1-3 with untruthful worker feedback fail to identify the most accurate worker over time, we are well motivated to incentivize each worker's truthfulness and dynamically adjust each worker's weight according to his feedback accuracy in each time slot. In Stage III of each time slot, we assign a larger weight (compared to the others) if a worker's prior feedback is closer to the realized binary outcome. We need to carefully design our online mechanism weightage in \eqref{eq} to ensure that each obtains the largest long-term reputation in Definition~\ref{def1} only with truthful feedback. We define it in the following.  
\begin{definition}[Online Weighted Aggregation Mechanism]\label{def3}
At Stage III of each time slot $t \in [T]$, the platform updates each worker's weight $w_i^{t+1}$ in \eqref{eq} based on his feedback $\mathcal{\hat{P}}_i(y_{l_j}^{t} \succ y_{l_j'}^{t}|x_j^t)$ and the realized binary outcome $p_j^t$: 
\begin{align}\label{e4}
    w_{i}^{t+1} = w_{i}^{t} \cdot \bigg(1 - \alpha\frac{1}{m_t}\sum_{j=1}^{m_t} \big(\mathcal{\hat{P}}_i(y_{l_j}^{t} \succ y_{l_j'}^{t}|x_j^t)- p_j^t\big)^2 \bigg),
\end{align}
where $\alpha > 0$ is the step-size parameter to be determined later.\footnote{{Although $\alpha$ is uniform across workers, our mechanism remains responsive to new high-quality workers, as formalized by Proposition~\ref{prop-responsive} in Section~\ref{S4}.}}
\end{definition}

Intuitively, our mechanism determines each worker's weight in time slot $t+1$ based on his  feedback accuracy in the previous time slot $t$. If the squared difference between his feedback and the realized binary outcome $(\mathcal{\hat{P}}_i(y_{l_j}^{t} \succ y_{l_j'}^{t}|x_j^t)- p_j^t)^2$ is small, his weight $w_i^{t+1}$ will be only reduced by a small value from $w_i^{t}$. Though all workers' weights are decreasing over time, we care about the relative weighted aggregation as in \eqref{eq0} and the worker with a small decrement has a large influence in the platform's aggregation.  Our mechanism satisfies the truthful property as shown below.
\begin{proposition}\label{L2}
    Our mechanism in Definition~\ref{def3} is truthful, i.e., $\mathcal{\hat{P}}_i^*(y_{l_j}^{t} \succ y_{l_j'}^{t}|x_j^t) = \mathcal{P}_i(y_{l_j}^{t} \succ y_{l_j'}^{t}|x_j^t)$ for any prompt $j \in [m_t]$, SU $i \in [N]$ and time slot $t \in [T]$.
\end{proposition}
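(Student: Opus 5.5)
The plan is to show that, under the multiplicative update in \eqref{e4}, worker $i$'s expected cumulative weight in \eqref{e2} is maximized by minimizing the expected squared loss in every slot. That per-slot loss is a proper scoring rule whose unique minimizer is the true belief $\mathcal{P}_i$. Throughout, write $\ell_i^t := \frac{1}{m_t}\sum_{j=1}^{m_t}(\hat{\mathcal{P}}_i(y_{l_j}^{t} \succ y_{l_j'}^{t}|x_j^t)-p_j^t)^2$. Unrolling \eqref{e4} gives $w_i^{t}=\prod_{s=1}^{t-1}(1-\alpha \ell_i^s)$. I will assume $\alpha\le 1$ (the choice made later in the paper), so every factor lies in $[0,1]$ because $\ell_i^s\in[0,1]$.

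The first key observation is that $w_i^{t}$ depends only on worker $i$'s own reports and the realized states, and not on the other workers' reports. This is because the update in \eqref{e4} is separable across workers. As a result, the other workers' strategies drop out of Definition~\ref{def1}, and the problem becomes a single-agent optimization.

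Next I use independence across slots. From worker $i$'s viewpoint, the state $p_j^s$ is distributed as $\mathrm{Bernoulli}(\mathcal{P}_i(\cdot|x_j^s))$, independently across slots and prompts. Each report $\hat{\mathcal{P}}_i$ in slot $s$ can be taken as a function of his slot-$s$ private information only. Under these conditions,
\begin{align*}
\mathbb{E}[w_i^{t}]=\prod_{s=1}^{t-1}\big(1-\alpha\,\mathbb{E}[\ell_i^s]\big).
\end{align*}

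Each factor is nonnegative and decreasing in $\mathbb{E}[\ell_i^s]$. Hence every $\mathbb{E}[w_i^t]$ is maximized, simultaneously for all $t$, by minimizing $\mathbb{E}[\ell_i^s]$ separately in each slot, and so is the sum $\sum_{t=1}^T \mathbb{E}[w_i^t]$. The same monotonicity argument also covers any increasing $g_i$ of expected cumulative weight, as the footnote claims.

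The per-slot step is the standard Brier-score computation. For each prompt, write $q=\mathcal{P}_i$ and $r=\hat{\mathcal{P}}_i$. Then
\begin{align*}
\mathbb{E}[(r-p)^2]=q(1-r)^2+(1-q)r^2=(r-q)^2+q(1-q).
\end{align*}
This is uniquely minimized at $r=q$. Since $\ell_i^s$ is an average over prompts, minimizing its expectation forces $\hat{\mathcal{P}}_i=\mathcal{P}_i$ for every prompt $j$. Therefore truthful reporting maximizes \eqref{e2}, and strictly so whenever $\alpha>0$ and the later factors are positive.

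The main obstacle is justifying the factorization of the expectation. A worker could in principle condition later reports on earlier realized states, or on his own earlier weights, which would correlate the factors $1-\alpha\ell_i^s$. I will handle this by backward induction over slots. At slot $s$, conditional on the entire history, the future expected product is some nonnegative continuation value $V_{s+1}$. That value is affected by the slot-$s$ report only through the multiplicative factor $(1-\alpha\ell_i^s)$ applied to it, plus the additive terms $\mathbb{E}[w_i^{t}]$ for $t\le s$, which do not depend on the slot-$s$ report. Because the slot-$s$ state is independent of the history given $\mathcal{P}_i$, conditional expected loss minimization again selects the truthful report. The induction then yields truthfulness in every slot regardless of history.

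Finally, if $\alpha>1$ were allowed, factors could become negative and the monotonicity argument would fail. So the proof should state the requirement $\alpha\in(0,1]$ explicitly.
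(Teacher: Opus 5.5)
Your proposal is correct and follows the same route as the paper. The paper likewise computes the one-step expected weight through the decomposition $\mathbb{E}[(\hat{\mathcal{P}}_i-p_j^t)^2]=(\hat{\mathcal{P}}_i-\mathcal{P}_i)^2+\mathcal{P}_i(1-\mathcal{P}_i)$ and then asserts informally that any deviation lowers the next and all subsequent weights. Your additions make that propagation step rigorous: separability of the update across workers, independence across slots, backward induction over history-dependent strategies, and the requirement $\alpha\in(0,1]$, which the paper's choice $\alpha<\frac{1}{2}$ satisfies.
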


The proof is given in Appendix~D of the supplementary material of this TMC submission. As each worker holds a Bernoulli belief on outcome $p_j^t$, any deviation from truthful feedback leads to a strictly lower weight in any time slot $t$. Thus, no worker has the incentive to misreport and their truthfulness is guaranteed. Further, our mechanism is efficient and incurs a vanishing time-average regret in $T$.
\begin{theorem}\label{Thm1}
    Our online weighted aggregation mechanism in Definition~\ref{def3} incurs a sublinear regret $R_{\mathcal{M}}(T) $=$\mathcal{O}(T^{\frac{1}{2}})$ by choosing the step size $\alpha$ in \eqref{e4} as $$\alpha =\frac{2}{3}\sqrt{\frac{2\ln N}{T}},$$ leading to zero time-average regret with $\lim_{T\to\infty} 
\frac{R_\mathcal{M}(T)}{T} = 0$.
\end{theorem}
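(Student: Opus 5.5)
We follow the standard multiplicative-weights potential argument for the linear update in \eqref{e4}, using Proposition~\ref{L2} to replace every report $\mathcal{\hat{P}}_i$ by the true preference $\mathcal{P}_i$. For brevity write
\[
\ell_i^t:=\frac{1}{m_t}\sum_{j=1}^{m_t}\big(\mathcal{P}_i(y_{l_j}^{t}\succ y_{l_j'}^{t}|x_j^t)-p_j^t\big)^2\in[0,1]
\]
for worker $i$'s per-slot loss. Let $q_i^t=w_i^t/W^t$ with $W^t=\sum_i w_i^t$. The platform's per-slot loss is
\[
\bar\ell^t=\frac{1}{m_t}\sum_{j=1}^{m_t}\Big(\sum_i q_i^t \mathcal{P}_i-p_j^t\Big)^2 .
\]
Since the squared loss is convex in the prediction, Jensen's inequality gives $\bar\ell^t\le \sum_i q_i^t\ell_i^t$. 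It therefore suffices to bound the mixture regret $\sum_t\sum_i q_i^t\ell_i^t-\min_i\sum_t\ell_i^t$.

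\textbf{Potential argument.} We require $\alpha\le 1/2$, which holds for $T$ large enough since $\alpha=\frac23\sqrt{2\ln N/T}$; this keeps all weights positive.

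For the upper bound on the potential, $W^{t+1}=W^t\big(1-\alpha\sum_i q_i^t\ell_i^t\big)$. Using $\ln(1-x)\le -x$ and $W^1=N$,
\[
\ln W^{T+1}\le \ln N-\alpha\sum_t\sum_i q_i^t\ell_i^t .
\]

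For the lower bound, $\ln W^{T+1}\ge \ln w_{i^*}^{T+1}=\sum_t\ln(1-\alpha\ell_{i^*}^t)$. Since $\alpha\ell\le 1/2$, we may use $\ln(1-x)\ge -x-x^2$ on $[0,1/2]$ together with $\ell^2\le\ell\le 1$ to get
\[
\ln W^{T+1}\ge -\alpha L_{i^*}-\alpha^2 T, \qquad L_{i^*}=\sum_t\ell_{i^*}^t\le T .
\]

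Combining the two bounds gives
\[
\sum_t\sum_i q_i^t\ell_i^t - L_{i^*}\le \frac{\ln N}{\alpha}+\alpha T .
\]

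\textbf{Tuning.} Balancing the two terms gives a rate of $\Theta(\sqrt{T\ln N})$. The specific constant $\frac23\sqrt{2\ln N/T}$ presumably comes from a sharper second-order inequality, for instance $-\ln(1-x)\le x+\frac{3}{4}\cdot\frac{x^2}{?}$ on a restricted range, or from a tighter bound on $\alpha^2\sum_t(\ell_{i^*}^t)^2$. Whatever constant $c$ appears in the quadratic term, the choice $\alpha=\Theta(\sqrt{\ln N/T})$ yields $R_{\mathcal{M}}(T)\le C\sqrt{T\ln N}=\mathcal{O}(T^{1/2})$, and dividing by $T$ gives $\lim_{T\to\infty}R_\mathcal{M}(T)/T=0$. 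I would first recover the exact constant by matching the quadratic coefficient of the inequality used to $\frac{9}{8}$-type factors. If that fails, I would simply state the bound with the resulting constant, since the order is unaffected.

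\textbf{Main obstacle.} The delicate step is the lower bound on $\ln(1-\alpha\ell)$. Unlike Hedge, the linear update is not exponential, so a second-order term is needed, and the step size must be small enough that $1-\alpha\ell$ stays bounded away from zero. This forces the condition $\alpha\le 1/2$, i.e.\ $T\ge \frac{32}{9}\ln N$. For smaller $T$ the regret is trivially at most $T=\mathcal{O}(\sqrt{T})$, which I would handle separately.

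The other point to check is that truthfulness (Proposition~\ref{L2}) holds in every slot regardless of the other workers' reports. This is what justifies substituting $\mathcal{P}_i$ for $\mathcal{\hat{P}}_i$ in both the aggregation term and the benchmark term of \eqref{e3}, so that the comparator in the regret is exactly $\min_i L_i$.
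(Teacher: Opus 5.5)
Your proof is correct and is essentially the paper's argument: the same potential $\ln(W^{T+1}/W^1)$, the same best-worker lower bound via $\ln(1-x)\ge -x-x^2$, and the same final use of convexity of the squared loss. The one difference explains the constant you could not recover: the paper bounds each $\ln(W^{t+1}/W^t)$ using $1-\alpha x\le e^{-\alpha x}$ followed by Hoeffding's lemma, which adds a looser (not sharper) $\alpha^2/8$ per slot and gives $\frac{\ln N}{\alpha}+\frac{9}{8}\alpha T$; the choice $\alpha=\frac{2}{3}\sqrt{2\ln N/T}$ balances exactly this to give $3\sqrt{T\ln N/2}$, while your direct step $\ln(1-x)\le -x$ gives $\frac{\ln N}{\alpha}+\alpha T$, which at that same $\alpha$ is already below $3\sqrt{T\ln N/2}$.
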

\textit{Proof.} According to Proposition~\ref{L2}, we have $\mathcal{\hat{P}}_i(y_{l_j}^t \succ y_{l_j'}^t|x_j^t) = \mathcal{P}_i(y_{l_j}^t \succ y_{l_j'}^t|x_j^t)$ for all $j \in [m_t]$, $i \in [N]$ and $t \in [T]$. To derive a lower bound on $\ln\frac{\sum_{i=1}^N w_{i}^{T+1}}{\sum_{i=1}^N w_{i}^1}$, we have
\begin{align}
  &\ln\frac{\sum_{i=1}^N w_{i}^{T+1}}{\sum_{i=1}^N w_{i}^1} \nonumber \\
  =& \ln\bigg( \sum_{i=1}^N w_{i}^{T+1} \bigg) - \ln\bigg( \sum_{i=1}^N w_{i}^1 \bigg) \nonumber \\
  =& \ln \bigg( \sum_{i=1}^N \prod_{t=1}^T ( 1-\alpha\frac{1}{m_t}\sum_{j=1}^{m_t} (\mathcal{P}_i(y_{l_j}^t \succ y_{l_j'}^t|x_j^t)- p_j^t)^2 ) \bigg) \!\!-\!\! \ln N \nonumber \\
  \geq& \ln \bigg( \prod_{t=1}^T ( 1-\alpha\frac{1}{m_t}\sum_{j=1}^{m_t} (\mathcal{P}_{i^*}(y_{l_j}^t \succ y_{l_j'}^t|x_j^t)- p_j^t)^2 )  \bigg) - \ln N \nonumber 
\end{align}
\begin{align}
  =& \sum_{t=1}^T \ln \bigg( 1-\alpha \frac{1}{m_t}\sum_{j=1}^{m_t}(\mathcal{P}_{i^*}(y_{l_j}^t \succ y_{l_j'}^t|x_j^t)- p_j^t)^2   \bigg) - \ln N \nonumber \\
  \geq& -\alpha\sum_{t=1}^T\frac{1}{m_t}\sum_{j=1}^{m_t} (\mathcal{P}_{i^*}(y_{l_j}^t \succ y_{l_j'}^t|x_j^t)- p_j^t)^2 \nonumber \\
  &-\alpha^2\sum_{t=1}^T\bigg(\frac{1}{m_t}\sum_{j=1}^{m_t} (\mathcal{P}_{i^*}(y_{l_j}^t \succ y_{l_j'}^t|x_j^t)- p_j^t)^2\bigg)^2 - \ln N \nonumber \\
  \geq& -\alpha\sum_{t=1}^T\frac{1}{m_t}\sum_{j=1}^{m_t} \bigg(\mathcal{P}_{i^*}(y_{l_j}^t \succ y_{l_j'}^t|x_j^t)- p_j^t\bigg)^2 -\alpha^2 T - \ln N, \label{low'} 
\end{align}
where we choose $\alpha < \frac{1}{2}$ and denote $i^*$ as the best worker in hindsight. The first and the third inequalities hold due to $0 < \alpha < \frac{1}{2}$ and $0 \leq (\mathcal{P}_{i^*}(y_{l_j}^t \succ y_{l_j'}^t|x_j^t)- p_j^t)^2 \leq 1$ for all $i \in [N]$ and $t \in [T]$. The second inequality holds due to $\ln (1 - x) \geq -x -x^2$ for $x \leq \frac{1}{2}$.

To derive an upper bound on $\ln\frac{\sum_{i=1}^N w_{i}^{t+1}}{\sum_{i=1}^N w_{i}^t}$, we have
\begin{align}
  &\ln\frac{\sum_{i=1}^N w_{i}^{t+1}}{\sum_{i=1}^N w_{i}^t} \nonumber \\
  =& \ln \bigg( \frac{\sum_{i=1}^N w_{i}^t \cdot (1-\alpha\frac{1}{m_t}\sum_{j=1}^{m_t} (\mathcal{P}_i(y_{l_j}^t \succ y_{l_j'}^t|x_j^t)- p_j^t)^2)}{\sum_{i'=1}^N w_{i'}^t} \bigg) \nonumber \\
  \leq& \ln \bigg( \frac{\sum_{i=1}^N w_{i}^t \cdot e^{-\alpha\frac{1}{m_t}\sum_{j=1}^{m_t} (\mathcal{P}_i(y_{l_j}^t \succ y_{l_j'}^t|x_j^t)- p_j^t)^2}}{\sum_{i'=1}^N w_{i'}^t} \bigg) \nonumber \\
  \leq& \!\! -\alpha \frac{1}{m_t}\sum_{j=1}^{m_t}\frac{\sum_{i=1}^N w_{i}^t(\mathcal{P}_i(y_{l_j}^t \succ y_{l_j'}^t|x_j^t)- p_j^t)^2}{\sum_{i'=1}^N w_{i'}^t}  + \frac{\alpha^2}{8},  \label{upp'} 
\end{align}
where the first inequality holds due to $1-\alpha x \leq e^{-\alpha x}$ for $0 \leq x \leq 1$ and $\alpha > 0$, the second due to Hoeffding's lemma: for a random variable $X = -\frac{1}{m_t}\sum_{j=1}^{m_t}(\mathcal{P}_i(y_{l_j}^t \succ y_{l_j'}^t|x_j^t)- p_j^t)^2  \in [-1, 0]$ and $\alpha \in R$, we have
\begin{align*}
    \ln (\mathbb{E}[e^{\alpha X}]) \leq \alpha \mathbb{E}[X] + \frac{\alpha^2(1-0)^2}{8}.
\end{align*}
According to \eqref{upp'}, we have
\begin{align}
    &\ln\frac{\sum_{i=1}^N w_{i}^{T+1}}{\sum_{i=1}^N w_{i}^1} \nonumber \\
   =  &\ln\bigg(\frac{\sum_{i=1}^N w_{i}^{T+1}}{\sum_{i=1}^N w_{i}^t} \frac{\sum_{i=1}^N w_{i}^t}{\sum_{i=1}^N w_{i}^{t-1}} \cdot\cdots\cdot\frac{\sum_{i=1}^N w_{i}^2}{\sum_{i=1}^N w_{i}^1}\bigg) \nonumber \\
    = &\sum_{t=1}^T \ln\frac{\sum_{i=1}^N w_{i}^{T+1}}{\sum_{i=1}^N w_{i}^t} \nonumber \\
    \leq& -\alpha\sum_{t=1}^T\frac{1}{m_t}\sum_{j=1}^{m_t} \frac{\sum_{i=1}^N w_{i}^t(\mathcal{P}_i(y_{l_j}^t \succ y_{l_j'}^t|x_j^t)- p_j^t)^2}{\sum_{i'=1}^N w_{i'}^t}  + \frac{\alpha^2 T}{8}. \label{uppp''}
\end{align}
According to \eqref{low'} and \eqref{uppp''}, we have
\begin{align*}
&-\alpha\sum_{t=1}^T\frac{1}{m_t}\sum_{j=1}^{m_t} \bigg(\mathcal{P}_{i^*}(y_{l_j}^t \succ y_{l_j'}^t|x_j^t)- p_j^t\bigg)^2 -\alpha^2 T - \ln N  
\end{align*}
\begin{align*}
\leq& -\alpha\sum_{t=1}^T\frac{1}{m_t}\sum_{j=1}^{m_t} \frac{\sum_{i=1}^N w_{i}^t(\mathcal{P}_i(y_{l_j}^t \succ y_{l_j'}^t|x_j^t)- p_j^t)^2}{\sum_{i'=1}^N w_{i'}^t}  + \frac{\alpha^2 T}{8}.
\end{align*}
After re-arranging the above inequalities and dividing $\alpha$ on both sides, we have
\begin{align*}
   &\sum_{t=1}^T\frac{1}{m_t}\sum_{j=1}^{m_t} \frac{\sum_{i=1}^N w_{i}^t(\mathcal{P}_i(y_{l_j}^t \succ y_{l_j'}^t|x_j^t)- p_j^t)^2}{\sum_{i'=1}^N w_{i'}^t} \\
   &- \sum_{t=1}^T\frac{1}{m_t}\sum_{j=1}^{m_t} \bigg(\mathcal{P}_{i^*}(y_{l_j}^t \succ y_{l_j'}^t|x_j^t)- p_j^t\bigg)^2 
   \leq \frac{\ln N}{\alpha} + \frac{9T\alpha}{8}.
\end{align*}
Choosing $\alpha = \frac{2}{3}\sqrt{\frac{2\ln N}{T}} < \frac{1}{2}$ (true as $T\to\infty$), we have
\begin{align*}
  &\sum_{t=1}^T\frac{1}{m_t}\sum_{j=1}^{m_t} \frac{\sum_{i=1}^N w_{i}^t(\mathcal{P}_i(y_{l_j}^t \succ y_{l_j'}^t|x_j^t)- p_j^t)^2}{\sum_{i'=1}^N w_{i'}^t} \\
  &- \sum_{t=1}^T\frac{1}{m_t}\sum_{j=1}^{m_t} \bigg(\mathcal{P}_{i^*}(y_{l_j}^t \succ y_{l_j'}^t|x_j^t)- p_j^t\bigg)^2 \leq 3\sqrt{\frac{T\ln N}{2}}.
\end{align*}
Finally, we have the regret $R_{\mathcal{M}}(T)$ satisfying
\begin{align*}
    R_{\mathcal{M}}(T) = &\sum_{t=1}^T\frac{1}{m_t}\sum_{j=1}^{m_t}\bigg(\sum_{i=1}^N \frac{w_i^{t}\mathcal{\hat{P}}_i(y_{l_j}^t \succ y_{l_j'}^t|x_j^t)}{\sum_{i'=1}^N w_{i'}^{t}} - p_j^t\bigg)^2 \\
    &- \sum_{t=1}^T\frac{1}{m_t}\sum_{j=1}^{m_t}\bigg(\mathcal{P}_{i^*}(y_{l_j}^t \succ y_{l_j'}^t|x_j^t) - p_j^t\bigg)^2 \\
    \leq& \sum_{t=1}^T\frac{1}{m_t}\sum_{j=1}^{m_t} \frac{\sum_{i=1}^N w_{i}^t(\mathcal{P}_i(y_{l_j}^t \succ y_{l_j'}^t|x_j^t)- p_j^t)^2}{\sum_{i'=1}^N w_{i'}^t} \\
    &- \sum_{t=1}^T\frac{1}{m_t}\sum_{j=1}^{m_t} \bigg(\mathcal{P}_{i^*}(y_{l_j}^t \succ y_{l_j'}^t|x_j^t)- p_j^t\bigg)^2 \\
    \leq&3\sqrt{\frac{T\ln N}{2}} = \mathcal{O}(T^{\frac{1}{2}}),
\end{align*}
where the first inequality holds due to the convexity of the aggregation loss function. We then finish the proof. $\hfill\square$

According to Theorem~\ref{Thm1}, our mechanism obviously improves from benchmarks 1-3 by distinguishing the most accurate worker in the online learning process as $T\to\infty$. As $N$ increases, the platform may find a more accurate worker in hindsight. Thus, it chooses a larger step-size $\alpha$ in \eqref{e4} to 
punish inaccurate workers more in the weighted aggregation to retire them. As $T$ increases, the platform is more patient in choosing a smaller $\alpha$ in \eqref{e4} to select the best worker in hindsight with more time slots and samples. 

{We further strengthen Theorem~\ref{Thm1} with two additional properties of our mechanism. Proposition~\ref{prop-responsive} below characterizes the responsiveness of our mechanism to new high-quality workers under the uniform step-size $\alpha$. Proposition~\ref{prop-robust} below establishes the robustness of our mechanism against noisy verification of the ground-truth system state.}

\begin{proposition}[{Responsiveness to a newly-arriving high-quality worker under uniform $\alpha$}]\label{prop-responsive}
{Consider an existing worker $k \in [N]$ active prior to slot $t_0 \in [T]$ with weight $w_k^{t_0} \in (0, \infty)$ accumulated through prior reweighing, and a newly-arriving worker $i$ entering the system at slot $t_0$ with initial weight $w_i^{t_0} \in (0, w_k^{t_0})$, so that the newly-arriving worker starts at a strict weight disadvantage $w_i^{t_0} < w_k^{t_0}$. Both workers continue to participate in the mechanism from slot $t_0$ onward under the weight update~\eqref{e4}. Suppose under truthful reporting, the newly-arriving worker is strictly more accurate, with expected per-slot squared losses
\begin{align*}
    \ell_i^t := \mathbb{E}\bigg[\frac{1}{m_t}\sum_{j=1}^{m_t}\big(\mathcal{P}_i(y_{l_j}^t \succ y_{l_j'}^t \mid x_j^t) - p_j^t\big)^2\bigg], 
\end{align*}
satisfying $\ell_i^t < \ell_k^t$ for all $t \ge t_0$ and accuracy gap $\Delta := \min_{t \ge t_0}(\ell_k^t - \ell_i^t) > 0$. Under our mechanism in Definition~\ref{def3} with step-size $\alpha = \frac{2}{3}\sqrt{2 \ln N/T}$ from Theorem~\ref{Thm1}, the newly-arriving worker overtakes the existing worker in expected weight within}
\begin{align*}
    {\tau_{\textnormal{new}} \le \bigg\lceil \frac{\ln(w_k^{t_0}/w_i^{t_0})}{\alpha \Delta}\bigg\rceil = \mathcal{O}\bigg(\sqrt{\frac{T}{\ln N}} \cdot \Delta^{-1}\bigg)}
\end{align*}
{slots after arrival, regardless of how the existing worker accumulated its weight prior to slot $t_0$.}
\end{proposition}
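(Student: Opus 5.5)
By Proposition~\ref{L2}, both workers report truthfully, so the update~\eqref{e4} is driven by their true losses. Write $L_i^t := \frac{1}{m_t}\sum_{j=1}^{m_t}(\mathcal{P}_i(y_{l_j}^t \succ y_{l_j'}^t \mid x_j^t) - p_j^t)^2 \in [0,1]$, and analogously $L_k^t$. Unrolling~\eqref{e4} from slot $t_0$ gives
\begin{align*}
w_i^{t_0+\tau} = w_i^{t_0}\prod_{s=t_0}^{t_0+\tau-1}(1-\alpha L_i^s),
\end{align*}
and the same product form for worker $k$. The prior history of worker $k$ enters only through the scalar $w_k^{t_0}$. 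This is why the bound holds ``regardless of how the existing worker accumulated its weight.''

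The key object is the log-ratio
\begin{align*}
D_\tau := \ln\frac{w_i^{t_0+\tau}}{w_k^{t_0+\tau}} = -\ln\frac{w_k^{t_0}}{w_i^{t_0}} + \sum_{s=t_0}^{t_0+\tau-1}\big[\ln(1-\alpha L_i^s) - \ln(1-\alpha L_k^s)\big].
\end{align*}
I will lower-bound each summand by a linear term in $L_k^s - L_i^s$. Since $\alpha<\tfrac12$ for large $T$, as in the proof of Theorem~\ref{Thm1}, the map $x\mapsto \ln(1-\alpha x)$ on $[0,1]$ has derivative $-\alpha/(1-\alpha x)$, whose magnitude lies between $\alpha$ and $2\alpha$. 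I plan to use concavity of $\ln(1-\alpha x)$ to get $\ln(1-\alpha L_i^s)-\ln(1-\alpha L_k^s) \ge \alpha (L_k^s - L_i^s)$ when $L_k^s\ge L_i^s$. The reverse case needs more care, because the slope there can be up to $2\alpha$.

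To sidestep this, I will work with expectations slot by slot. One route is to apply Jensen's inequality to $\mathbb{E}[\ln(1-\alpha L)]$. The cleaner route, which I will commit to, is to define ``overtaking in expected weight'' through the product of expected factors. When per-slot losses are independent across slots given the prompts (the Bernoulli ground truths are drawn fresh each slot), we get $\mathbb{E}[w_i^{t_0+\tau}] = w_i^{t_0}\prod_s(1-\alpha\ell_i^s)$, and similarly for $k$. Then
\begin{align*}
\ln\frac{\mathbb{E}[w_i^{t_0+\tau}]}{\mathbb{E}[w_k^{t_0+\tau}]} \ge -\ln\frac{w_k^{t_0}}{w_i^{t_0}} + \sum_s \ln\frac{1-\alpha\ell_i^s}{1-\alpha\ell_k^s}.
\end{align*}
Each term satisfies $\ln\frac{1-\alpha\ell_i^s}{1-\alpha\ell_k^s} \ge -\ln\big(1-\alpha(\ell_k^s-\ell_i^s)\big) \ge \alpha(\ell_k^s-\ell_i^s) \ge \alpha\Delta$. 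The first inequality follows from $(1-\alpha\ell_i)\ge (1-\alpha\ell_k)$ after rearranging: $\frac{1-\alpha\ell_i}{1-\alpha\ell_k}=1+\frac{\alpha(\ell_k-\ell_i)}{1-\alpha\ell_k}\ge 1+\alpha(\ell_k-\ell_i)$. I will then use $\ln(1+y)\ge y/(1+y)$, or more simply bound via $\ln(1+y)\ge y - y^2/2$. I expect the cleanest statement is that $\ln\frac{1-\alpha\ell_i}{1-\alpha\ell_k}\ge \alpha\Delta$ up to a $(1+\mathcal{O}(\alpha))$ factor. To get the exact constant, I will instead use $\frac{1}{1-\alpha\ell_k}\ge 1$ together with the concavity inequality $\ln(1-\alpha\ell_i)-\ln(1-\alpha\ell_k)\ge \alpha(\ell_k-\ell_i)/(1-\alpha\ell_i)\ge\alpha(\ell_k-\ell_i)$. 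This is the mean-value bound with slope at least $\alpha$, and it is exactly where the ordering $\ell_i<\ell_k$ helps.

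Summing over slots gives a log-ratio of at least $-\ln(w_k^{t_0}/w_i^{t_0}) + \tau\alpha\Delta$. This is nonnegative once $\tau\ge \ln(w_k^{t_0}/w_i^{t_0})/(\alpha\Delta)$, which yields the ceiling bound. Substituting $\alpha=\frac23\sqrt{2\ln N/T}$ gives $\mathcal{O}(\sqrt{T/\ln N}\,\Delta^{-1})$, treating the initial log-ratio as a constant.

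The main obstacle is justifying the factorization of expected weights. This requires independence of slot-$s$ losses from past weights, which holds under truthful reporting because weights do not influence future preferences or outcomes. If such independence is unavailable, I will fall back to the pathwise statement on $\mathbb{E}[D_\tau]$, using the concavity bound in expectation.
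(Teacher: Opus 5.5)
Your proposal is correct and matches the paper's proof essentially step for step. Both factorize the expected weights as $\mathbb{E}[w^{t_0+\tau}]=w^{t_0}\prod_{s}(1-\alpha\ell^s)$ via cross-slot independence, lower-bound each $\ln\frac{1-\alpha\ell_i^s}{1-\alpha\ell_k^s}$ by $\alpha\Delta$, and sum to get the ceiling. Your final tangent-line bound with slope $\alpha/(1-\alpha\ell_i^s)\ge\alpha$ is the paper's $\int_{\ell_i^s}^{\ell_k^s}\frac{\alpha}{1-\alpha u}\,du\ge\alpha(\ell_k^s-\ell_i^s)$. The pathwise detour and the $1+\alpha(\ell_k-\ell_i)$ rearrangement are unnecessary, and that rearrangement alone does not give the $-\ln(1-\alpha(\ell_k-\ell_i))$ bound you attribute to it.
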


{The proof is given in Appendix~H of the supplementary material of this TMC submission. Proposition~\ref{prop-responsive} shows that our mechanism adapts quickly to a newly-arriving high-quality worker even when an existing worker enters the comparison with a substantially higher accumulated weight. The bound's numerator $\ln(w_k^{t_0}/w_i^{t_0})$ scales only logarithmically with the weight asymmetry, so even a large prior weight advantage of the existing worker is overcome in $\mathcal{O}(\sqrt{T/\ln N} \cdot \Delta^{-1})$ slots, independent of the existing worker's tenure prior to $t_0$. By symmetry, the same bound applies whenever any worker becomes more accurate than a previously dominant worker, regardless of how the dominant worker accumulated its weight from prior reputation or earlier accurate reporting. This responsiveness is empirically verified in Figs.~\ref{fig1} and \ref{fig3}, where the most accurate worker's chosen probability rises from the uniform initialization of $1/N$ to over $0.9$ (full feedback) and near $0.8$ (limited feedback) within the time horizon.}

\begin{proposition}[{Robustness under noisy verification}]\label{prop-robust}
{Suppose the verified ground-truth label $\tilde{p}_j^t$ differs from the true $p_j^t$ with probability at most $\epsilon \in [0, 1/2)$, independently across prompts and independently of all workers' reports. Under our mechanism in Definition~\ref{def3} with step-size $\alpha = \frac{2}{3}\sqrt{2 \ln N/T}$ from Theorem~\ref{Thm1}, the following hold:
\begin{enumerate}
    \item[(a)] (\textit{Truthfulness degradation.}) The best-response misreport $\hat{\mathcal{P}}_i^*$ of any worker $i$ deviates from his true preference $\mathcal{P}_i$ by at most $\epsilon$, i.e., $|\hat{\mathcal{P}}_i^* - \mathcal{P}_i| \le \epsilon$. The cumulative strategic gain from misreporting over $T$ slots is at most $\mathcal{O}(\epsilon^2 \sqrt{T})$, which is dominated by the $\mathcal{O}(\sqrt{T})$ regret term and vanishes as $\epsilon \to 0$.
\item[(b)] (\textit{Regret degradation.}) The expected time-average regret satisfies
    \begin{align*}
        \frac{\mathbb{E}[R_{\mathcal{M}}(T)]}{T} \le \mathcal{O}\bigg(\frac{1}{\sqrt{T}}\bigg) + 2\epsilon,
    \end{align*}
    recovering the clean-case bound in Theorem~\ref{Thm1} as $\epsilon \to 0$.
\end{enumerate}}
\end{proposition}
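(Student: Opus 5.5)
For part (a), I would redo the one-slot best-response computation behind Proposition~\ref{L2}, now with the noisy label $\tilde p_j^t$ in the update~\eqref{e4}. Worker $i$ believes $p_j^t\sim\mathrm{Bernoulli}(\mathcal{P}_i)$. The noise flips the label with probability $\epsilon_j\le\epsilon$, independently of the reports, so under his belief $\Pr(\tilde p_j^t=1)=\tilde{\mathcal{P}}_i:=\mathcal{P}_i(1-\epsilon_j)+(1-\mathcal{P}_i)\epsilon_j$.

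Because the update is multiplicative, the report in slot $t$ enters $w_i^{t'}$ for every $t'>t$ only through the factor $1-\alpha\frac{1}{m_t}\sum_j(\hat{\mathcal{P}}_i-\tilde p_j^t)^2$. The remaining factors do not depend on this report. I would argue, as in Proposition~\ref{L2}, that they enter the expectation through a nonnegative multiplier. Then maximizing~\eqref{e2} over the slot-$t$ report reduces, prompt by prompt, to minimizing $\mathbb{E}[(\hat{\mathcal{P}}-\tilde p)^2]$. This is a proper scoring rule, and it is minimized at $\hat{\mathcal{P}}_i^*=\tilde{\mathcal{P}}_i$. Since
\[
\tilde{\mathcal{P}}_i-\mathcal{P}_i=\epsilon_j(1-2\mathcal{P}_i)\in[-\epsilon,\epsilon],
\]
we get $|\hat{\mathcal{P}}_i^*-\mathcal{P}_i|\le\epsilon$.

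For the strategic gain, the quadratic scoring rule gives a per-prompt expected-loss improvement of exactly $(\hat{\mathcal{P}}^*-\mathcal{P}_i)^2\le\epsilon^2$. The per-slot multiplicative weight gain is therefore at most $1+\mathcal{O}(\alpha\epsilon^2)$. Compounding over $T$ slots gives a relative gain of at most $e^{\mathcal{O}(\alpha\epsilon^2T)}-1$. With $\alpha=\Theta(1/\sqrt{T})$ this is $\mathcal{O}(\epsilon^2\sqrt{T})$ to first order, measured on the log-weight scale that drives the regret. I would state the gain in that normalization, since the weights are bounded by $1$.

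For part (b), I would first run the proof of Theorem~\ref{Thm1} verbatim with $\tilde p_j^t$ in place of $p_j^t$. None of its steps used the labels being correct, only that losses lie in $[0,1]$. This bounds the noisy-label regret against the best worker under noisy labels by $3\sqrt{T\ln N/2}$. The reports are now $\hat{\mathcal{P}}_i^*$, which is allowed because the loss range is unchanged.

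Next I would translate back to true labels. For any $a\in[0,1]$,
\[
|(a-p)^2-(a-\tilde p)^2|=|\tilde p-p|\,|2a-p-\tilde p|\le \mathbf{1}\{\tilde p\ne p\}.
\]
Taking expectations, each of the two loss sums changes by at most $\epsilon$ per prompt on average. This costs $\epsilon T$ for the platform's loss and $\epsilon T$ for the comparator. The $\min$ over $i$ is handled because $\min$ is $1$-Lipschitz in the sup norm.

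Finally, the regret in~\eqref{e3} compares with the truthful losses of $\mathcal{P}_i$, while the platform aggregates $\hat{\mathcal{P}}_i^*$. By part (a) the aggregated value moves by at most $\epsilon$, so its squared loss changes by $\mathcal{O}(\epsilon)$. Dividing by $T$ yields $\mathcal{O}(1/\sqrt{T})+\mathcal{O}(\epsilon)$.

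The main obstacle is getting the constant $2$ rather than $3$ or $4$, given the separate label-noise and report-shift contributions. I would try to merge them with one sharper identity. For truthful-in-noise reports, $\mathbb{E}[(\tilde{\mathcal{P}}-p)^2]-\mathbb{E}[(\mathcal{P}-p)^2]$ can be computed exactly under the worker's belief and is $\mathcal{O}(\epsilon)$ with a small constant. If this does not tighten enough, I would accept a constant times $\epsilon$. A secondary concern is making the slot-separability argument in (a) rigorous, since the multiplier coupling future slots must be independent of the current report.
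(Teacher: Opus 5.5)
\textbf{Part (a).} This follows the paper. The expected one-slot factor is a strictly concave quadratic in the report, maximized at $q_\epsilon=(1-2\epsilon)\mathcal{P}_i+\epsilon$, and $|q_\epsilon-\mathcal{P}_i|=\epsilon|1-2\mathcal{P}_i|\le\epsilon$.

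Your separability worry closes by backward induction, which the paper only asserts. Under \eqref{e4}, worker $i$'s weight depends only on his own reports and the labels, and slots are independent. So the value-to-go from slot $t+1$ is linear in $w_i^{t+1}$, with a positive coefficient that depends neither on the history nor on the slot-$t$ report. Each slot therefore reduces to maximizing the expected factor.

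For the gain, the paper never compounds. It bounds each one-slot gain in $\mathbb{E}[w_i^{t+1}]$ by $\alpha w_i^t(q_\epsilon-\mathcal{P}_i)^2\le\alpha\epsilon^2$, using $w_i^t\le 1$, and sums to get $\alpha\epsilon^2T$ with no ``first-order'' caveat. Your log-ratio normalization is an equally admissible reading of the vague ``strategic gain''. Some such choice is unavoidable, because the raw gain in \eqref{e2} is not $O(\epsilon^2\sqrt T)$. For $\mathcal{P}_i=1$ the expected per-slot factors are $1-\alpha\epsilon$ (truthful) and $1-\alpha\epsilon(1-\epsilon)$ (best response). Once $\alpha\epsilon T\to\infty$, the gap in expected cumulative weight tends to $1/(\alpha(1-\epsilon))=\Theta(\sqrt{T/\ln N})$.

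\textbf{Part (b): same first two steps.} You rerun the potential argument of Theorem~\ref{Thm1} on noisy labels. You then use $|(a-\tilde p)^2-(a-p)^2|\le\mathbf{1}\{\tilde p\ne p\}$ uniformly in $a$, which costs $\epsilon T$ on each term of \eqref{e3}. These two steps are the paper's entire proof; your sup-norm handling of the $\min$ is more explicit than the paper's.

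\textbf{The gap: the report shift.} The paper stops at $3\sqrt{T\ln N/2}+2\epsilon T$ because it invokes Theorem~\ref{Thm1} as if reports were truthful. It never reconciles the potential argument's comparator (the best \emph{report} $q_\epsilon$) with the comparator in \eqref{e3} (the best true $\mathcal{P}_i$).

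You rightly flag this mismatch, though the shift belongs on the comparator, not the aggregate. Bounding it by $|(x-p)^2-(y-p)^2|\le 2|x-y|$ adds another $2\epsilon T$. So as written you prove $4\epsilon$, not the stated $2\epsilon$. Adding separate worst-case bounds for the two effects cannot reach $2\epsilon$, since the report shift alone can raise a worker's per-prompt loss by $\epsilon/(4(1-\epsilon))$.

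Your proposed sharpening also conditions on the wrong thing. Under the worker's belief the difference is $\epsilon^2(1-2\mathcal{P}_i)^2$. But \eqref{e3} is evaluated on realized labels, which need not follow any worker's belief.

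\textbf{How to recover the constant $2$.} Condition on the realized label $p$ and average only over the flip (symmetric, rate $\epsilon$, as in (a)). With $d:=|\mathcal{P}_i-p|$,
\[
|q_\epsilon-p|=(1-2\epsilon)d+\epsilon,\qquad
\mathbb{E}\big[(q_\epsilon-\tilde p)^2\,\big|\,p\big]=(1-2\epsilon)^2d^2+\epsilon(1-\epsilon).
\]
So the noisy losses driving \eqref{e4} are one common increasing affine image of the true losses $L_i^t:=\frac{1}{m_t}\sum_j(\mathcal{P}_i-p_j^t)^2$ of the true preferences.

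Take the pointwise potential inequality from Theorem~\ref{Thm1}'s proof with the fixed comparator $i^\dagger$ attaining the second term $L^\star$ of \eqref{e3}; $i^\dagger$ does not depend on the flips. Take expectations over the flips. The normalized weight $w_i^t/\sum_{i'}w_{i'}^t$ depends only on earlier flips and sums to one, so the $\epsilon(1-\epsilon)$ terms cancel:
\[
(1-2\epsilon)^2\Big(\mathbb{E}\sum_{t=1}^T\sum_{i=1}^N\frac{w_i^t}{\sum_{i'}w_{i'}^t}L_i^t-L^\star\Big)\le 3\sqrt{T\ln N/2}.
\]
Now apply Jensen to the first term of \eqref{e3}. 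Use $\big((1-2\epsilon)d+\epsilon\big)^2\le(1-2\epsilon)^2d^2+2\epsilon-3\epsilon^2$ and $(1-2\epsilon)^2L^\star\le L^\star$. This gives
\[
\mathbb{E}[R_{\mathcal{M}}(T)]\le 3\sqrt{T\ln N/2}+(2\epsilon-3\epsilon^2)T,
\]
which is the stated constant with the report shift accounted for.
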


{The proof is given in Appendix~I of the supplementary material of this TMC submission. Proposition~\ref{prop-robust} extends Theorem~\ref{Thm1} to settings where the infrastructure-side verification is imperfect, such as when decoded ACK/NACK packets in spectrum sensing are occasionally corrupted or when PeMS traffic flow measurements in navigation exhibit occasional sensor faults. The analogous result for the limited-feedback setting in Theorem~\ref{Thm1-} can be derived similarly, with an additive $\mathcal{O}(\epsilon)$ term added to the time-average regret.}

\section{Extension to Limited Worker Feedback}\label{S5}

Recall that in Sections~\ref{S2}-\ref{S4}, we assume that the platform has access to all the workers' feedback per time slot. {In practice, collecting feedback from multiple workers can be difficult due to cost and coordination challenges unique to mobile devices (such as battery, sensing, and uplink constraints that are absent in desktop-based crowdsourcing platforms), which can in turn slow down the iterative LLM fine-tuning (e.g., \cite{lambert2025reinforcement}).} For example, in navigation applications, querying many workers about real-time traffic events (e.g., whether a reported accident or congestion is actually present) increases monetary cost and delays the update cycle. 
Similarly, in spectrum-sensing applications, asking multiple mobile users to label channel conditions (e.g., whether a 3.5 GHz channel is idle or occupied at a specific time and location) may require additional sensing operations and energy consumption, which limits the number of workers that can provide feedback. 

In this section, we extend to consider a challenging case where the platform can receive only one worker's report transmission per time slot. In the following, we first present our system model for this limited worker feedback scenario and the dynamic Bayesian game formulation. We then give our mechanism design and analysis.

\subsection{System Model of Limited Worker Feedback}

Similar to the system model in Section~\ref{S2.1}, the platform iterates the online learning process in $T$ time slots, where each time slot $t$$\in$$[T]$ still contains three stages. In Stage I, instead of querying all the $N$ workers' feedback, the platform can only select one worker $I_t \in [N]$ in each time slot $t$ for his local observations and preference feedback. We consider that the platform uses a mixed strategy to select each worker $i$ with a probability of $\frac{w_i^t}{\sum_{i'\in[N]}w_{i'}^t}$ in each time slot $t$.  In Stage II, after receiving the chosen worker's  feedback $\{\mathcal{\hat{P}}_{I_t}(y_{l_j}^{t} \succ y_{l_j'}^{t}|x_j^t)\}_{j=1}^{m_t}$, the platform determines the preference for each prompt $j \in[m_t]$ as follows:
    \begin{align}
        \mathcal{P}(y_{l_j}^{t} \succ y_{l_j'}^{t}|x_j^t)=  \mathcal{\hat{P}}_{I_t}(y_{l_j}^{t} \succ y_{l_j'}^{t}|x_j^t),\label{eq0-}
\end{align}
which will be included to construct the human-annotation dataset $\{\mathcal{P}(y_{l_j}^{t} \succ y_{l_j'}^{t}|x_j^t)\}_{j=1}^{m_t}$ for training and updating the LLM fine-tuning policy later. In Stage III, the platform 
dynamically adjusts each worker's weight in the online learning process and determines  
\begin{align}\label{eq-}
    w_i^{t+1}=f_i(\{\mathcal{\hat{P}}_{I_t}(y_{l_j}^{t} \succ y_{l_j'}^{t}|x_j^t)\}_{j=1}^{m_t}, \{p_j^t\}_{j=1}^{m_t})
\end{align}
for the next slot $t+1$'s selection according to  feedback $\{\mathcal{\hat{P}}_{I_t}(y_{l_j}^{t} \succ y_{l_j'}^{t}|x_j^t)\}_{j=1}^{m_t}$ and the realized ground truth physical state $\{p_j^t\}_{j=1}^{m_t}$, where $w_i^1=1$ for any $i\in[N]$. 

By strategically manipulating his reported preference 
$\hat{\mathcal{P}}_i(y_{l_j}^{t} \succ y_{l_j'}^{t} \mid x_j^t)$, each worker $i$ aims to maximize his long-term reputation or payment from the platform over whole $T$ time slots as in \eqref{e2}.

On the other hand, the platform adopts a mixed strategy to choose each worker $i\in[N]$ with a probability of $\frac{w_i^t}{\sum_{i'\in[N]}w_{i'}^t}$ in each time slot $t$. It wants to improve the feedback accuracy in the aggregation by assigning the largest weight to the most accurate worker. As the best worker is still unknown in the online iteration, it then aims to reduce the regret between online mixed selection and offline choice of the best worker in hindsight, where the alignment loss is defined as the MSE between the platform's mixed selection and the realized binary ground truth physical state as follows:
\begin{align}
    R(T) := &\sum_{t=1}^T\sum_{i=1}^N \frac{w_i^t}{\sum_{i'=1}^N w_{i'}^t}  \frac{1}{m_t}\sum_{j=1}^{m_t}\bigg(\mathcal{\hat{P}}_{i}(y_{l_j}^{t} \succ y_{l_j'}^{t}|x_j^t) - p_{j}^t\bigg)^2 \nonumber \\
    &- \min_{i \in [N]} \sum_{t=1}^T\frac{1}{m_t}\sum_{j=1}^{m_t}\big(\mathcal{P}_i(y_{l_j}^{t} \succ y_{l_j'}^{t}|x_j^t) - p_{j}^t\big)^2, \nonumber \\
    := &\sum_{t=1}^T\sum_{i=1}^N \frac{w_i^t}{\sum_{i'=1}^N w_{i'}^t} \hat{\ell}_i^t - \min_{i \in [N]} \sum_{t=1}^T\ell_i^t,
    \label{e3-}
\end{align} 
where we define $\hat{\ell}_i^t := \frac{1}{m_t}\sum_{j=1}^{m_t}\big(\mathcal{\hat{P}}_i(y_{l_j}^{t} \succ y_{l_j'}^{t}|x_j^t) - p_{j}^t\big)^2$ and $\ell_i^t := \frac{1}{m_t}\sum_{j=1}^{m_t}\big(\mathcal{P}_i(y_{l_j}^{t} \succ y_{l_j'}^{t}|x_j^t) - p_{j}^t\big)^2$.

\subsection{Dynamic Bayesian Game Formulation under Limited Worker Feedback}\label{S5.2}

Based on our system model above,  we formulate the multi-agent online learning as a new dynamic Bayesian game:
\begin{itemize}  
    \item In Stage I of each time slot $t$$\in$$[T]$, the platform first chooses a worker $I_t\in[N]$ for his feedback. Then, the chosen worker $I_t$ with his private preference $\{\mathcal{P}_{I_t}(y_{l_j}^{t}$$\succ$$y_{l_j'}^{t}|x_j^t)\}_{j=1}^{m_t}$  determines his feedback $\{\mathcal{\hat{P}}_{I_t}(y_{l_j}^{t}$$\succ$$y_{l_j'}^{t}|x_j^t)\}_{j=1}^{m_t}$ to maximize his utility in \eqref{e2}.
    \item In Stage III of each time slot $t$$\in$$[T]$, the platform updates each worker's weight $w_i^{t+1}$=$f_i(\{\mathcal{\hat{P}}_{I_t}(y_{l_j}^{t} \succ y_{l_j'}^{t}|x_j^t)\}_{j=1}^{m_t}, \{p_j^t\}_{j=1}^{m_t})$  for reducing its regret in \eqref{e3-}.
\end{itemize}

Note that there is no strategic decision for any worker or the platform in Stage II. We need to carefully design an online mixed selection mechanism for ensuring each worker's truthful feedback and a vanishing regret over time as given in Definitions~\ref{def1} and \ref{def2}. Before that, we introduce the EXP3 scheme in the crowdsourcing literature to check.

\subsection{Benchmark 4: EXP3 for Limited Worker Feedback}\label{S5.3}

The EXP3 scheme as a variant of Hedge in the crowdsourcing literature (e.g., \cite{kash2024slowly}, \cite{khodak2023meta}) for limited worker feedback updates worker $i$'s weight as follows:
\begin{align*}
    w_i^{t+1} = \begin{cases}
        w_i^{t} \cdot e^{-\eta \cdot \tilde{\ell}_i^t}, &\text{if} \ i = I_t, \\
        w_i^t, &\text{otherwise,}
    \end{cases} 
\end{align*}
where an unbiased estimator $\tilde{\ell}_i^t$ of the worker $i$'s feedback loss is given as follows:
\begin{align}\label{e-ue}
  \tilde{\ell}_i^t =  \begin{cases}
        \frac{\ell_i^t}{(1-\beta)w_i^t/\sum_{i'=1}^N w_{i'}^t+\beta/N}, &\text{if} \ i = I_t, \\
        0, &\text{otherwise.}
    \end{cases}
\end{align}
Unfortunately, such an EXP3 scheme is still not truthful as shown below.
\begin{lemma}\label{Benchmark 4}
   The benchmark 4 of EXP3 scheme is not truthful.
\end{lemma}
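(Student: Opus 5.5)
Following the approach used for Lemma~\ref{L1} (Hedge), I will prove non-truthfulness with a single counterexample: one worker, one slot, one prompt, and a strict profitable deviation. I take $T=2$ and $m_1=1$, so the worker's utility in \eqref{e2} is $w_i^1+\mathbb{E}[w_i^2]$. Since $w_i^1=1$ is fixed, only $\mathbb{E}[w_i^2]$ matters, and it depends only on the report in slot $1$. One point about \eqref{e-ue} needs care. The loss fed into the estimator must be the loss of the reported value, $\hat{\ell}_i^t=(\hat{\mathcal{P}}_i-p^t)^2$, because the platform never observes $\mathcal{P}_i$. I will read $\tilde{\ell}_i^t$ as $\hat{\ell}_i^t$ divided by the selection probability $q_i^1:=(1-\beta)w_i^1/\sum_{i'}w_{i'}^1+\beta/N=1/N$. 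This probability is not affected by worker $i$'s report.

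Next I compute worker $i$'s expected next-slot weight as a function of his report $r$, given true belief $\mathcal{P}_i=P$. With probability $q_i^1$ he is selected. In that case his weight becomes $e^{-\eta (r-p)^2/q_i^1}$, where $p\sim\mathrm{Bernoulli}(P)$. Otherwise his weight stays at $1$. The expected utility is therefore
\begin{align*}
U(r)=1-q_i^1+q_i^1\Big(P\,e^{-c(1-r)^2}+(1-P)\,e^{-c r^2}\Big),\qquad c:=\eta/q_i^1=N\eta .
\end{align*}
The structural fact I will exploit is that $x\mapsto e^{-cx^2}$ is not an affine function of $x^2$. Unlike the linear-in-loss score behind Proposition~\ref{L2}, the exponential composed with the squared loss is not a proper scoring rule. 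Differentiating gives
\begin{align*}
U'(r)\propto P(1-r)e^{-c(1-r)^2}-(1-P)\,r\,e^{-cr^2}.
\end{align*}
At the truthful report $r=P$ this equals $P(1-P)\big(e^{-c(1-P)^2}-e^{-cP^2}\big)$. That quantity is strictly positive whenever $P\in(1/2,1)$ and $c>0$, since then $(1-P)^2<P^2$. So $U$ is strictly increasing at $r=P$, and some report $\hat{\mathcal{P}}_i=P+\delta>P$ gives strictly larger expected cumulative weight. This violates Definition~\ref{def1}. A symmetric argument shows under-reporting pays when $P<1/2$. This matches the intuition after Lemma~\ref{L1} that workers exaggerate toward the nearer extreme.

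To make the example concrete I will instantiate $N=2$, $\beta=0$, $\eta=1$, $P=0.6$, and compare $U(1)$ with $U(0.6)$ numerically. This also keeps the counterexample robust to the alternative reading in which \eqref{e-ue} uses the true loss $\ell_i^t$: that quantity is unobservable by the platform, so the mechanism must use the reported loss in practice. The main obstacle is precisely this modeling subtlety in \eqref{e-ue}. Once it is settled, the derivative-sign computation is routine. Taking one slot suffices because a profitable deviation in any single slot already breaks the inequality in Definition~\ref{def1}.
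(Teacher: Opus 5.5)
Your proposal is correct and follows essentially the same route as the paper's proof. Both use $T=2$ and $m_t=1$, evaluate the estimator on the reported loss with denominator $1/N$ at $t=1$, and show that $F'(q)\neq 0$ at the truthful report, so exaggerating toward the nearer extreme is profitable; your numerical instance ($N=2$, $\eta=1$, $P=0.6$, giving $F(1)\approx 0.654>F(0.6)\approx 0.630$) checks out. One small correction: the concrete instance does not make the argument robust to the true-loss reading of \eqref{e-ue}, because under that reading the update does not depend on the report, no deviation is profitable, and the counterexample genuinely needs the reported-loss interpretation, which is the one the paper's proof adopts.
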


The proof is given in Appendix~E of the supplementary material of this TMC submission. Lemma~\ref{Benchmark 4} indicates that it is non-trivial to incentivize workers' truthful reports under the limited worker feedback case. Therefore, a new weight update scheme needs to be investigated for limited worker feedback.

\subsection{Our Truthful Online Mixed Selection Mechanism Design and Analysis}

As the platform can only observe one worker's feedback in each time slot, it needs to carefully balance the exploitation of selected workers and the exploration of unselected ones during the online learning process. Further, with such limited feedback information, it is even more challenging to guarantee a sublinear regret. The idea of our new mechanism design is to introduce an exploitation parameter and update the selected worker's weight with some probability. We define our mechanism in the following. 
\begin{definition}[Online Mixed Selection Mechanism]\label{def3-}
At Stage III of each time slot $t \in [T]$, the platform updates each 
worker's weight $w_i^{t+1}$ in \eqref{eq-} as follows: 
\begin{align}\label{e4-}
    w_i^{t+1} = \begin{cases}
        (1-\beta) \gamma_i^{t+1} +  \beta, &\text{if} \ i = I_t, \\
        w_i^t, &\text{otherwise,}
    \end{cases} 
\end{align}
where $\beta \in (0, 1)$ is an exploitation parameter to determine later, 
\begin{align}\label{w}
    \gamma_i^{t+1} = \begin{cases}
         \gamma_i^{t} \bigg(1 - \alpha \frac{\ell_i^{t}(1-\alpha/\theta_{i}^t)}{\theta_i^t}\bigg), &\text{if} \ i = I_t, \\
         \gamma_i^t, & \text{otherwise},
    \end{cases}
\end{align}
$\theta_i^t = \frac{w_i^t}{\sum_{i'=1}^N w_{i'}^t}$ denotes the probability of selecting worker $i$, $\gamma_i^1 = 1$ for $i \in [N]$, and $\alpha \in (0, \theta_i^t)$ is a step-size parameter to determine later.
\end{definition}

Our mechanism in Definition~\ref{def3-} balances the exploration and the exploitation during the online learning process. It updates each chosen worker's weight $w_i^{t+1}$ in \eqref{e4-} only with a probability of $1-\beta$ for uniform exploration. Note that if a worker is frequently chosen before, his weight keeps decreasing from 1 and becomes smaller than the others. Therefore, the probability that he will be chosen in future time slots is lower than that of the unchosen workers, especially when his feedback accuracy is low. Further, a chosen worker $i$'s weight $w_i^{t+1}$ in \eqref{e4-} is proportional to $\gamma_i^{t+1}$ in \eqref{w}, which will be decreased by a small value if his feedback loss $\hat{\ell}_i^t$ is small, otherwise large. By carefully designing $\alpha$ and $\beta$, our mechanism can guarantee valid $w_i^t$ in \eqref{e4-} and $\gamma_i^t$ in \eqref{w}.

Since each worker holds a Bernoulli belief on the ground truth $p_j^t$, our mechanism satisfies the truthful property as shown below.
\begin{proposition}\label{L2-}
    Our mechanism in Definition~\ref{def3-} is truthful, i.e., $\mathcal{\hat{P}}_i^*(y_{l_j}^{t} \succ y_{l_j'}^{t}|x_j^t) = \mathcal{P}_i(y_{l_j}^{t} \succ y_{l_j'}^{t}|x_j^t)$ for any prompt $j \in [m_t]$, SU $i \in [N]$ and time slot $t \in [T]$.
\end{proposition}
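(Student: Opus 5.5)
The plan is to combine a proper-scoring-rule argument for a single slot with backward induction over slots, as in the argument behind Proposition~\ref{L2}. Let worker $i$ be selected at slot $t$, i.e., $I_t=i$. The weights $w^t$, the selection probability $\theta_i^t$ and $\gamma_i^t$ are all fixed before worker $i$ reports. So I read the update in \eqref{w}, applied to the reported loss $\hat{\ell}_i^t$ (which coincides with $\ell_i^t$ under truthful reporting), as
\begin{align*}
\gamma_i^{t+1}=\gamma_i^t\big(1-c_i^t\,\hat{\ell}_i^t\big),\qquad c_i^t:=\frac{\alpha(1-\alpha/\theta_i^t)}{\theta_i^t}>0 .
\end{align*}
The constant $c_i^t$ is positive because $\alpha\in(0,\theta_i^t)$. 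It does not depend on the report. I will also check that $\alpha$ and $\beta$ can be chosen so that $c_i^t\le 1$, which keeps $\gamma_i^{t+1}\ge 0$.

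\textbf{Step 1 (one-slot strict properness).} Fix a prompt $j$ and let $q$ denote the reported value. Under worker $i$'s belief $p_j^t\sim\mathrm{Bernoulli}(\mathcal{P}_i)$, the expected loss is
\begin{align*}
\mathbb{E}\big[(q-p_j^t)^2\big]=(q-\mathcal{P}_i)^2+\mathcal{P}_i(1-\mathcal{P}_i).
\end{align*}
This is strictly convex in $q$ and uniquely minimized at $q=\mathcal{P}_i$. The loss $\hat{\ell}_i^t$ is an average over prompts, and each prompt's report enters only its own term. Hence, for every fixed $\gamma_i^t$, the conditional expectation $\mathbb{E}[\gamma_i^{t+1}]$ is uniquely maximized by reporting truthfully on every prompt. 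It follows that $\mathbb{E}[w_i^{t+1}]=(1-\beta)\mathbb{E}[\gamma_i^{t+1}]+\beta$ is also uniquely maximized by truthful reporting. If $i$ is not selected at slot $t$, his report has no effect, so there is nothing to prove.

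\textbf{Step 2 (intertemporal extension).} The utility in \eqref{e2} is $\mathbb{E}\big[\sum_{t} w_i^{t}\big]$, and a report at slot $t$ affects all later weights. I will argue by backward induction on $t$. Let $V_t(\gamma_i,w_{-i})$ denote worker $i$'s optimal expected continuation utility from slot $t$ onward. The key claim is that $V_t$ is nondecreasing in $\gamma_i$. Given this claim, the proof finishes as follows. The current report affects the future only through $\gamma_i^{t+1}$, since the other workers' weights do not change when $i$ is selected. The states $p_j^t$ are independent across slots. So worker $i$'s slot-$t$ problem becomes maximizing an expectation of a nondecreasing function of $\gamma_i^t(1-c_i^t\hat{\ell}_i^t)$. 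Because $\gamma_i^{t+1}$ is affine in $\hat{\ell}_i^t$, I will first try to show that $V_{t+1}$ is in addition concave or affine in $\gamma_i$. Under such a structure, Step 1 transfers directly and truthfulness at every slot follows.

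\textbf{Main obstacle.} The monotonicity and shape of $V_t$ in $\gamma_i$ is where I expect the difficulty. A larger $\gamma_i$ raises $w_i$, which raises the selection probability $\theta_i$. More frequent selection brings more multiplicative decrements, and it also changes $c_i$ through $\theta_i$. As a result, it is not immediate that a higher current $\gamma_i$ yields higher future cumulative weight. My first attempt is a coupling argument. I would run two copies of the process from $\gamma_i<\gamma_i'$ with the same ground-truth realizations and uniform randomness for selection, and use the fact that every update is an order-preserving map: $w=(1-\beta)\gamma+\beta$ is increasing in $\gamma$, and so is $\gamma\mapsto\gamma(1-c\hat{\ell})$ when $c\hat{\ell}\le 1$. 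From this I would show that the trajectory from $\gamma_i'$ dominates the one from $\gamma_i$ slot by slot.

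If the coupling breaks because the two copies select different workers, I would fall back on a weaker result. That fallback is to show that truthful reporting maximizes each slot's conditional expected weight $\mathbb{E}[w_i^{t+1}\mid\mathcal{F}_t]$ (the information available before the slot-$t$ report), and then sum these slot-wise statements over $t$. This mirrors the per-slot reasoning the paper uses after Proposition~\ref{L2}.
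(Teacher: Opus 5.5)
Your Step~1 is exactly what the paper's proof establishes. Conditional on $I_t=i$, $w_i^{t+1}=(1-\beta)\gamma_i^t(1-c_i^t\hat{\ell}_i^t)+\beta$ is affine and decreasing in $\hat{\ell}_i^t$, so the quadratic-score identity makes the truth the unique maximizer of $\mathbb{E}[w_i^{t+1}]$ whenever $\alpha<\theta_i^t$. Rather than citing the stipulation $\alpha\in(0,\theta_i^t)$ in Definition~\ref{def3-}, you should verify it along the trajectory, as the paper does: $\gamma_k^t\le 1$ gives $\theta_i^t\ge\beta/N=2\alpha$, and hence also $c_i^t\le\alpha/\theta_i^t\le\frac12$. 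The paper then only asserts that a deviation lowers all later weights. Your Step~2 tries to justify that assertion, and this is where the proposal fails.

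The inference ``$V_{t+1}$ concave or affine in $\gamma_i$ $\Rightarrow$ Step~1 transfers'' is wrong. Take $m_t=1$, $a_1=\gamma_i^t(1-c_i^t(1-\mathcal{P}_i)^2)$ and $a_0=\gamma_i^t(1-c_i^t\mathcal{P}_i^2)$. The derivative of $\mathbb{E}[V_{t+1}(\gamma_i^t(1-c_i^t\hat{\ell}_i^t))]$ at the truthful report is $2c_i^t\gamma_i^t\mathcal{P}_i(1-\mathcal{P}_i)\,[V_{t+1}'(a_1)-V_{t+1}'(a_0)]$. For $\mathcal{P}_i\notin\{0,\frac12,1\}$ we have $a_1\neq a_0$, so this derivative is nonzero whenever $V_{t+1}$ is strictly concave or strictly convex. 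Strict concavity makes the best response shade toward $\frac12$. Strict convexity makes it exaggerate, which is the Hedge failure in Lemma~\ref{L1}. Only affinity of the continuation value lets Step~1 transfer, and monotonicity (the target of your coupling) does not help.

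Affinity genuinely fails for this mechanism, so Step~2 cannot be completed by any route. Let $i$ be selected at slot $T-2$; then $W:=\sum_{k\neq i}w_k^{T-1}$ does not depend on his report. At slot $T-1$ only $w_i^T$ is at stake, so he reports truthfully there, with expected loss $\bar{\ell}>0$ for non-degenerate beliefs. Using $\theta\,c(\theta)=\alpha-\alpha^2/\theta$, $1/\theta=1+W/w$ and $w(\gamma)=(1-\beta)\gamma+\beta$,
\begin{align*}
V_{T-1}(\gamma)&=w(\gamma)+\mathbb{E}\big[w_i^{T}\big]=2w(\gamma)-(1-\beta)\alpha\bar{\ell}\,\gamma\\
&\quad+(1-\beta)\alpha^2\bar{\ell}\,\gamma+(1-\beta)\alpha^2\bar{\ell}\,\frac{W\gamma}{(1-\beta)\gamma+\beta},
\end{align*}
and the last term is strictly concave in $\gamma$. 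Hence, for admissible $\alpha,\beta$ and any $T\ge 3$, a worker with $m_{T-2}=1$ whose slot-$(T-2)$ belief lies outside $\{0,\frac12,1\}$ strictly (though only slightly) gains by shading toward $\frac12$. So the cumulative truthfulness of Definition~\ref{def1} does not hold.

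Your fallback, maximizing each $\mathbb{E}[w_i^{t+1}\mid\mathcal{F}_t]$ and summing, yields only per-slot (myopic) truthfulness. It ignores that the slot-$t$ report changes the law of $\theta_i^s$ and $c_i^s$ for $s>t$, which is the same unproved step as in the paper.

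The contrast with Proposition~\ref{L2} shows where the problem lies. Under full feedback, $i$'s weight evolves independently of the others, so the continuation value is $w_i^{t+1}$ times a report-independent factor. Here the factor $(1-\alpha/\theta_i^t)$ makes the expected relative decrement $\alpha\,\mathbb{E}[\hat{\ell}_i^t](1-\alpha/\theta_i^t)$ depend on $\gamma_i^t$. With the purely importance-weighted update $\gamma_i^t(1-\alpha\hat{\ell}_i^t/\theta_i^t)$, one would instead get $\mathbb{E}[\gamma_i^{t+1}\mid\mathcal{F}_t]=\gamma_i^t(1-\alpha\,\mathbb{E}[\hat{\ell}_i^t])$, and your backward induction would go through with affine $V_t$.
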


\begin{figure}
    \centering   \includegraphics[width=0.9\linewidth]{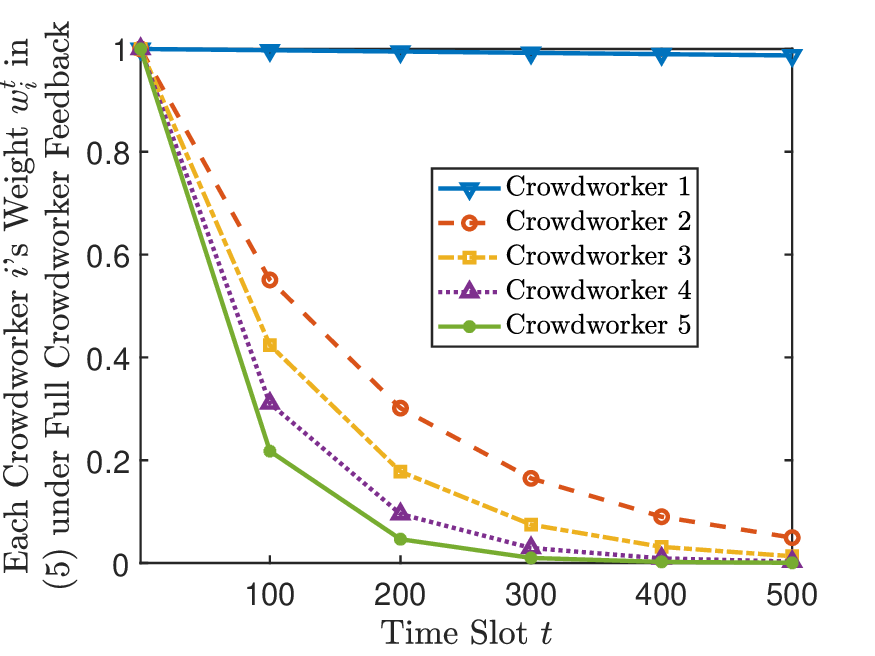}
    \caption{Each worker $i$'s weight $w_i^t$ of our truthful online weighted aggregation versus time slot $t$ under full feedback. Here we fix worker number $N=5$, total time slot number $T=500$, and prompt number $m_t=20$.} 
    \label{fig0}
\end{figure}

The proof is given in Appendix~F of the supplementary material of this TMC submission. Further, our mechanism is efficient and incurs a vanishing time-average regret in $T$.
\begin{theorem}\label{Thm1-}
    Our mechanism in Definition~\ref{def3-} incurs the sublinear accuracy regret $R_{\mathcal{M}}(T) $=$\mathcal{O}(\sqrt{T})$ by choosing step-sizes $\beta$ in \eqref{e4-} and $\alpha$ in \eqref{w} as follows:
    \begin{align*}
        \beta = 2\sqrt{\frac{N\ln N}{7T}}, \ \alpha = \sqrt{\frac{\ln N}{7NT}},
    \end{align*} leading to zero time-average regret with $\lim_{T\to\infty} 
\frac{R_\mathcal{M}(T)}{T} = 0$.
\end{theorem}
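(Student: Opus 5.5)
By Proposition~\ref{L2-}, every selected worker reports truthfully. So $\hat{\ell}_i^t=\ell_i^t$, and the first term of \eqref{e3-} becomes $\sum_t\sum_i\theta_i^t\ell_i^t$. The plan is to follow the potential-function argument of Theorem~\ref{Thm1}, but with the auxiliary weights $\gamma_i^t$ in \eqref{w} playing the role of the full-information weights. I work with the importance-weighted loss $\tilde{\ell}_i^t=\ell_i^t\mathbf{1}\{I_t=i\}/\theta_i^t$. Its conditional expectation given the history equals $\ell_i^t$.

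The update can then be written as $\gamma_i^{t+1}=\gamma_i^t\bigl(1-\alpha\tilde{\ell}_i^t+\alpha^2\tilde{\ell}_i^t/\theta_i^t\bigr)$. The $\alpha^2$ correction is exactly what makes the factor a second-order expansion of $e^{-\alpha\tilde{\ell}_i^t}$. Both $\alpha<\theta_i^t$ and the $\beta$-mixing in \eqref{e4-} are needed to keep the factor in $(0,1]$ and to ensure $\theta_i^t\ge$ roughly $\beta/N$.

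For the lower bound, let $\Gamma^t=\sum_i\gamma_i^t$, so $\Gamma^1=N$. I use $\ln(\Gamma^{T+1}/\Gamma^1)\ge\ln\gamma_{i^*}^{T+1}-\ln N$. Taking logs of the product and using $\ln(1-x)\ge -x-x^2$ gives a lower bound of $-\alpha\sum_t\tilde{\ell}_{i^*}^t$ minus second-order terms. In expectation these second-order terms are controlled by $\alpha^2\sum_t\mathbb{E}[1/\theta_{i^*}^t]\lesssim\alpha^2NT/\beta$.

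For the upper bound, I use $\ln(\Gamma^{t+1}/\Gamma^t)\le \Gamma^{t+1}/\Gamma^t-1$. This gives $-\alpha\sum_i q_i^t\tilde{\ell}_i^t+\alpha^2\sum_i q_i^t\tilde{\ell}_i^t/\theta_i^t$, where $q_i^t=\gamma_i^t/\Gamma^t$. In expectation the first term becomes $-\alpha\sum_i q_i^t\ell_i^t$, and the second is at most $\alpha^2\sum_i q_i^t/\theta_i^t$.

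The link between $q^t$ and the sampling distribution $\theta^t$ comes from $w_i^t=(1-\beta)\gamma_i^t+\beta$. This holds for each worker that has ever been selected, and $w_i^t=\gamma_i^t=1$ otherwise, so $w_i^t=(1-\beta)\gamma_i^t+\beta$ in all cases. Hence $\theta_i^t=\bigl((1-\beta)\gamma_i^t+\beta\bigr)/\bigl((1-\beta)\Gamma^t+N\beta\bigr)$. I will use this identity in two ways. First, it gives $\sum_i\theta_i^t\ell_i^t\le\sum_i q_i^t\ell_i^t+\mathcal{O}(\beta)$, since mixing with uniform costs at most $\beta$ per slot because $\ell\in[0,1]$. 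Second, it gives $q_i^t/\theta_i^t\le\mathcal{O}(1)$ once $\Gamma^t$ is compared with $N$ appropriately, so that $\sum_i q_i^t/\theta_i^t=\mathcal{O}(N)$.

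Combining the two bounds, taking expectations and dividing by $\alpha$ yields
\begin{align*}
\mathbb{E}[R_{\mathcal{M}}(T)]\le \frac{\ln N}{\alpha}+c_1\alpha NT+c_2\beta T
\end{align*}
for constants $c_1,c_2$. The values $\alpha=\sqrt{\ln N/(7NT)}$ and $\beta=2\sqrt{N\ln N/(7T)}$ balance these terms, with $7$ arising from the constants $c_1,c_2$. The result is $\mathcal{O}(\sqrt{NT\ln N})=\mathcal{O}(\sqrt{T})$. I also have to check feasibility: $\alpha<\theta_i^t$ follows from $\theta_i^t\ge\beta/(\text{normaliser})$, and for large $T$ we have $\beta\gg N\alpha$.

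The main obstacle is the step bounding $\sum_i q_i^t/\theta_i^t$ and $\mathbb{E}[1/\theta_{i^*}^t]$. The $w$-normaliser mixes $\Gamma^t$ with $N\beta$, and $\Gamma^t$ only decreases, so $\theta$ may deviate from $q$ in a direction that makes $q_i/\theta_i$ large. I would first try to bound it crudely via $\theta_i^t\ge\beta/((1-\beta)N+N\beta)=\beta/N$, using $\Gamma^t\le N$. This gives second-order terms of order $\alpha^2N T/\beta$ times a constant. With the chosen parameters that term is $\mathcal{O}(\sqrt{T})$ as well, so the crude bound should suffice, at the cost of adjusting the constants.
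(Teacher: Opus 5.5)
\textbf{Gap: the passage from $q^t$ to $\theta^t$.} Your route is the paper's own: the potential $\Gamma^t=\sum_i\gamma_i^t$, importance-weighted losses, and a final transfer from $q^t$ to $\theta^t$ through $w_i^t=(1-\beta)\gamma_i^t+\beta$. It breaks at the same place as the paper's proof, because the claim $\sum_i\theta_i^t\ell_i^t\le\sum_i q_i^t\ell_i^t+\mathcal{O}(\beta)$ is false.

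Your identity actually gives $\theta_i^t=\lambda^t q_i^t+(1-\lambda^t)\frac1N$ with $1-\lambda^t=\frac{N\beta}{(1-\beta)\Gamma^t+N\beta}$. So the per-slot price of the uniform mixing is $1-\lambda^t$. This equals $\beta$ only while $\Gamma^t=N$, and it tends to $1$ as $\Gamma^t\to0$. Unlike EXP3, the mechanism mixes \emph{unnormalised} weights with an absolute constant $\beta$.

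And $\Gamma^t$ does collapse. Because $\alpha/\theta_i^t\le\alpha N/\beta=\frac12$, the expected one-slot decrease of $\ln\gamma_i^t$ is at least $\alpha\ell_i^t/2$, no matter how rarely $i$ is selected. Take $N=2$, $\ell_1^t\equiv\frac14$ (worker 1 always reports $\frac12$) and $\ell_2^t\equiv1$ (e.g.\ $p^t\equiv1$ and worker 2 reports $0$). The drift of $\ln\gamma_i^t$ is $\Theta(\alpha t)$ against fluctuations of order $\sqrt{\alpha t}$. Hence after $\tilde{\mathcal{O}}(\sqrt T)$ slots both $\gamma_i^t\ll\beta$ with high probability. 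From then on $\theta^t\approx(\frac12,\frac12)$ and each slot adds about $\frac38$ to the regret, so $R_{\mathcal{M}}(T)=\Theta(T)$.

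The paper's appendix has the same hole. It uses $\theta_i^t<q_i^t+\frac1N$, which leaves the $\Theta(T)$ term $\sum_t\frac1N\sum_i\ell_i^t$. It then asserts $\frac1N\ell_i^t\le\frac{2\beta}{N}\ell_i^t$ ``due to $\beta\le\frac12$'', which would require $\beta\ge\frac12$. So the statement itself fails without further assumptions. To obtain $\mathcal{O}(\sqrt T)$ you need one of the following:
\begin{itemize}
\item a hypothesis keeping $\Gamma^t$ bounded away from $0$, e.g.\ a best worker with cumulative loss $\mathcal{O}(1/\alpha)$, which makes the mixing cost $\mathcal{O}(N\beta T)$. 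This is also why the synthetic experiments, whose best worker has per-slot loss at most $0.01$, do not show the problem over their horizons.
\item a modified mechanism that mixes the normalised distribution, $\theta_i^t=(1-\beta)q_i^t+\beta/N$. Its truthfulness would then have to be re-checked.
\end{itemize}

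\textbf{Second-order terms.} Your fallback here also fails. The quantity $\alpha^2NT/\beta$ is $\mathcal{O}(\sqrt T)$ inside the potential inequality, but the regret appears only after dividing by $\alpha$. That gives $\alpha NT/\beta=T/2$ for $\beta=2\alpha N$. The first half of your plan does go through if you handle these terms as follows.
\begin{itemize}
\item[(i)] Since $\theta^t$ is a convex combination of $q^t$ and the uniform distribution, $\theta_i^t\ge\min\{q_i^t,\frac1N\}$. So $\sum_i q_i^t/\theta_i^t\le\sum_i\max\{1,Nq_i^t\}\le2N$, with no condition on $\Gamma^t$.
\item[(ii)] For the best worker, do not bound $\mathbb{E}[1/\theta_{i^*}^t]$ at all. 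Set $x_t=\alpha\tilde\ell_{i^*}^t-\alpha^2\tilde\ell_{i^*}^t/\theta_{i^*}^t\in[0,\frac12]$ and note $x_t^2\le(\alpha\tilde\ell_{i^*}^t)^2$. Then the conditional expectation of $-x_t-x_t^2$ is at least $-\alpha\ell_{i^*}^t+\alpha^2\bigl(\ell_{i^*}^t-(\ell_{i^*}^t)^2\bigr)/\theta_{i^*}^t\ge-\alpha\ell_{i^*}^t$. This cancellation is exactly what the factor $(1-\alpha/\theta_i^t)$ in the update of $\gamma_i^t$ is for, and the paper uses it.
\end{itemize}
Together these give $\mathbb{E}\bigl[\sum_t\sum_i q_i^t\ell_i^t\bigr]-\sum_t\ell_{i^*}^t\le\frac{\ln N}{\alpha}+2\alpha NT$. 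What remains unproved, and is false in general, is the passage from $q^t$ to $\theta^t$ above. That step costs $\sum_t\mathbb{E}[1-\lambda^t]$, not $\beta T$.
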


The proof is given in Appendix~G of the supplementary material of this TMC submission. According to Theorem~\ref{Thm1-}, our mechanism can distinguish the most accurate worker in the online process as $T\to\infty$. As the worker number $N$ increases, the platform faces with more uncertainty of feedback accuracy in exploration than exploitation. Thus, it becomes more patient with a larger exploitation parameter $\beta$ and a smaller step-size $\alpha$ to punish inaccurate (chosen) workers less in the weight update. On the other hand, as the time slot number $T$ increases, the platform has more room to explore for the most accurate worker with more time slots and samples. Thus, it chooses a smaller exploitation parameter $\beta$ and a smaller step-size $\alpha$ to punish inaccurate (chosen) workers less in the weight update.

\section{Experiments}\label{S6}

In this section, we run experiments to show our mechanism's great improvement over the benchmark schemes. In Section~\ref{S6.1}, we use synthetic data to evaluate the performance of our proposed mechanisms against benchmarks. In Sections~\ref{S6.2}, we further fine-tune LLMs on real-world datasets to validate our mechanisms' great advantages.

\subsection{Experiments on Synthetic Data}\label{S6.1}

For ease of exposition and illustration, we first consider $N$=5 workers and fix prompt number $m_t$=20. We randomly generate the binary realized ground truth $p_j^t \in \{1, 0\}$ for each prompt $j$$\in$$[m_t]$. We add random noise from the set of $\{[0, 0.1], [0.45, 0.55], [0.55, 0.65], [0.65, 0.75], [0.75, 0.85]\}$ to workers' preferences in order. For example, worker 1 is the most accurate with his preference noise in the range of [0, 0.1] on the realized ground truth, and worker 5 is the least accurate with his preference noise in the range of [0.75, 0.85] on the realized binary ground truth. 

Figure~\ref{fig0} shows how each worker $i$'s weight $w_i^t$ evolves over time slot $t$$\in$$ [T]$ under full worker feedback, where each worker's feedback is available to the platform in each time slot $t$. As $t$ increases to 500, our mechanism manages to assign the largest weight to the most accurate worker 1 and assign much smaller weights to the remaining ones especially for the most inaccurate worker 5, which is consistent with each worker's weight \eqref{e4} in Definition~\ref{def3}. 

\begin{figure}
    \centering   \includegraphics[width=0.9\linewidth]{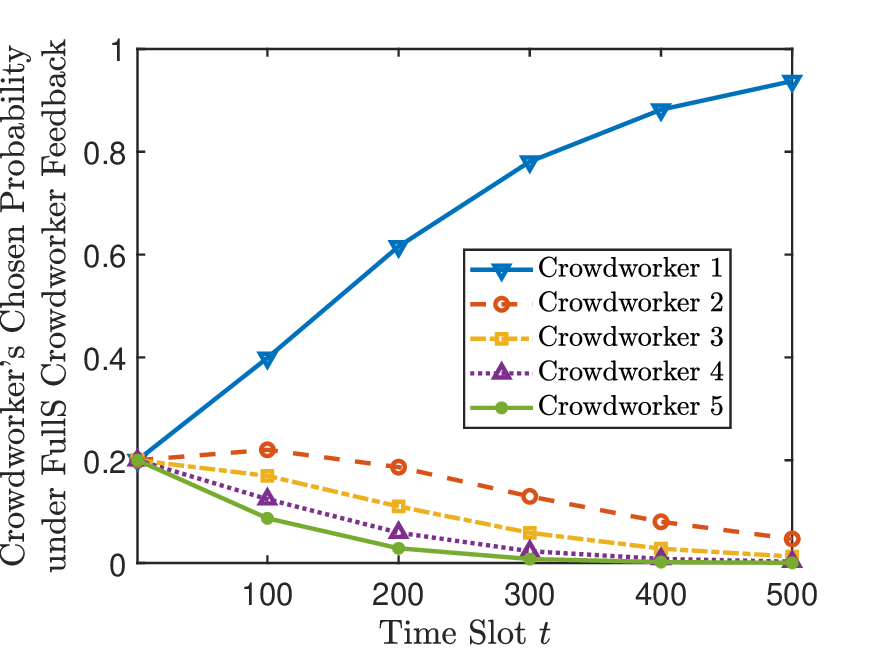}
    \caption{Each worker $i$'s chosen probability $\frac{w_i^t}{\sum_{i'=1}^N w_{i'}^t}$ of our truthful online weighted aggregation versus time slot $t$ under full feedback. Here we fix worker number $N=5$, total time slot $T=500$, and prompt number $m_t=20$.} 
    \label{fig1}
\end{figure}

Figure~\ref{fig1} is similar to Fig.~\ref{fig0} and shows how each worker $i$'s chosen probability $\frac{w_i^t}{\sum_{i'=1}^N w_{i'}^t}$ evolves over time slot $t$. As $t$ increases to 500, our mechanism manages to assign the largest probability to the most accurate worker 1 (over 0.9) and assign near-zero probabilities to the remaining ones especially for the most inaccurate worker 5, which further verifies the effectiveness of our mechanism in Definition~\ref{def3}.

\begin{figure}
    \centering \includegraphics[width=0.9\linewidth]{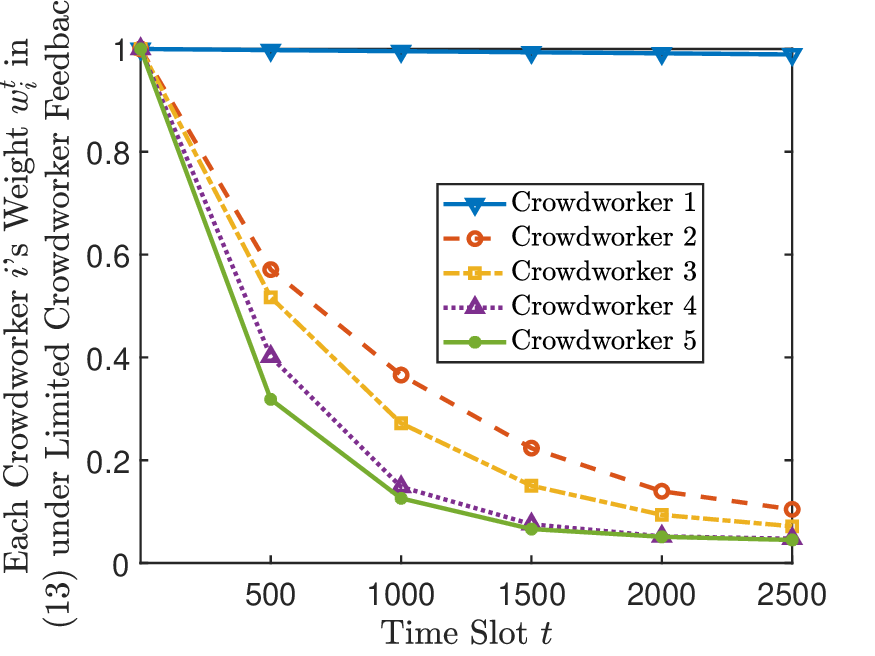}
    \caption{Each worker $i$'s weight $w_i^t$ of our truthful online mixed selection versus time slot $t$ under limited worker feedback. Here we fix worker number $N=5$, total time slot number $T=2500$, and prompt number $m_t=20$.} 
    \label{fig2}
\end{figure}

\begin{figure}
    \centering \includegraphics[width=0.9\linewidth]{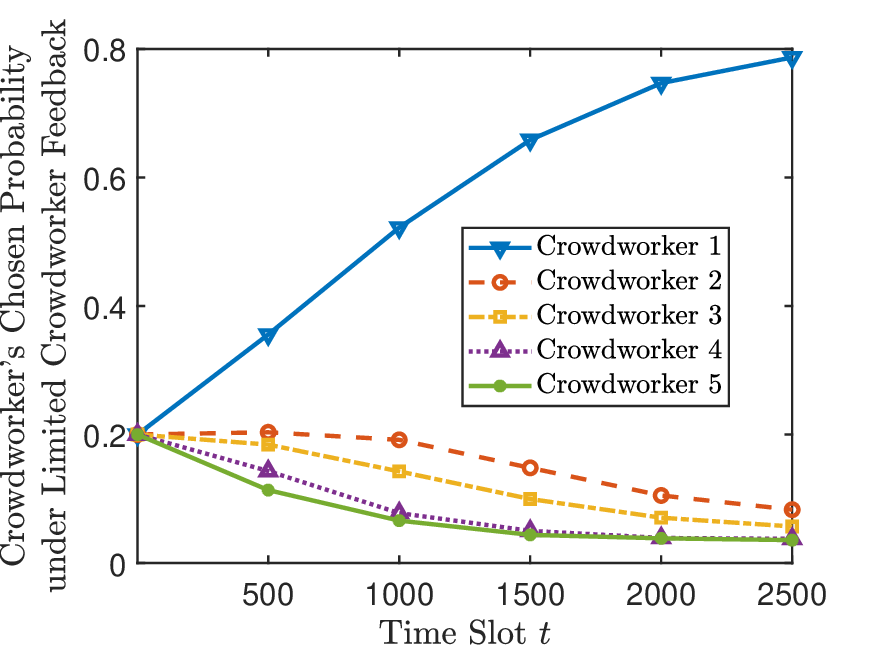}
    \caption{Each worker $i$'s chosen probability $\frac{w_i^t}{\sum_{i'=1}^N w_{i'}^t}$ of our truthful online mixed selection versus time slot $t$ under limited worker feedback. Here we fix worker number $N=5$, total time slot number $T=2500$, and prompt number $m_t=20$.} 
    \label{fig3}
\end{figure}

Figure~\ref{fig2} shows how each worker $i$'s weight $w_i^t$ evolves over time slot $t$$\in$$ [T]$ under limited worker feedback, where only one selected worker's sensing feedback is available to the platform in each time slot $t$. It indicates that our mechanism still manages to assign the largest weight to the most accurate worker 1 overtime, which is consistent with \eqref{e4-} in Definition~\ref{def3-}. Yet, as the platform only has access to one selected worker's feedback in each time slot, our mechanism needs more time slots to assign relatively small weights to the remaining ones compared to Fig.~\ref{fig0} under full feedback. 

Figure~\ref{fig3} is similar to Fig.~\ref{fig2} and shows how each worker $i$'s chosen probability $\frac{w_i^t}{\sum_{i'=1}^N w_{i'}^t}$ evolves over time slot $t$$\in$$ [T]$ under limited feedback. It indicates that our mechanism manages to assign the largest probability to the most accurate worker 1 (near 0.8), which further verifies the effectiveness of our mechanism in Definition~\ref{def3-}. Yet, under limited feedback, our mechanism needs more time slots (as $t$ increases to 2500) to converge to assign relatively small probabilities to the remaining ones.

\begin{figure}
    \centering    \includegraphics[width=0.9\linewidth]{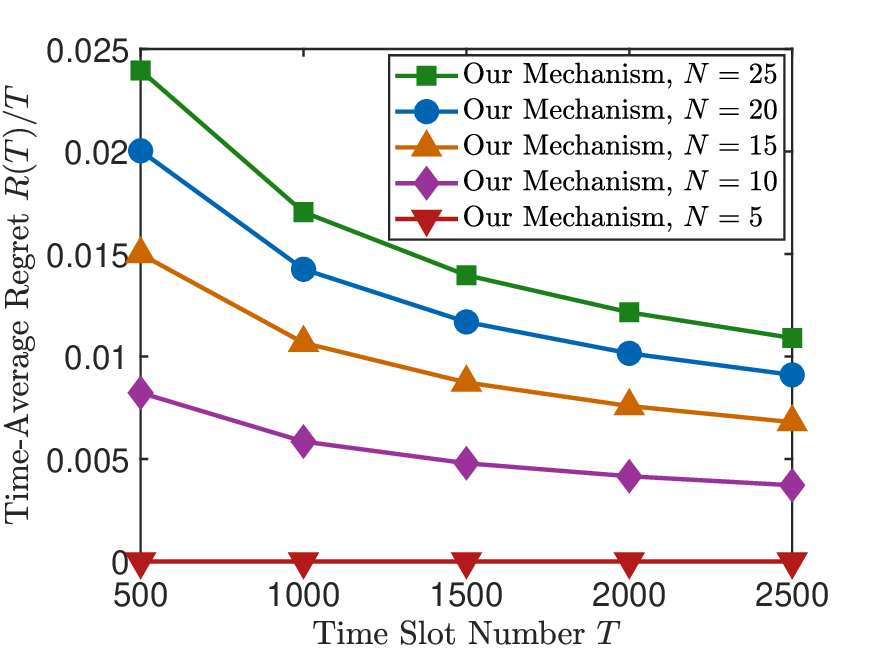}
    \caption{Time-average regrets of our mechanism under full worker feedback versus the time slot number $T$ for different worker numbers $N \in \{5, 10, 15, 20, 25\}$. Here we fix prompt number $m_t=20$ and choose the learning rate $\alpha=\frac{2}{3}\sqrt{\frac{2\ln N}{T}}$ following Theorem~\ref{Thm1}.}
    \label{fig-nfig1}
\end{figure}

{{Figure~\ref{fig-nfig1}} shows the time-average regret $R(T)/T$ of our mechanism under full worker feedback versus the time slot number $T$ for different worker numbers $N \in \{5, 10, 15, 20, 25\}$. We find that as the worker number $N$ increases, our mechanism needs more time slots to converge because the platform must explore and down-weight more inaccurate workers before identifying the most accurate one. Nevertheless, across all tested worker scales, our mechanism's time-average regret consistently decreases with $T$ and approaches 0, verifying the sublinear $\mathcal{O}(\sqrt{T})$ regret guarantee in Theorem~\ref{Thm1}. This further demonstrates that our mechanism scales gracefully with the worker number under full feedback.}

\begin{figure}
    \centering  \includegraphics[width=0.9\linewidth]{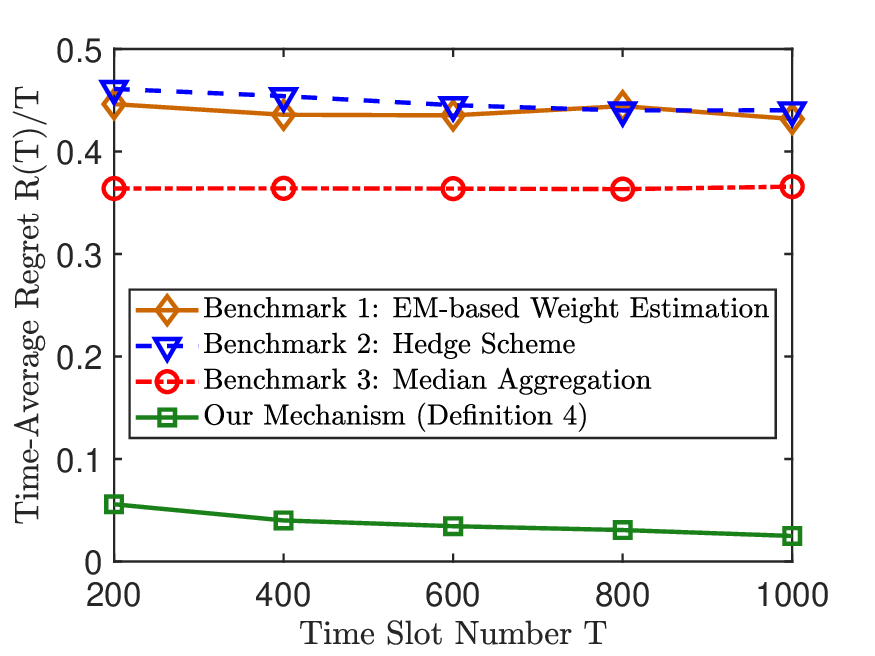}
    \caption{Time-average regrets of benchmarks 1, 2, 3 and our mechanism under full worker feedback versus the time slot number $T$, respectively. Here we choose the same learning rate $\alpha=\eta=\frac{2}{3}\sqrt{\frac{2\ln N}{T}}$ for both benchmark 2 and our method, consider a large worker scale of $N=50$ and fix prompt number $m_t=20$.} 
    \label{fig4}
\end{figure}
\begin{figure}
    \centering    \includegraphics[width=0.9\linewidth]{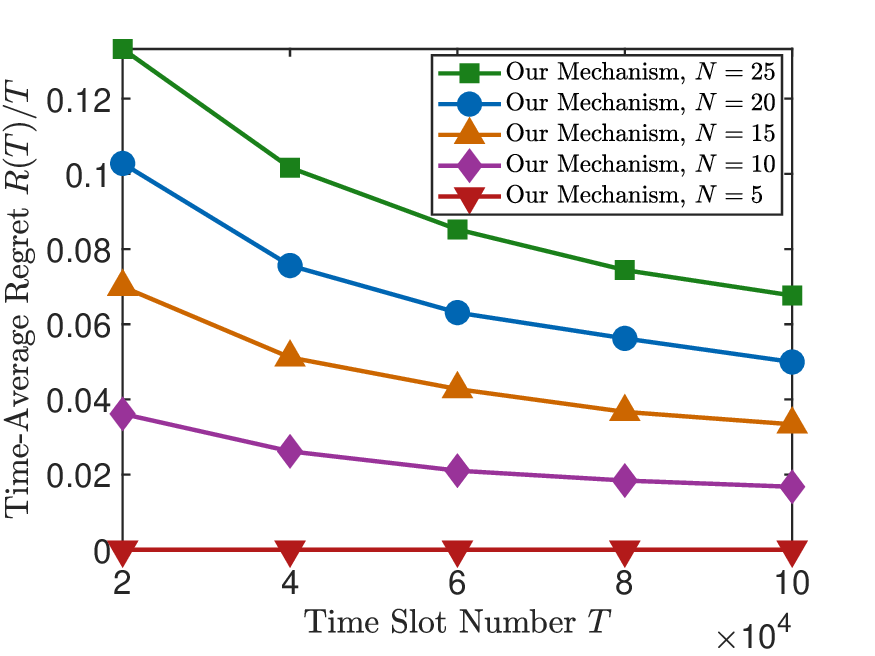}
    \caption{Time-average regrets of our mechanism under limited worker feedback versus the time slot number $T$ for different worker numbers $N \in \{5, 10, 15, 20, 25\}$. Here we fix prompt number $m_t=20$ and choose $\alpha=\sqrt{\frac{\ln N}{7NT}}$ and $\beta=2\sqrt{\frac{N\ln N}{7T}}$ following Theorem~\ref{Thm1-}.}
    \label{fig-nfig2}
\end{figure}

Figure~\ref{fig4} shows the time-average regrets $R(T)/T$ of benchmarks 1, 2, 3, and our mechanism under full worker feedback versus the time slot number $T$ under a large worker scale of $N=50$. We find that the platform's time-average regret is greatly reduced by our mechanism from the three benchmarks. Besides, time-average regrets of benchmarks 1-3 do not decrease with $T$ and are always greater than zero, respectively. Differently, our mechanism's time-average regret decreases with $T$ and tends to 0, consistent with Theorem~\ref{Thm1}.

{{Figure~\ref{fig-nfig2}} shows the time-average regret $R(T)/T$ of our mechanism under limited worker feedback versus the time slot number $T$ for different worker numbers $N \in \{5, 10, 15, 20, 25\}$. Compared with the full feedback case in Fig.~\ref{fig-nfig1}, the convergence requires substantially more time slots because only one worker's feedback is available per time slot, reducing the per-slot per-worker observation rate to $1/N$. The effect of $N$ on convergence is therefore more pronounced under limited feedback. Nevertheless, our mechanism's time-average regret still decreases with $T$ and approaches 0 across all tested worker scales, verifying the sublinear $\mathcal{O}(\sqrt{T})$ regret guarantee in Theorem~\ref{Thm1-}.}

\begin{figure}
    \centering  \includegraphics[width=0.9\linewidth]{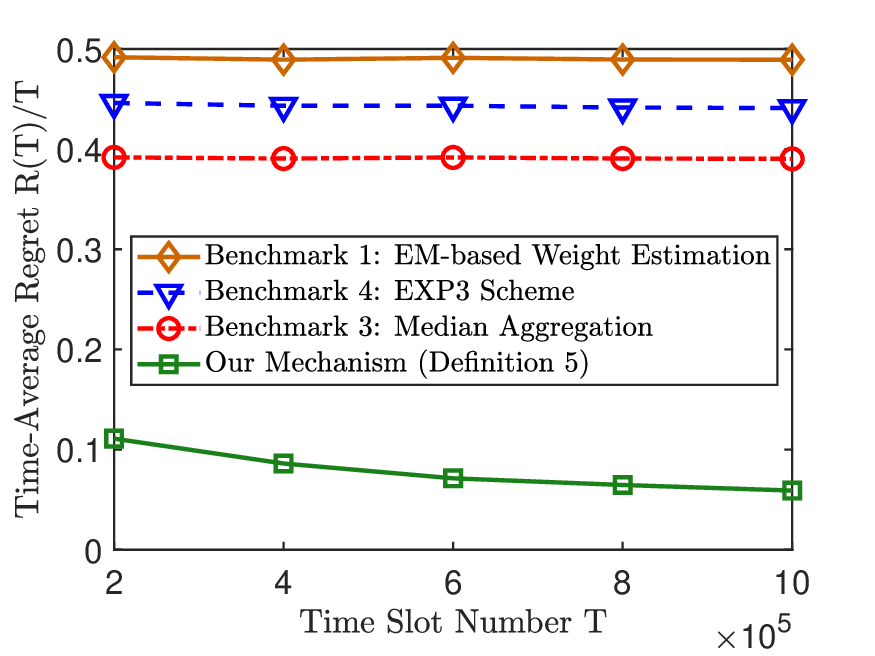}
    \caption{Time-average regrets of benchmarks 1, 3, 4 and our mechanism under limited worker feedback versus the time slot number $T$, respectively. Here we choose $\alpha = \eta =\sqrt{\frac{\ln N}{7NT}}$ and $\beta = 2\sqrt{\frac{N\ln N}{7T}}$ for both benchmark 4 and our online mixed selection scheme, consider a large worker scale of $N=50$ and fix the prompt number $m_t=20$. For a fair comparison, we randomly select 10 workers' feedback out of the total 50 for the median scheme to make a decision. } 
    \label{fig5}
\end{figure}

Figure~\ref{fig5} shows the time-average regrets $R(T)/T$ of benchmarks 1, 3, 4, and our mechanism under limited feedback versus the time slot number $T$ under a large worker scale of $N=50$. We find that the platform's time-average regret is still greatly reduced by our mechanism from the three benchmarks. Besides, time-average regrets of benchmarks 1, 3, and 4 do not decrease with $T$ and are always greater than zero, respectively. Note that compared with the full feedback case in Fig.~\ref{fig4}, our mechanism needs more time slots to obtain small enough time-average regret under limited feedback. Nevertheless, our mechanism's time-average regret still decreases with $T$ and tends to 0, consistent with Theorem~\ref{Thm1-}.

\subsection{Experiments on LLM Fine-Tuning on Real-World Datasets}\label{S6.2}

In this subsection, we further evaluate our proposed mechanisms' performance against the benchmark schemes by fine-tuning GPT-2 for a downstream cooperative spectrum sensing (CSS) task to extend from the conference version of this paper \cite{hao2025online}. We follow \cite{uvaydov2021deepsense} to use the real-world WiFi SDR dataset, which is collected using 5 USRP N210s SDRs running GNU Radio. The dataset contains about 500{,}000 transmissions over four 5~MHz non-overlapping channels and records each channel's primary user (PU) presence over time, occupying a total of 20~MHz bandwidth. It also contains heterogeneous secondary users' (SUs) probabilistic beliefs about PU presence on each subband from their local spectrum measurements as included in the original dataset, with different noise levels and biases modeling varying sensing quality. The complex baseband signal is segmented into non-overlapping windows of 32 samples and transformed with a 32-point FFT; the resulting spectrum is partitioned into four equal subbands.

\begin{figure}
    \centering
    \includegraphics[width=\linewidth]{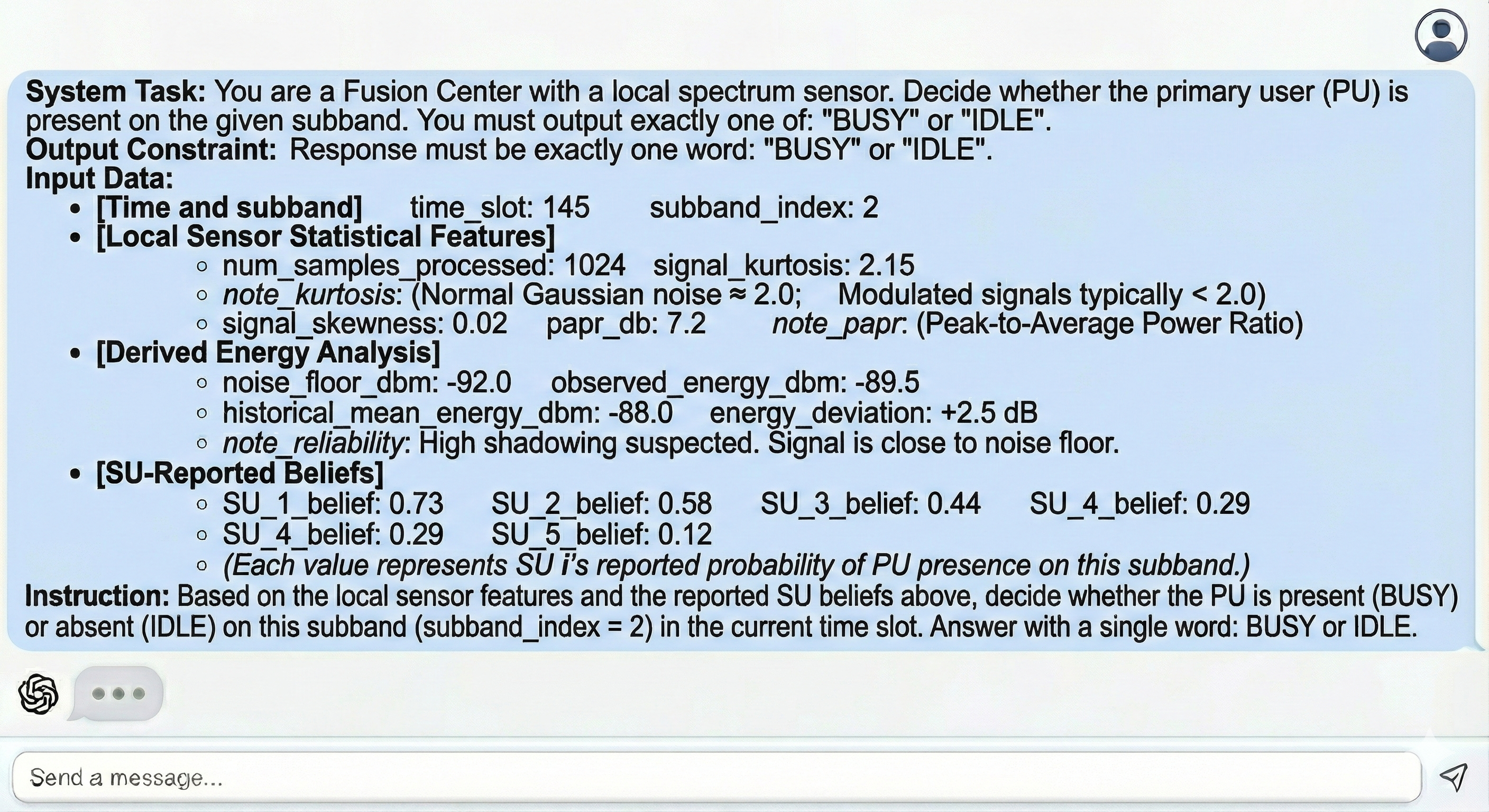}
    \caption{A prompt example for LLM fine-tuning training and testing regarding the downstream CSS task.}
    \label{fig-prmt}
\end{figure}

We choose the total time round number as $T=20$, where each round contains 20,000 data for LLM fine-tuning training and 5,000 data for testing in a time-sequential order. As shown in Fig.~\ref{fig-prmt}, we follow Section II-A to construct prompts for LLM fine-tuning training and testing. 

{We run benchmarks 1--3 and 5--6 and our truthful online aggregation mechanism in Definition~4 to aggregate SU reports, respectively, where {benchmark~5 (NeurIPS 2025 \cite{liu2025online}) is a recent state-of-the-art online weighted-majority-voting scheme also designed for LLM fine-tuning with preference data aggregated from multiple experts, and benchmark~6 is a Bayesian Beta-Binomial aggregation baseline over a latent worker-accuracy parameter.}} This yields per-subband scores interpreted as the probability of PU presence to construct human annotation sensing datasets. The sensing datasets are then used to update a CSS policy using DPO to solve a KL-regularized optimization problem against the reference policy as in \eqref{eq-dpo}, where we set $\beta=0.01$ as a parameter of evaluating the deviation from the reference policy $\pi_{\texttt{ref}}$. {GPT-2 is then fine-tuned separately using each sensing dataset with LoRA \cite{hu2022lora} adapters (rank $r=8$, scaling factor $\alpha=16$, dropout $0.05$) inserted into the attention projection matrices, while the base GPT-2 weights remain frozen. We use a maximum sequence length of 512 tokens, batch size 8, up to 3 epochs, with AdamW at learning rate $5\times 10^{-5}$. For GPT-2 (124M), the trainable LoRA parameters total approximately 0.3M, i.e., roughly 0.24\% of the full model.} At evaluation time, both the base and fine-tuned models classify by comparing the sequence log-likelihoods of the two fixed completions \texttt{BUSY} and \texttt{IDLE}.

To evaluate the fine-tuning performance under different approaches, we introduce the renowned metric win-rate in the LLM fine-tuning literature (e.g., \cite{xiong2024iterative}, \cite{rafailov2024direct}, \cite{conitzer2024social}), defined as the fraction of test prompts on which the fine-tuned policy predicts the correct label while the reference model GPT-2 fails as follows:
\begin{equation*}
\text{Win-rate}
:= \frac{1}{N_{test}}\sum_{n=1}^{N_{test}} 
\mathbf{1}\!\left(
\hat y_n^{\mathrm{pol}} = y_n \ \wedge\ 
\hat y_n^{\mathrm{ref}} \neq y_n
\right),
\end{equation*}
where $N_{test}=5000$ denotes the number of testing prompt in each round, $y_n$ denotes the ground-truth PU presence, $y_n^{\mathrm{pol}}$ denotes the LLM-generated PU presence under a policy, and $y_n^{\mathrm{ref}}$ denotes the basic GPT-2 generated PU presence.

\begin{figure}
    \centering
    \includegraphics[width=\linewidth]{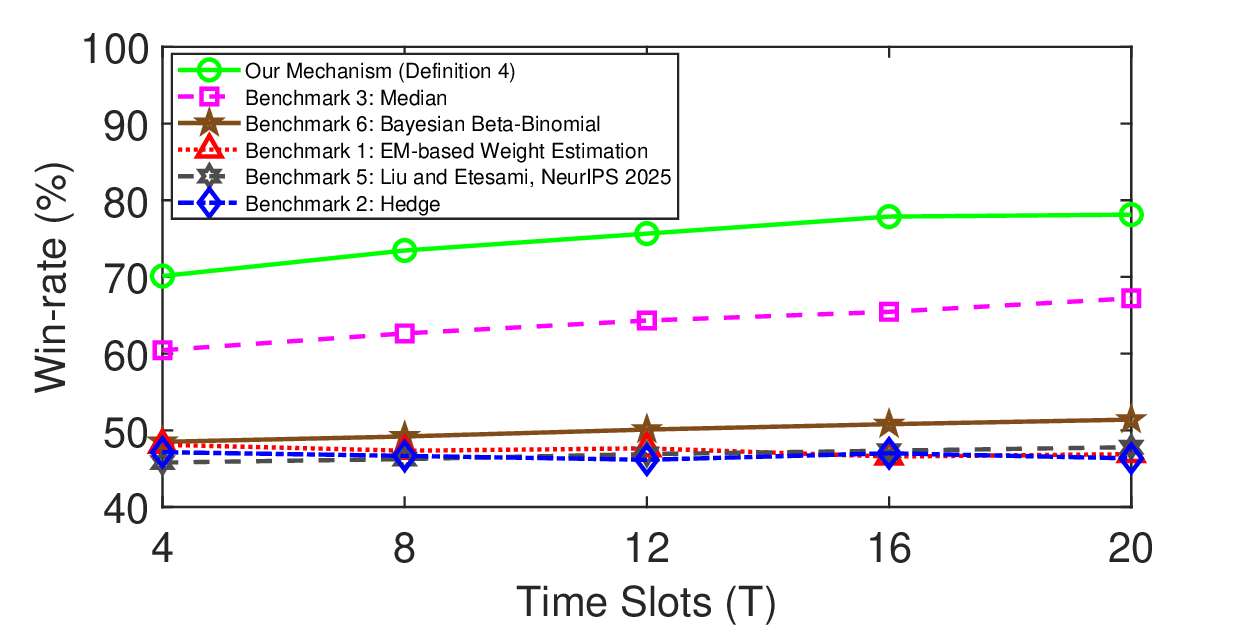}
    \caption{Win-rates after fine-tuning the GPT-2 (124M) with benchmarks 1--3, 5, 6 and our mechanism in Definition 4 versus time rounds under full SU feedback, respectively.}
    \label{fig-21}
\end{figure}
\begin{figure}
    \centering
    \includegraphics[width=\linewidth]{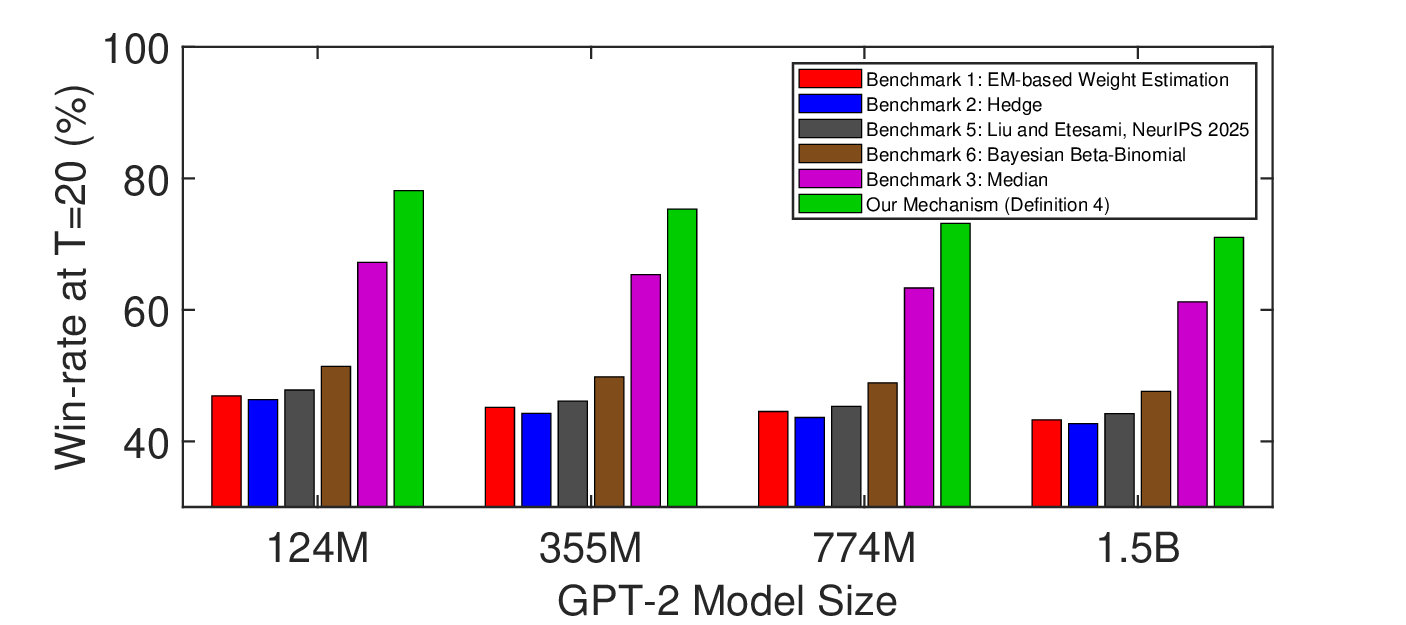}
    \caption{Win-rates after fine-tuning the GPT-2 with benchmarks 1--3, 5, 6 and our mechanism in Definition 4 at time round $T=20$ versus GPT-2 model size under full SU feedback, respectively.}
    \label{fig-22}
\end{figure}

{Figure~\ref{fig-21}} shows the win-rates versus time rounds after fine-tuning the GPT-2 (124M) with benchmarks 1--3, 5, 6 and our mechanism in Definition~4 under full SU feedback, respectively. It indicates that our mechanism achieves the highest win-rate among all the approaches. Due to untruthfulness, benchmarks 1--2 fail to obtain a meaningful human-annotation sensing dataset for fine-tuning, leading to an even degraded performance compared with the basic GPT-2 model over time. The state-of-the-art scheme (benchmark~5) and the Bayesian Beta-Binomial benchmark (benchmark~6) also underperform our mechanism, with win-rates clustering near the untruthful Hedge and EM benchmarks because neither scheme filters out strategic misreport from selfish workers. Compared to benchmark~3 of median scheme, our mechanism successfully incentivizes truthful reports from all SUs and gradually assigns the largest weights to the most accurate SUs as time evolves, leading to a substantially larger win-rate.

{Figure~\ref{fig-22}} shows win-rates after fine-tuning the GPT-2 with benchmarks 1--3, 5, 6 and our mechanism in Definition~4 at time round $T=20$ versus GPT-2 model size under full SU feedback, respectively. Note that as the GPT-2 model size increases, the basic model without fine-tuning performs better on PU presence prediction, leading to a slight decrease of win-rate for all the approaches. Nonetheless, our mechanism still achieves the highest win-rate among all the approaches across all GPT-2 model sizes, substantially outperforming both the classical and recent benchmark schemes, consistent with our truthfulness and regret guarantees in Theorems~1 and~2.

\begin{figure}
    \centering
    \includegraphics[width=\linewidth]{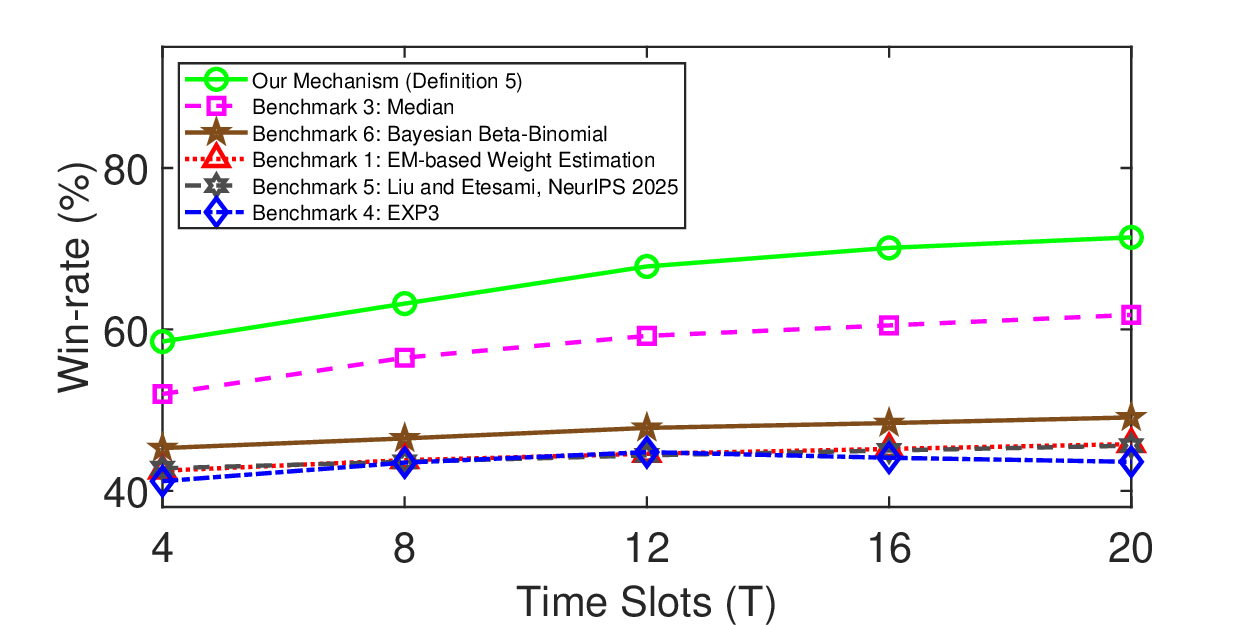}
    \caption{Win-rates after fine-tuning the GPT-2 (124M) with benchmarks 1, 3, 4, 5, 6 and our mechanism in Definition 5 versus time rounds under limited SU feedback, respectively. Here we choose limited SU number as 3 for benchmark 3 of the median scheme for a fair comparison.}
    \label{fig-23}
\end{figure}
\begin{figure}
    \centering
    \includegraphics[width=\linewidth]{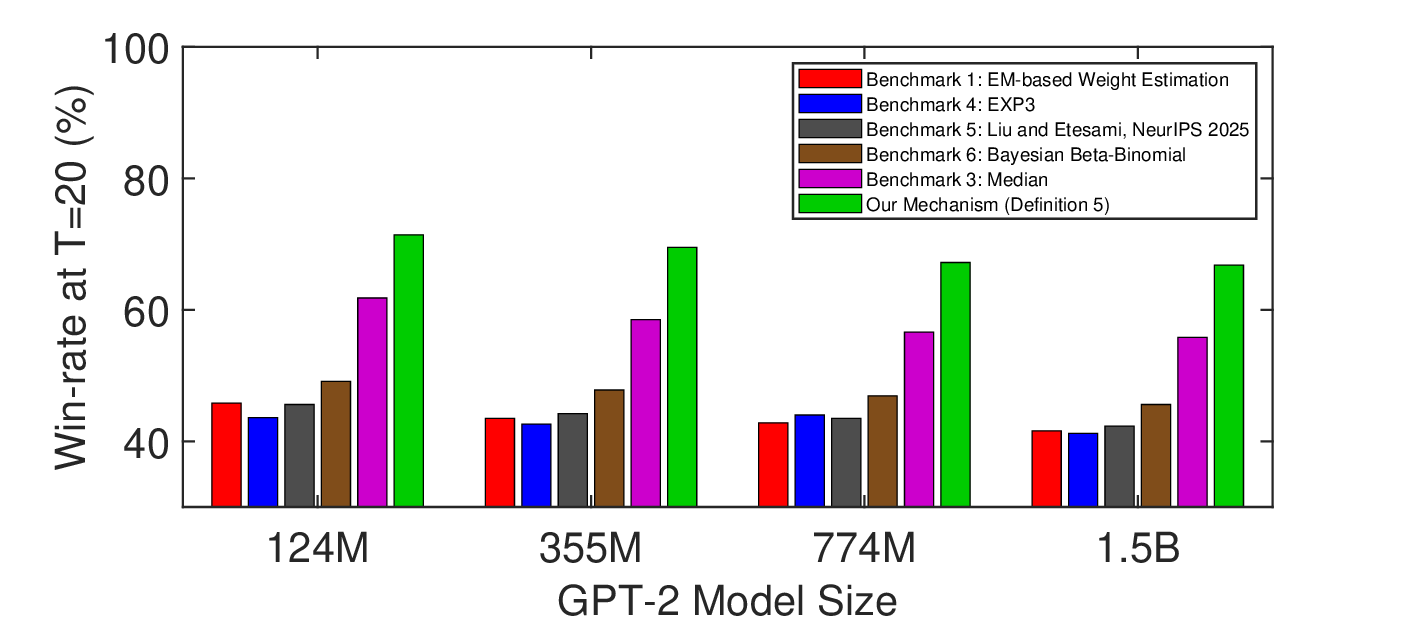}
    \caption{Win-rates after fine-tuning the GPT-2 with benchmarks 1, 3, 4, 5, 6 and our mechanism in Definition 5 at time round $T=20$ versus GPT-2 model size under limited SU feedback, respectively.}
    \label{fig-24}
\end{figure}

{Figure~\ref{fig-23}} shows the win-rates after fine-tuning the GPT-2 (124M) with benchmarks 1, 3, 4, 5, 6 and our mechanism in Definition~5 versus time rounds under limited SU feedback, respectively. With limited SU feedback, the sensing accuracy of all the approaches degrades and we expect a decrease in all the final win-rates. Nevertheless, our mechanism still achieves the highest win-rate among all the approaches, and benchmarks~5 and~6 again underperform for the same reasons of untruthfulness under strategic workers. Similar insights can also be found in Figure~\ref{fig-24}.

{Figure~\ref{fig-nfig3}} shows the win-rates of our mechanism in Definition~4 across three worker scales $N \in \{5, 15, 25\}$ and four GPT-2 model sizes, with the median benchmark included as a reference. As $N$ increases, our mechanism needs more time slots to identify and up-weight the most accurate workers, leading to a mild degradation in win-rate at fixed $T=20$ under all model sizes. Nevertheless, across all tested worker scales and model sizes, our mechanism consistently outperforms the median benchmark (best among the baselines), demonstrating that our mechanism scales gracefully with the number of workers under full feedback.

{Figure~\ref{fig-nfig4}} shows the counterpart results under limited SU feedback. Compared with the full feedback case in Fig.~\ref{fig-nfig3}, the degradation with $N$ is more pronounced because only one worker's feedback is available per time slot, reducing the per-slot per-worker observation rate to $1/N$ and slowing the convergence. Nevertheless, our mechanism still beats the median benchmark (best among the baselines) across all tested worker scales and model sizes, confirming that the mechanism remains effective under the more challenging limited-feedback setting.

\begin{figure}
    \centering
    \includegraphics[width=\linewidth]{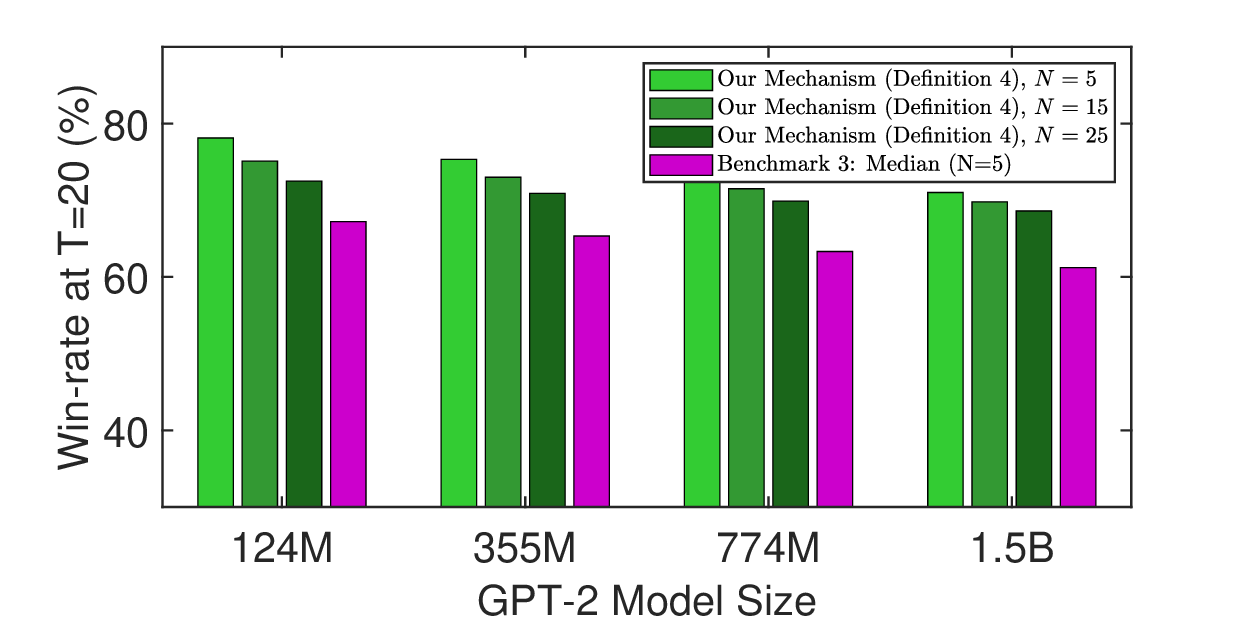}
    \caption{Win-rates after fine-tuning the GPT-2 with our mechanism in Definition 4 and benchmark 3 (median) at time round $T=20$ versus GPT-2 model size under full SU feedback, for worker numbers $N \in \{5, 15, 25\}$.}
    \label{fig-nfig3}
\end{figure}

\section{Conclusion}\label{S7}

In this paper, we study truthful online preference aggregation for LLM fine-tuning in mobile crowdsourcing against selfish workers. We prove that existing adaptive aggregation methods (e.g., EM-based weight estimation, Hedge, median, and EXP3) are not truthful and fail to identify the most accurate worker in the online setting. To address this challenge, we developed a truthful online weighted aggregation mechanism that dynamically adjusts each worker’s weight based on historical feedback accuracy. Our mechanism incentivizes truthful reporting and achieves a sublinear regret of $\mathcal{O}(\sqrt{T})$. We further extend our design to the challenging scenario where only one worker’s feedback is available per time slot, and we prove that truthful reporting and $\mathcal{O}(\sqrt{T})$ regret can still be guaranteed. Experiments on LLM fine-tuning with real-world datasets demonstrated substantial performance improvements over existing benchmark schemes.

A promising direction for future work is to consider heterogeneous coverage constraints in mobile applications. For example, a worker may only observe a subset of prompts or environmental conditions (e.g., specific routes in navigation or specific frequency bands in spectrum sensing). In such cases, the platform must assign tasks to workers under partial observability, making it non-trivial to extend our analysis to both full and limited-feedback settings.

\begin{figure}
    \centering
    \includegraphics[width=\linewidth]{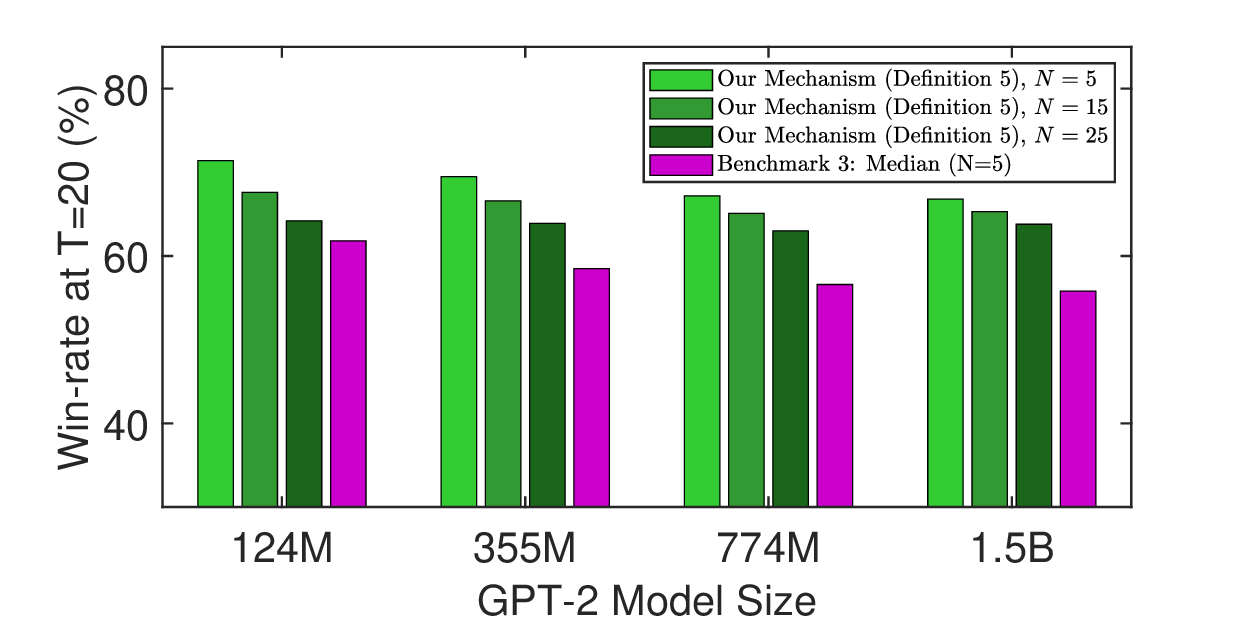}
    \caption{Win-rates after fine-tuning the GPT-2 with our mechanism in Definition 5 and benchmark 3 (median) at time round $T=20$ versus GPT-2 model size under limited SU feedback, for worker numbers $N \in \{5, 15, 25\}$.}
    \label{fig-nfig4}
\end{figure}


\begin{IEEEbiography}[{\includegraphics[width=1in,height=1.25in,clip,keepaspectratio]{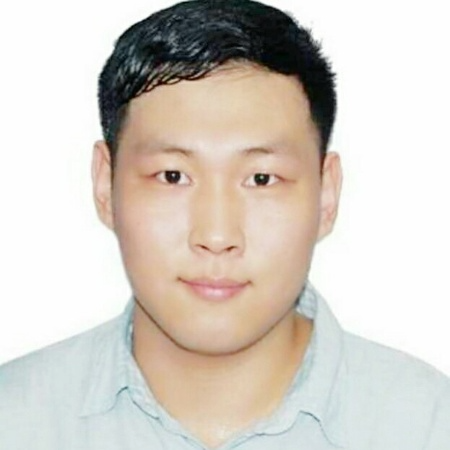}}]{Shugang Hao} (M'22)
 received the Ph.D. degree from Singapore University of Technology and Design (SUTD) in 2022. He is a postdoctoral research fellow at SUTD from Sep. 2022.  
His research interests are LLM for networks, networking for LLM, LLM fine-tuning, game theory and mechanism design. He served as the web chair of ACM SenSys 2024, the local arrangement chair of AIoTSys 2024 and the local arrangement chair of IEEE WiOpt 2023. 
\end{IEEEbiography}

\begin{IEEEbiography}[{\includegraphics[width=1in,height=1.25in,clip,keepaspectratio]{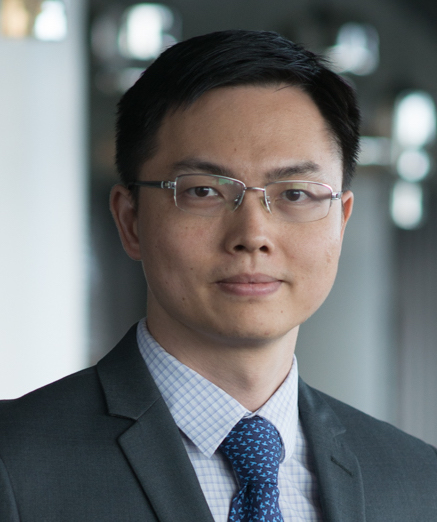}}]{Lingjie Duan}
(S'09-M'12-SM'17) received the
Ph.D. degree from The Chinese University of
Hong Kong in 2012. He is a full Professor in the Internet of Things Thrust and the Artificial Intelligence Thrust at Hong Kong University of Science and Technology, Guangzhou. He was an Associate Professor and Associate Head of Pillar
of Engineering Systems and Design with the
Singapore University of Technology and Design
(SUTD). In 2011, he was a Visiting Scholar at University
of California at Berkeley, Berkeley, CA,
USA. His research interests include network economics
and game theory, cognitive communications,
and cooperative networking. He is an Associate
Editor of IEEE Transactions on Mobile Computing and IEEE Transactions on Networking. He was an
Editor of IEEE Transactions on Wireless Communications and IEEE Communications Surveys and Tutorials. He also served
as a Guest Editor of the IEEE Journal on Selected Areas in Communications Special Issue on Human-in-the-Loop Mobile Networks, as well
as IEEE Wireless Communications Magazine.  He received the SUTD Excellence
in Research Award in 2016 and the 10th IEEE ComSoc Asia-Pacific
Outstanding Young Researcher Award in 2015. He served as the general chair of IEEE WiOpt 2023.
\end{IEEEbiography}

\newpage
 ~\newpage

\appendices

\section{Proof of Lemma~\ref{L0}}\label{A00}

First, we prove that Benchmark~1 (EM-based Weight Estimation) is not truthful.
We construct a counterexample with horizon $T=2$ showing that truthful reporting is not a dominant strategy. The total utility for worker $k$ is
\[
u_k=\sum_{t=1}^2 w_k^t = w_k^1+w_k^2.
\]
Since the initial weights are uniform ($w_i^1=w_0,\ \forall i$) and independent of reports, maximizing $u_k$ is equivalent to maximizing the expected weight at the next time step, i.e., $\mathbb{E}[w_k^2]$.

Consider time $t=1$ and a worker $k$ with private belief $q:=\Pr(p^1=1)\in(0,1)$ about the hidden binary outcome $p^1\in\{0,1\}$. We compare two strategies:
\begin{itemize}
    \item \textit{Strategy $S_{\text{truth}}$:} report truthfully by sampling from the belief, i.e., $\hat y_k^1\sim \mathrm{Bernoulli}(q)$ (so $\Pr(\hat y_k^1=1)=q$).
    \item \textit{Strategy $S_{\text{lie}}$:} report a constant extreme to align with the majority, $\hat y_k^1=1$ (\emph{Always High}).
\end{itemize}

We instantiate the EM benchmark with a standard Dawid--Skene likelihood. Conditional on the latent truth $p^1$, each worker $i$ reports $\hat y_i^1\in\{0,1\}$ correctly with probability $w_i\in(0,1)$:
\[
\Pr(\hat y_i^1=p^1\mid w_i)=w_i,
\qquad
\Pr(\hat y_i^1\neq p^1\mid w_i)=1-w_i.
\]
We place an independent Beta prior $w_i\sim \mathrm{Beta}(\alpha,\beta)$ with fixed $\alpha,\beta>0$. The prior on the truth is $\Pr(p^1=1)=\pi\in(0,1)$; for simplicity take $\pi=\frac12$.
Assume all initial weights are identical and satisfy the standard reliability assumption:
\[
w_i^{(1)}=w_i^1=w_0\in\Big(\frac12, 1\Big)\qquad \forall i.
\]

Given current parameter estimates $\mathbf w^{(1)}$, the E-step computes the posterior
{\footnotesize
\begin{align*}
&\gamma := \Pr(p^1=1\mid \hat{\mathbf y}^1,\mathbf w^{(1)}) \\
&= 
\frac{\pi\prod_{i=1}^N \Pr(\hat y_i^1\mid p^1=1,w_0)}
{\pi\prod_{i=1}^N \Pr(\hat y_i^1\mid p^1=1,w_0)+(1-\pi)\prod_{i=1}^N \Pr(\hat y_i^1\mid p^1=0,w_0)},
\end{align*}
}
where $\hat{\mathbf y}^1=(\hat y_1^1,\dots,\hat y_N^1)$.
The M-step updates each worker's reliability by the posterior-mean rule
\[
w_i^{2}
=
\frac{\alpha + \mathbb E[\mathbf 1\{\hat y_i^1=p^1\}\mid \hat{\mathbf y}^1,\mathbf w^{(1)}]}
{\alpha+\beta+1},
\]
where
\[
\mathbb E[\mathbf 1\{\hat y_i^1=p^1\}\mid \hat{\mathbf y}^1,\mathbf w^{(1)}]
=
\gamma\,\mathbf 1\{\hat y_i^1=1\} + (1-\gamma)\,\mathbf 1\{\hat y_i^1=0\}.
\]

Assume the other $N-1$ workers all report $\hat y_i^1=1$ at time $t=1$. Define
\begin{align*}
&\gamma_1 := \Pr(p^1=1\mid \hat y_k^1=1,\ \hat y_{-k}^1=\mathbf 1,\ \mathbf w^{(1)}),  \\
&\gamma_0 := \Pr(p^1=1\mid \hat y_k^1=0,\ \hat y_{-k}^1=\mathbf 1,\ \mathbf w^{(1)}).
\end{align*}
Because all workers share the same $w_0$, the likelihood depends only on the number of ones.
Let $m$ be the number of ones among the $N$ reports at $t=1$. Under Dawid--Skene,
\[
\frac{\Pr(\hat{\mathbf y}^1\mid p^1=1,w_0)}{\Pr(\hat{\mathbf y}^1\mid p^1=0,w_0)}
=
\left(\frac{w_0}{1-w_0}\right)^{\,2m-N}.
\]
With $\pi=\frac12$, Bayes' rule gives
\[
\gamma(m)=\Pr(p^1=1\mid m)
=
\frac{1}{1+\left(\frac{1-w_0}{w_0}\right)^{\,2m-N}}.
\]
In our environment,
\[
m=
\begin{cases}
N, & \text{if } \hat y_k^1=1,\\
N-1, & \text{if } \hat y_k^1=0,
\end{cases}
\]
so $\gamma_1=\gamma(N)$ and $\gamma_0=\gamma(N-1)$. Since $w_0>\frac12$ implies $\frac{1-w_0}{w_0}<1$, we have
\[
\gamma_1 > \gamma_0 > \frac12,
\qquad
\gamma_0+\gamma_1 > 1.
\]

Now compare the expected updated weight for worker $k$.

1) If $k$ lies (i.e., $\hat y_k^1=1$), then $\mathbf 1\{\hat y_k^1=1\}=1$, so
\[
\mathbb E[\mathbf 1\{\hat y_k^1=p^1\}\mid \hat{\mathbf y}^1,\mathbf w^{(1)}]=\gamma_1,
\]
and hence
\[
w_{k,\text{lie}}^2=\frac{\alpha+\gamma_1}{\alpha+\beta+1}.
\]

2) If $k$ is truthful (i.e., $\hat y_k^1\sim\mathrm{Bernoulli}(q)$), then conditioning on $\hat y_k^1$:
\[
w_{k,\text{truth}}^2=
\begin{cases}
\frac{\alpha+\gamma_1}{\alpha+\beta+1}, & \text{if } \hat y_k^1=1,\\[6pt]
\frac{\alpha+(1-\gamma_0)}{\alpha+\beta+1}, & \text{if } \hat y_k^1=0,
\end{cases}
\]
so taking expectation over $\hat y_k^1$ gives
\[
\mathbb E[w_{k,\text{truth}}^2]
=
q\cdot \frac{\alpha+\gamma_1}{\alpha+\beta+1}
+
(1-q)\cdot \frac{\alpha+1-\gamma_0}{\alpha+\beta+1}.
\]
Therefore,
\begin{align*}
&\mathbb E[w_{k,\text{lie}}^2]-\mathbb E[w_{k,\text{truth}}^2] \\
=&
(1-q)\cdot \frac{\gamma_1+\gamma_0-1}{\alpha+\beta+1}.
\end{align*}
Since $\gamma_0+\gamma_1>1$, the numerator is positive. For any belief $q<1$, the difference is strictly positive. It follows that
\[
\mathbb E[w_{k,\text{lie}}^2]>\mathbb E[w_{k,\text{truth}}^2].
\]
Thus $u_k(\text{lie})>u_k(\text{truth})$, so truthful reporting is not a dominant strategy. This proves that Benchmark~1 is not truthful.

Next, we prove that Benchmark~1 leads to a non-vanishing regret.
Fix any horizon $T$ and consider the following construction. There exists one worker $o\in[N]$ such that
\[
\mathcal{P}_o(y_{l_j}^t > y_{l_j'}^t \mid x_j^t) = p_j^t \quad \forall j\in[m_t], t\in[T],
\]
i.e., worker $o$ is always correct. For every other worker $i\ne o$, assume they always report the opposite signal:
\[
\hat{\mathcal{P}}_i(y_{l_j}^t > y_{l_j'}^t \mid x_j^t) = 1-p_j^t \quad \forall i\ne o, j\in[m_t], t\in[T].
\]
Assuming $N \ge 3$, the $N-1$ adversarial workers form a majority. Under the EM instantiations with $w_0 > 0.5$, due to the existence of multiple stationary points of the likelihood function and the dependence of EM on initialization and sample realizations, there exists a realization (and corresponding EM trajectory) under which the algorithm reinforces the majority cluster because they are statistically consistent with each other. Consequently, the inferred latent variable converges to the majority opinion:
\[
\hat{\mathcal{P}}(y_{l_j}^t > y_{l_j'}^t \mid x_j^t) = 1-p_j^t.
\]
Accordingly, the best fixed worker in hindsight is $i^*=o$, which yields
\[
\min_{i\in[N]} \sum_{t=1}^T \frac{1}{m_t}\sum_{j=1}^{m_t} (\mathcal{P}_i - p_j^t)^2 = 0.
\]
However, the cumulative aggregation loss over $T$ slots is
\[
\sum_{t=1}^{T} \frac{1}{m_t}\sum_{j=1}^{m_t} (\hat{\mathcal{P}} - p_j^t)^2 
= T.
\]
Finally, the regret of the EM-based scheme is
\begin{align*}
R_1(T) &= T - 0 = \mathcal{O}(T),
\end{align*}
which implies $\lim_{T\to\infty} \frac{R_1(T)}{T} > 0$. We finish the proof.

\section{Proof of Lemma~\ref{L1}}\label{A0}

First, we prove that the Hedge Scheme is not truthful.
We want to show that for any worker number $N\ge 2$ and time horizon $T\ge 2$, there exists a belief $q\in(0,1)\setminus\{\tfrac12\}$ and a strategy profile of other workers such that worker $i$ achieves a strictly higher expected long-term utility by misreporting in the first time slot. Assume $m_t=1$ for all $t\in[T]$.

Let worker $i$ have private belief $q=\Pr(p^1=1)$ with $q\notin\{0,\tfrac12,1\}$. The worker's long-term utility is
\[
u_i=\sum_{t=1}^T w_i^t.
\]
The Hedge weight update rule is
\[
w_i^{t+1}=w_i^t\cdot \exp\!\left(-\eta\big(\hat P_i^t-p^t\big)^2\right),\qquad w_i^1=1,
\]
where $\eta>0$ is the learning rate.

Due to the multiplicative structure, for any $t\ge 2$,
\[
w_i^t=w_i^2\cdot \prod_{\tau=2}^{t-1}\exp\!\left(-\eta\big(\hat P_i^\tau-p^\tau\big)^2\right).
\]
Define the future weight multiplier
\[
M_i:=1+\sum_{t=2}^T \prod_{\tau=2}^{t-1}\exp\!\left(-\eta\big(\hat P_i^\tau-p^\tau\big)^2\right),
\]
which satisfies $M_i\ge 1$. Then the utility can be written as
\[
u_i = 1 + w_i^2\cdot M_i.
\]
Consider two reporting strategies for worker $i$ in slot $t=1$:
\begin{itemize}
\item Truthful strategy $S_T$: report $\hat P_i^1=q$.
\item Deviating strategy $S_D$: report $\hat P_i^1=r\neq q$, where $r$ is chosen to maximize $\mathbb E[w_i^2]$.
\end{itemize}
We then have
\[
\mathbb E[u_i\mid S]=1+\mathbb E[w_i^2\mid S]\cdot \mathbb E[M_i],\qquad S\in\{S_T,S_D\}.
\]

We analyze $\mathbb E[w_i^2]$ as a function of the first report $r$. Since $p^1\in\{0,1\}$ with $\Pr(p^1=1)=q$, we have
\[
\mathbb E[w_i^2]= q\exp\!\left(-\eta(r-1)^2\right) + (1-q)\exp\!\left(-\eta r^2\right) =: F(r).
\]
Its derivative is
\[
F'(r)=-2\eta\left[q(r-1)e^{-\eta(r-1)^2}+(1-q)r e^{-\eta r^2}\right].
\]
Evaluating at $r=q$ yields
\[
F'(q)=-2\eta\,q(1-q)\left(e^{-\eta q^2}-e^{-\eta(1-q)^2}\right).
\]
If $q>\tfrac12$, then $q^2>(1-q)^2$ and thus $e^{-\eta q^2}<e^{-\eta(1-q)^2}$, implying $F'(q)>0$.
If $q<\tfrac12$, then $q^2<(1-q)^2$ and thus $e^{-\eta q^2}>e^{-\eta(1-q)^2}$, implying $F'(q)<0$.
Therefore, for $q\neq \tfrac12$, the truthful report $r=q$ is not a local maximizer of $F(r)$; hence there exists $r^\star\neq q$ such that
\[
\mathbb E[w_i^2\mid S_D] > \mathbb E[w_i^2\mid S_T].
\]
Since $\mathbb E[M_i]>0$, it follows that $\mathbb E[u_i\mid S_D]>\mathbb E[u_i\mid S_T]$, proving that the Hedge scheme is not truthful.

\section{Proof of Lemma~\ref{L1-1}}\label{A4}

We want to prove $R_3(T) = \mathcal{O}(T)$ with a possible sequence of workers' preferences. In particular, we consider $\mathcal{P}_o(y_{l_j}^t \succ y_{l_j'}^t|x_j^t) = p_j^t$ holds for one particular $o \in [N]$ with any $j \in [m_t]$ and $t \in [T]$. Further, we consider  $(\mathcal{\hat{P}}_{j, k_m}^t-p_j^t)^2=c_j^t$ for $j \in [m_t]$ and $t \in [T]$, where $\mathcal{\hat{P}}_{j, k_m}^t$ denotes the median of workers' feedback $\{\mathcal{\hat{P}}_i(y_{l_j}^t \succ y_{l_j'}^t|x_j^t)\}_{i=1}^N$ and $c_j^t \in [\frac{1}{2}, 1]$. Accordingly, we have the best-fixed worker in hindsight is $i^* = o$, which brings
    \begin{align*}
        \min_{i \in [N]} \sum_{t=1}^T\frac{1}{m_t}\sum_{j=1}^{m_t}\bigg(\mathcal{P}_i(y_{l_j}^t \succ y_{l_j'}^t|x_j^t) - p_j^t\bigg)^2 = 0.
    \end{align*}
    However, with the platform's median scheme, we have the cumulative aggregation loss over $T$ slots as follows:
    \begin{align*}    
    &\sum_{t=1}^T\frac{1}{m_t}\sum_{j=1}^{m_t}\bigg(\sum_{i=1}^N \frac{w_i^{t}\mathcal{\hat{P}}_i(y_{l_j}^t \succ y_{l_j'}^t|x_j^t)}{\sum_{i'=1}^N w_{i'}^{t}} - p_j^t\bigg)^2 \\
    =&\sum_{t=1}^T\frac{1}{m_t}\sum_{j=1}^{m_t}\bigg(\mathcal{\hat{P}}_{j, k_m}^t - y_j^{t}\bigg)^2 \\
=&  \sum_{t=1}^T \frac{1}{m_t}\sum_{j=1}^{m_t} c_j^t = \mathcal{O}(T),
    \end{align*}
    where the last equality holds because each $c_j^t \in [\frac{1}{2}, 1]$ and $\sum_{t=1}^T \frac{1}{m_t}\sum_{j=1}^{m_t} c_j^t$ does not vanish as $T \to \infty$.
    Finally, we have the regret of the median scheme as follows:
\begin{align*}
    R_3(T) =& \sum_{t=1}^T\frac{1}{m_t}\sum_{j=1}^{m_t}\bigg(\sum_{i=1}^N  \frac{w_i^{t}\mathcal{\hat{P}}_i(y_{l_j}^t \succ y_{l_j'}^t|x_j^t)}{\sum_{i'=1}^N w_{i'}^{t}} - p_j^t\bigg)^2 \\ 
    &- \min_{i \in [N]} \sum_{t=1}^T\frac{1}{m_t}\sum_{j=1}^{m_t}\bigg(\mathcal{P}_i(y_{l_j}^t \succ y_{l_j'}^t|x_j^t) - p_j^t\bigg)^2 \\
    =& \mathcal{O}(T).
\end{align*}
We then finish the proof.

\section{Proof of Proposition~\ref{L2}}\label{A5}

Since $g_i(\cdot)$ is strictly increasing, maximizing the utility
$u_i = g_i\!\left(\mathbb{E}\!\left[\sum_{t=1}^{T} w_i^t\right]\right)$
is equivalent to maximizing the expected cumulative weight
$\mathbb{E}\!\left[\sum_{t=1}^{T} w_i^t\right]$.
 According to our system model in Section~\ref{S2}, each worker believes that $p_j^t$$\sim$$\texttt{Bernoulli}(\mathcal{P}_i(y_{l_j}^t$$\succ$$y_{l_j'}^t|x_j^t))$, we have expectation on $ w_{i}^{t+1}$ in \eqref{e4} over $p_j^t$ is
    \begin{align*}
        &\mathbb{E}[w_{i}^{t+1}] \\
        =&  w_{i}^{t}\frac{1}{m_t}\sum_{j=1}^{m_t}\bigg[ 1  -\alpha \mathcal{P}_i(y_{l_j}^t \succ y_{l_j'}^t|x_j^t)(\mathcal{\hat{P}}_i(y_{l_j}^t \succ y_{l_j'}^t|x_j^t)- 1)^2 \\
        &- \alpha (1-\mathcal{P}_i(y_{l_j}^t \succ y_{l_j'}^t|x_j^t))(\mathcal{\hat{P}}_i(y_{l_j}^t \succ y_{l_j'}^t|x_j^t)- 0)^2 \bigg] \\
        =& w_{i}^{t}\frac{1}{m_t}\sum_{j=1}^{m_t}\bigg[1 - \alpha (\mathcal{\hat{P}}_i(y_{l_j}^t \succ y_{l_j'}^t|x_j^t) - \mathcal{P}_i(y_{l_j}^t \succ y_{l_j'}^t|x_j^t))^2 \\
        &- \alpha (\mathcal{P}_i(y_{l_j}^t \succ y_{l_j'}^t|x_j^t) - \mathcal{P}_i^2(y_{l_j}^t \succ y_{l_j'}^t|x_j^t)) \bigg],
    \end{align*}
    which is maximized at $\mathcal{\hat{P}}_i^*(y_{l_j}^t \succ y_{l_j'}^t|x_j^t) = \mathcal{P}_i(y_{l_j}^t \succ y_{l_j'}^t|x_j^t)$. To obtain the largest possible accumulative weight, each worker will truthfully feedback his preference in the first time slot and all the following time slots because any deviation will lead to smaller weights of the next and all the following time slots. We finish the proof.

\section{Proof of Lemma~\ref{Benchmark 4}} \label{A5+}

First, we prove that the EXP3 scheme is not truthful. We aim to show that for any number of workers $N \geq 2$, there exists a time horizon $T \geq 2$, a belief $q \in (0,1) \setminus \{\frac{1}{2}\}$ and a strategy profile of other workers such that worker $i$ achieves a strictly higher expected long-term utility by misreporting in the first time slot.

Consider the case where $m_t = 1$ for all $t \in [T]$. Let worker $i$ have a private belief $q = \mathcal{P}_i^1 = \Pr(p^1 = 1)$, with $q \notin \{0, \frac{1}{2}, 1\}$. The worker's long-term utility is defined as the sum of its weights:
\[
u_i = \sum_{t=1}^{T} w_i^t.
\]

The EXP3 weight update rule for a selected worker $I_t$ is:
\[
w_i^{t+1} = \begin{cases}
w_i^t \cdot \exp\left(-\eta \cdot \tilde{\ell}_i^t\right), & \text{if } i = I_t, \\
w_i^t, & \text{otherwise},
\end{cases}
\]
with the unbiased loss estimator $\tilde{\ell}_i^t$ given by:
\[
\tilde{\ell}_i^t = \begin{cases}
\frac{\hat{\ell}_i^t}{(1-\beta)\theta_i^t + \beta/N}, & \text{if } i = I_t, \\
0, & \text{otherwise},
\end{cases}
\]
where $\hat{\ell}_i^t = (\hat{\mathcal{P}}_i^t - p^t)^2$ and $\theta_i^t = \frac{w_i^t}{\sum_{i'=1}^{N} w_{i'}^t}$.

Due to the multiplicative structure of the update, the weight at any time $t \geq 2$ depends on the sequence of selections and reports. The utility can be expressed as:
\[
u_i = 1 + \sum_{t=2}^{T} w_i^t.
\]

Now, consider two reporting strategies for worker $i$ in the first time slot $t=1$:
\begin{itemize}
    \item \emph{Truthful strategy $S_T$}: Report $\hat{\mathcal{P}}_i^1 = q$.
    \item \emph{Deviating strategy $S_D$}: Report $\hat{\mathcal{P}}_i^1 = r^* \neq q$, where $r^*$ is chosen to maximize the expected weight in the subsequent slots.
\end{itemize}

To prove that the EXP3 scheme is not truthful, it is sufficient to exhibit a counter-example for some finite horizon $T \geq 2$. In particular, it is enough to take $T = 2$. In this case, the worker's long-term utility simplifies to
\[
u_i = w_i^1 + w_i^2 = 1 + w_i^2,
\]
since $w_i^1 = 1$ for all $i$. Therefore, for $T=2$, maximizing the expected long-term utility $\mathbb{E}[u_i|S]$ is equivalent to maximizing the expected weight after the first slot, $\mathbb{E}[w_i^2 \mid S]$, for $S \in \{S_T, S_D\}$.

To show that deviation is profitable, we analyze $\mathbb{E}[w_i^2 \mid S]$. Note that $w_i^2$ only changes if worker $i$ is selected in $t=1$ (i.e., $I_1 = i$). Let $\theta_i^1 = \frac{1}{N}$ be the initial selection probability.

If $I_1 = i$, then:
\begin{align*}
  w_i^2 = w_i^1 \cdot \exp\left(-\eta \cdot \tilde{\ell}_i^1\right)
  &= \exp\left(-\eta \cdot \frac{\hat{\ell}_i^1}{(1-\beta)\theta_i^1 + \beta/N}\right) \\
  &= \exp\left(-\eta \cdot \frac{\hat{\ell}_i^1}{(1-\beta)\cdot \frac{1}{N} + \frac{\beta}{N}}\right) \\
  &= \exp\left(-\eta N \cdot \hat{\ell}_i^1\right).
\end{align*}
If $I_1 \neq i$, then $w_i^2 = w_i^1 = 1$.

Therefore, the expected weight after the first slot is:
\[
\mathbb{E}[w_i^2 \mid S] = \theta_i^1 \cdot \mathbb{E}\!\left[\exp\left(-\eta N \cdot \hat{\ell}_i^1\right) \mid S\right] + (1 - \theta_i^1) \cdot 1,
\]
i.e.,
\[
\mathbb{E}[w_i^2 \mid S] = \frac{1}{N} \cdot \mathbb{E}\!\left[\exp\left(-\eta N \cdot \hat{\ell}_i^1\right) \mid S\right] + \frac{N-1}{N}.
\]

Define
\[
F(r) = \mathbb{E}\!\left[\exp\left(-\eta N \cdot \hat{\ell}_i^1\right) \,\middle|\, \hat{\mathcal{P}}_i^1 = r\right].
\]
We have:
\[
F(r) = q \cdot \exp\left(-\eta N (r-1)^2\right) + (1-q) \cdot \exp\left(-\eta N r^2\right).
\]

Its derivative is:
\begin{align*}
    &F'(r) = -2\eta N \\
    &  \left[ q(r-1)\exp\left(-\eta N (r-1)^2\right) + (1-q)r\exp\left(-\eta N r^2\right) \right].
\end{align*}

Evaluating at the truthful report $r = q$:
\[
F'(q) = -2\eta N \cdot q(1-q) \left[ \exp\left(-\eta N q^2\right) - \exp\left(-\eta N (1-q)^2\right) \right].
\]

For $q > \frac{1}{2}$, we have $(1-q)^2 < q^2$ and hence $\exp\left(-\eta N q^2\right) < \exp\left(-\eta N (1-q)^2\right)$, so $F'(q) > 0$.
For $q < \frac{1}{2}$, we have $(1-q)^2 > q^2$ and hence $\exp\left(-\eta N q^2\right) > \exp\left(-\eta N (1-q)^2\right)$, so $F'(q) < 0$.

Thus, for $q \neq \frac{1}{2}$, the truthful report $r = q$ is not a local maximum of $F(r)$. There exists some $r^* \neq q$ such that $F(r^*) > F(q)$. Consequently,
\[
\mathbb{E}[w_i^2 \mid S_D] > \mathbb{E}[w_i^2 \mid S_T].
\]

For $T=2$, the expected utility is
\[
\mathbb{E}[u_i \mid S] = 1 + \mathbb{E}[w_i^2 \mid S],
\]
so a higher expected $w_i^2$ directly implies a higher expected long-term utility. Therefore,
\[
\mathbb{E}[u_i \mid S_D] > \mathbb{E}[u_i \mid S_T],
\]
which proves that the EXP3 scheme is not truthful.

\section{Proof of Proposition~\ref{L2-}}\label{A6}

Since $g_i(\cdot)$ is strictly increasing, maximizing the utility
$u_i = g_i\!\left(\mathbb{E}\!\left[\sum_{t=1}^{T} w_i^t\right]\right)$
is equivalent to maximizing the expected cumulative weight
$\mathbb{E}\!\left[\sum_{t=1}^{T} w_i^t\right]$.
 According to our system model in Section~\ref{S5}, each worker believes that $p_j^t \sim \texttt{Bernoulli}(\mathcal{P}_i(y_{l_j}^t \succ y_{l_j'}^t|x_j^t))$ and $Pr(I_t = i) = \theta_i^t$, we have expectation on $w_{i}^{t+1}$ in \eqref{e4-} over $p_j^t$ and $I_t$ is
    \begin{align*}      
        &\mathbb{E}[w_{i}^{t+1}] \\
        =&  (1-\beta){\gamma_{i}^{t}}\frac{1}{m_t}\sum_{j=1}^{m_t}\bigg[ 1 -\alpha \bigg(1-\frac{\alpha}{\theta_i^t}\bigg) \bigg(
        \mathcal{P}_i(y_{l_j}^t \succ y_{l_j'}^t|x_j^t)\\
        &\ \ \ \ \ \ \  \ \ \ (\mathcal{\hat{P}}_i(y_{l_j}^t \succ y_{l_j'}^t|x_j^t)- 1)^2   
        + (1-\mathcal{P}_i(y_{l_j}^t \succ y_{l_j'}^t|x_j^t)) \\
        &\ \ \ \ \ \ \  \ \ \ (\mathcal{\hat{P}}_i(y_{l_j}^t \succ y_{l_j'}^t|x_j^t)- 0)^2\bigg) \bigg] + \beta \\
        =& (1-\beta){\gamma_{i}^{t}}\frac{1}{m_t}\sum_{j=1}^{m_t}\bigg[1 - \alpha\bigg(1-\frac{\alpha}{\theta_i^t}\bigg)\bigg( \big(\mathcal{\hat{P}}_i(y_{l_j}^t \succ y_{l_j'}^t|x_j^t) \\
        &\ \ \ \ \ \ \  \ \ \ - \mathcal{P}_i(y_{l_j}^t \succ y_{l_j'}^t|x_j^t)\big)^2 
        + \bigg(\mathcal{P}_i(y_{l_j}^t \succ y_{l_j'}^t|x_j^t) \\
        &\ \ \ \ \ \ \  \ \ \ - \mathcal{P}_i^2(y_{l_j}^t \succ y_{l_j'}^t|x_j^t)\bigg) \bigg)\bigg]+ \beta,
    \end{align*}
    which is maximized at $\mathcal{\hat{P}}_i^*(y_{l_j}^t \succ y_{l_j'}^t|x_j^t) = \mathcal{P}_i(y_{l_j}^t \succ y_{l_j'}^t|x_j^t)$ if $\alpha < \theta_i^t$. According to the choices of $\alpha$ and $\beta$ in Theorem~\ref{Thm1-}, we have $\theta_i^t \geq \frac{\beta}{N} = 2 \alpha > \alpha$, implying $\mathcal{\hat{P}}_i^*(y_{l_j}^t \succ y_{l_j'}^t|x_j^t) = \mathcal{P}_i(y_{l_j}^t \succ y_{l_j'}^t|x_j^t)$.  To obtain the largest possible cumulative weight, each worker will truthfully feedback his preference in the first time slot and all the following time slots because any deviation will lead to smaller weights of the next and all the following time slots. We then finish the proof.

\section{Proof of Theorem~\ref{Thm1-}}\label{A2}
According to Proposition~\ref{L2-}, we have $\mathcal{\hat{P}}_i(y_{l_j}^t \succ y_{l_j'}^t|x_j^t) = \mathcal{P}_i(y_{l_j}^t \succ y_{l_j'}^t|x_j^t)$ for all $j \in [m_t]$, $i \in [N]$ and $t \in [T]$. To derive a lower-bound on $\ln\frac{\sum_{i=1}^N \gamma_{i}^{T+1}}{\sum_{i=1}^N \gamma_{i}^1}$, we assume $\alpha N \leq \frac{\beta}{2}$ and have
\begin{align}
  \ln\frac{\sum_{i=1}^N \gamma_{i}^{T+1}}{\sum_{i=1}^N \gamma_{i}^1} =& \ln\bigg( \sum_{i=1}^N \gamma_{i}^{T+1} \bigg) - \ln\bigg( \sum_{i=1}^N \gamma_{i}^1 \bigg) \nonumber \\
  =& \ln \bigg( \sum_{i=1}^N \prod_{t=1}^T ( 1-\alpha \tilde{\ell}_i^t ) \bigg) - \ln N \nonumber \\
  \geq& \ln \bigg( \prod_{t=1}^T ( 1-\alpha\tilde{\ell}_{i^*}^t )  \bigg) - \ln N \nonumber \\
  =& \sum_{t=1}^T \ln \bigg( 1-\alpha \tilde{\ell}_{i^*}^t   \bigg) - \ln N \nonumber \\
  \geq& -\alpha\sum_{t=1}^T\tilde{\ell}_{i^*}^t -\alpha^2\sum_{t=1}^T\bigg(\tilde{\ell}_{i^*}^t\bigg)^2 - \ln N, \label{low''}
\end{align}
where we choose 
\begin{align*}
    \tilde{\ell}_{i}^t = \begin{cases}
        \frac{\ell_{i}^t(1-\alpha/\theta_i^t)}{\theta_i^t}, &\text{if} \ i = I_t, \\
        0, &\text{otherwise},
    \end{cases}
\end{align*}
$\alpha\tilde{\ell}_{i}^t \leq \alpha\frac{\ell_{i}^t(1-\alpha/\theta_{i}^t)}{\theta_{i}^t} \leq \alpha\frac{1}{\theta_{i}^t} \leq \alpha\frac{N}{\beta} \leq \frac{1}{2}$ and denote $i^*$ as the best worker in hindsight. Note that $\theta_i^t \geq \frac{\beta}{N}$ is equal to $(\gamma_i^t + \beta) N \geq \beta \sum_{i=1}^N \gamma_i^t$, which holds due to $\gamma_i^t + \beta > \beta > 0$ and $N > \sum_{i=1}^N \gamma_i^t$. The first inequality holds due to $\alpha\tilde{\ell}_{i}^t \leq \frac{1}{2}$ for all $i \in [N]$ and $t \in [T]$. The second inequality holds due to $\ln (1 - x) \geq -x -x^2$ for $x \leq \frac{1}{2}$.

To derive an upper-bound on $\ln\frac{\sum_{i=1}^N \gamma_{i}^{t+1}}{\sum_{i=1}^N \gamma_{i}^t}$, we have
\begin{align}
  \ln\frac{\sum_{i=1}^N \gamma_{i}^{t+1}}{\sum_{i=1}^N \gamma_{i}^t} 
  =& \ln \bigg( \frac{\sum_{i=1}^N \gamma_{i}^t \cdot (1-\alpha\tilde{\ell}_{i}^t)}{\sum_{i'=1}^N \gamma_{i'}^t} \bigg) \nonumber \\
  =& \ln \bigg( 1-\alpha\frac{\sum_{i=1}^N \gamma_{i}^t \cdot \tilde{\ell}_{i}^t}{\sum_{i'=1}^N \gamma_{i'}^t} \bigg) \nonumber \\
  \leq& -\alpha \frac{\sum_{i=1}^N \gamma_{i}^t \cdot \tilde{\ell}_{i}^t}{\sum_{i'=1}^N \gamma_{i'}^t} + \frac{1}{2}\alpha^2 \bigg(\frac{\sum_{i=1}^N \gamma_{i}^t \cdot \tilde{\ell}_{i}^t}{\sum_{i'=1}^N \gamma_{i'}^t}\bigg)^2 \nonumber \\
  \leq& -\alpha \frac{\sum_{i=1}^N \gamma_{i}^t \cdot \tilde{\ell}_{i}^t}{\sum_{i'=1}^N \gamma_{i'}^t} + \frac{1}{2}\alpha^2 \frac{\sum_{i=1}^N \gamma_{i}^t \cdot (\tilde{\ell}_{i}^t)^2}{\sum_{i'=1}^N \gamma_{i'}^t},  \label{upp''''}
\end{align}
where the first inequality holds due to $\ln (1 - \alpha x) \leq -\alpha x + \frac{1}{2}\alpha^2x^2$ for $x = \frac{\sum_{i=1}^N \gamma_{i}^t \cdot \tilde{\ell}_{i}^t}{\sum_{i'=1}^N \gamma_{i'}^t}$ and $\alpha x \leq 1/2$. The second inequality holds due to Jensen's inequality.
According to \eqref{upp''''}, we have
\begin{align}
    &\ln\frac{\sum_{i=1}^N \gamma_{i}^{T+1}}{\sum_{i=1}^N \gamma_{i}^1} \nonumber \\
    =& \ln\bigg(\frac{\sum_{i=1}^N \gamma_{i}^{T+1}}{\sum_{i=1}^N \gamma_{i}^t} \frac{\sum_{i=1}^N \gamma_{i}^t}{\sum_{i=1}^N \gamma_{i}^{t-1}} \cdot\cdots\cdot\frac{\sum_{i=1}^N \gamma_{i}^2}{\sum_{i=1}^N \gamma_{i}^1}\bigg) \nonumber \\
    =& \sum_{t=1}^T \ln\frac{\sum_{i=1}^N \gamma_{i}^{T+1}}{\sum_{i=1}^N \gamma_{i}^t} \nonumber \\
    \leq& -\alpha \sum_{t=1}^T \frac{\sum_{i=1}^N \gamma_{i}^t \cdot \tilde{\ell}_{i}^t}{\sum_{i'=1}^N \gamma_{i'}^t} + \frac{1}{2}\alpha^2 \sum_{t=1}^T\frac{\sum_{i=1}^N \gamma_{i}^t \cdot (\tilde{\ell}_{i}^t)^2}{\sum_{i'=1}^N \gamma_{i'}^t}. \label{upp''}
\end{align}
According to \eqref{low''} and \eqref{upp''}, we have
\begin{align*}
&-\alpha\sum_{t=1}^T\tilde{\ell}_{i^*}^t -\alpha^2\sum_{t=1}^T\bigg(\tilde{\ell}_{i^*}^t\bigg)^2 - \ln N  \\
\leq& -\alpha \sum_{t=1}^T \frac{\sum_{i=1}^N \gamma_{i}^t \cdot \tilde{\ell}_{i}^t}{\sum_{i'=1}^N \gamma_{i'}^t} + \frac{1}{2}\alpha^2 \sum_{t=1}^T\frac{\sum_{i=1}^N \gamma_{i}^t \cdot (\tilde{\ell}_{i}^t)^2}{\sum_{i'=1}^N \gamma_{i'}^t}.
\end{align*}
After re-arranging the above inequalities and dividing $\alpha$ on both sides, we have
\begin{align*}
   &\sum_{t=1}^T \frac{\sum_{i=1}^N \gamma_{i}^t \cdot \tilde{\ell}_{i}^t}{\sum_{i'=1}^N \gamma_{i'}^t}  - \sum_{t=1}^T\tilde{\ell}_{i^*}^t \\
   \leq& \frac{\ln N}{\alpha} + \frac{1}{2}\alpha \sum_{t=1}^T\frac{\sum_{i=1}^N \gamma_{i}^t \cdot (\tilde{\ell}_{i}^t)^2}{\sum_{i'=1}^N \gamma_{i'}^t} + \alpha\sum_{t=1}^T\bigg(\tilde{\ell}_{i^*}^t\bigg)^2.
\end{align*}
After taking expectation of $\tilde{\ell}_{i}^t$ on the above inequality, we have
\begin{align}\label{upper}
   &\mathbb{E}\bigg[\sum_{t=1}^T \frac{\sum_{i=1}^N \gamma_{i}^t \cdot \tilde{\ell}_{i}^t}{\sum_{i'=1}^N \gamma_{i'}^t}  - \sum_{t=1}^T\tilde{\ell}_{i^*}^t\bigg] \nonumber \\
   \leq& \frac{\ln N}{\alpha} + \frac{1}{2}\alpha \sum_{t=1}^T\sum_{i=1}^N \frac{\gamma_{i}^t }{\sum_{i'=1}^N \gamma_{i'}^t} \frac{(\ell_{i}^t)^2}{\theta_i^t} + \alpha\sum_{t=1}^T\frac{(\ell_{i^*}^t)^2}{\theta_i^t} \nonumber \\
   \leq& \frac{\ln N}{\alpha} + \frac{1}{2}\alpha NT  + \alpha\sum_{t=1}^T\frac{(\ell_{i^*}^t)^2}{\theta_i^t},
\end{align}
where the second inequality holds due to $\ell_i^t \in [0, 1]$ for $i \in [N]$, $\theta_i^t \geq \min\{\frac{\gamma_{i}^t }{\sum_{i'=1}^N \gamma_{i'}^t}, \frac{1}{N}\}$ and $\sum_{t=1}^T\sum_{i=1}^N \frac{\gamma_{i}^t }{\sum_{i'=1}^N \gamma_{i'}^t} \frac{1}{\theta_i^t} \leq \sum_{t=1}^T\sum_{i=1}^N \frac{\gamma_{i}^t }{\sum_{i'=1}^N \gamma_{i'}^t} \frac{1}{\min\{\frac{\gamma_{i}^t }{\sum_{i'=1}^N \gamma_{i'}^t}, \frac{1}{N}\}} = NT$.
We then derive a lower bound of the expectation of $\sum_{t=1}^T \frac{\sum_{i=1}^N \gamma_{i}^t \cdot \tilde{\ell}_{i}^t}{\sum_{i'=1}^N \gamma_{i'}^t}  - \sum_{t=1}^T\tilde{\ell}_{i^*}^t$. After taking expectation of $\hat{\ell}_{i}^t$, we have
\begin{align}\label{lower}
&\mathbb{E}\bigg[\sum_{t=1}^T \frac{\sum_{i=1}^N \gamma_{i}^t \cdot \tilde{\ell}_{i}^t}{\sum_{i'=1}^N \gamma_{i'}^t}  - \sum_{t=1}^T\tilde{\ell}_{i^*}^t\bigg] \nonumber \\
   =&\sum_{t=1}^T \sum_{i=1}^N\frac{ \gamma_{i}^t} {\sum_{i'=1}^N \gamma_{i'}^t} \ell_i^t\bigg(1-\frac{\alpha}{\theta_{i}^t}\bigg) - \sum_{t=1}^T\ell_{i^*}^t\bigg(1-\frac{\alpha}{\theta_{i}^t}\bigg) \nonumber \\
   =&\sum_{t=1}^T \sum_{i=1}^N\frac{ \gamma_{i}^t} {\sum_{i'=1}^N \gamma_{i'}^t} \ell_i^t - \sum_{t=1}^T\ell_{i^*}^t- \sum_{t=1}^T\sum_{i=1}^N\frac{ \gamma_{i}^t} {\sum_{i'=1}^N \gamma_{i'}^t}\frac{\alpha \ell_i^t}{\theta_i^t} \nonumber \\
   &+ \sum_{t=1}^T\ell_{i^*}^t \frac{\alpha}{\theta_i^t} \nonumber \\
   \geq&\sum_{t=1}^T \sum_{i=1}^N\frac{ \gamma_{i}^t} {\sum_{i'=1}^N \gamma_{i'}^t} \ell_i^t - \sum_{t=1}^T\ell_{i^*}^t - \alpha NT + \alpha\sum_{t=1}^T\frac{(\ell_{i^*}^t)^2}{\theta_i^t},
\end{align}
where the first equality holds due to $\ell_i^t \in [0, 1]$ for $i \in [N]$, $\theta_i^t \geq \min\{\frac{\gamma_{i}^t }{\sum_{i'=1}^N \gamma_{i'}^t}, \frac{1}{N}\}$ and $\sum_{t=1}^T\sum_{i=1}^N \frac{\gamma_{i}^t }{\sum_{i'=1}^N \gamma_{i'}^t} \frac{1}{\theta_i^t} \leq \sum_{t=1}^T\sum_{i=1}^N \frac{\gamma_{i}^t }{\sum_{i'=1}^N \gamma_{i'}^t} \frac{1}{\min\{\frac{\gamma_{i}^t }{\sum_{i'=1}^N \gamma_{i'}^t}, \frac{1}{N}\}} = NT$. According to \eqref{upper} and \eqref{lower}, we have
\begin{align*}
    \sum_{t=1}^T \sum_{i=1}^N\frac{ \gamma_{i}^t} {\sum_{i'=1}^N \gamma_{i'}^t} \ell_i^t - \sum_{t=1}^T\ell_{i^*}^t \leq \frac{3}{2}\alpha NT + \frac{\ln N}{\alpha}.
\end{align*}
Since $\theta_i^t = \frac{(1-\beta)\gamma_i^t + \beta}{(1-\beta)\sum_{i=1}^N \gamma_i^t + \beta N} < \frac{\gamma_i^t}{\sum_{i=1}^N \gamma_i^t} + \frac{1}{N}$, we further have
\begin{align*}
    &\sum_{t=1}^T \sum_{i=1}^N\bigg(\theta_i^t - \frac{1}{N}\bigg) \ell_i^t - \sum_{t=1}^T\ell_{i^*}^t \\
    <& \sum_{t=1}^T \sum_{i=1}^N\frac{ \gamma_{i}^t} {\sum_{i'=1}^N \gamma_{i'}^t} \ell_i^t - \sum_{t=1}^T\ell_{i^*}^t \\
    \leq& 3\alpha NT + \frac{\ln N}{\alpha},
\end{align*}
which is equal to
\begin{align*}
    \sum_{t=1}^T \sum_{i=1}^N\theta_i^t\ell_i^t - \sum_{t=1}^T\ell_{i^*}^t &\leq \sum_{i=1}^N \sum_{t=1}^T \frac{1}{N} \ell_i^t + 3\alpha NT + \frac{\ln N}{\alpha} 
\end{align*}
\begin{align*}
    &\leq \sum_{i=1}^N \sum_{t=1}^T \frac{2 \beta}{N} \ell_i^t + 3\alpha NT + \frac{\ln N}{\alpha} \\
    &\leq 2 \beta T + 3\alpha NT + \frac{\ln N}{\alpha},
\end{align*}
where the second inequality holds due to $\beta \leq \frac{1}{2}$ and the third due to $\ell_i^t \leq 1$. By taking $\beta =2\alpha N$, we have
\begin{align*}
 R_{\mathcal{M}}(T) &= \sum_{t=1}^T \sum_{i=1}^N\theta_i^t\ell_i^t - \sum_{t=1}^T\ell_{i^*}^t \leq 7\alpha NT + \frac{\ln N}{\alpha} \\
 &\leq 2\sqrt{7}\sqrt{NT\ln N} = \mathcal{O}(\sqrt{T})
\end{align*}
at $\alpha  = \sqrt{\frac{\ln N}{7NT}}$. Now let us check the condition for $\beta \leq \frac{1}{2}$, which is equal to $T > \frac{4}{\sqrt{7}}N \ln N$ and holds for $T \to \infty$. Note that $\beta = 2 \alpha N$ satisfies the condition of $\alpha \frac{N}{\beta} \leq \frac{1}{2}$. We then finish the proof.

\section{Proof of Proposition~\ref{prop-responsive}}\label{A-responsive}

By Proposition~\ref{L2}, workers report truthfully, so $\hat{\mathcal{P}}_i = \mathcal{P}_i$ and $L_i^s := \frac{1}{m_s}\sum_{j=1}^{m_s}(\mathcal{P}_i(y_{l_j}^s \succ y_{l_j'}^s|x_j^s) - p_j^s)^2 \in [0, 1]$ is worker $i$'s realized per-slot squared feedback loss, with expectation $\ell_i^s = \mathbb{E}[L_i^s]$.

\textit{Step 1: Multiplicative form of the expected weights.} By \eqref{e4}, $w_i^{s+1} = w_i^s(1 - \alpha L_i^s)$. Since $w_i^s$ is determined by $\{p_j^\sigma : \sigma < s\}$ and $L_i^s$ is determined by $\{p_j^s\}$, and since the realized states are independent across slots, $w_i^s$ and $L_i^s$ are independent. Taking expectations conditional on the weight at slot $t_0$ yields $\mathbb{E}[w_i^{s+1}] = \mathbb{E}[w_i^s] \cdot (1 - \alpha \ell_i^s)$. Iterating from $s = t_0$ for $\tau$ steps gives, for both workers,
\begin{align*}
    \mathbb{E}[w_i^{t_0+\tau}] = w_i^{t_0} \prod_{s=t_0}^{t_0+\tau-1} (1 - \alpha \ell_i^s), \quad \mathbb{E}[w_k^{t_0+\tau}] = w_k^{t_0} \prod_{s=t_0}^{t_0+\tau-1} (1 - \alpha \ell_k^s).
\end{align*}

\textit{Step 2: Per-slot growth lower bound.} For any $s \ge t_0$,
\begin{align*}
    \ln \frac{1 - \alpha \ell_i^s}{1 - \alpha \ell_k^s} = \int_{\ell_i^s}^{\ell_k^s} \frac{\alpha}{1 - \alpha u}\, du \ge \alpha (\ell_k^s - \ell_i^s) \ge \alpha \Delta,
\end{align*}
where the first inequality holds since $\frac{\alpha}{1 - \alpha u} \ge \alpha$ for $u \in [0, 1]$ under $\alpha < 1/2$, and the second follows from $\Delta := \min_{s \ge t_0}(\ell_k^s - \ell_i^s)$.

\textit{Step 3: Catch-up time.} Catching up after $\tau_{\textnormal{new}}$ updates means $\mathbb{E}[w_i^{t_0+\tau_{\textnormal{new}}}] \ge \mathbb{E}[w_k^{t_0+\tau_{\textnormal{new}}}]$, equivalently,
\begin{align*}
    \sum_{s=t_0}^{t_0+\tau_{\textnormal{new}}-1} \ln\bigg(\frac{1 - \alpha \ell_i^s}{1 - \alpha \ell_k^s}\bigg) \ge \ln\bigg(\frac{w_k^{t_0}}{w_i^{t_0}}\bigg).
\end{align*}
The sum has $\tau_{\textnormal{new}}$ terms, each lower-bounded by $\alpha\Delta$ from Step~2. Hence the catch-up condition is satisfied whenever $\tau_{\textnormal{new}} \alpha \Delta \ge \ln(w_k^{t_0}/w_i^{t_0})$, giving the upper bound
\begin{align*}
    \tau_{\textnormal{new}} \le \bigg\lceil \frac{\ln(w_k^{t_0}/w_i^{t_0})}{\alpha \Delta}\bigg\rceil = \mathcal{O}\bigg(\frac{1}{\alpha \Delta}\bigg) = \mathcal{O}\bigg(\sqrt{\frac{T}{\ln N}} \cdot \Delta^{-1}\bigg)
\end{align*}
after substituting $\alpha = \frac{2}{3}\sqrt{2\ln N/T}$ from Theorem~\ref{Thm1}. The bound depends on the weight ratio $w_k^{t_0}/w_i^{t_0}$ at the arrival slot but not on $t_0$ itself, so it does not grow with the existing worker's tenure prior to $t_0$. We finish the proof. \hfill $\square$

\section{Proof of Proposition~\ref{prop-robust}}\label{A-robust}

We prove parts (a) and (b) in turn.

\textit{Proof of part (a).} Fix worker $i$ and prompt $j$; we suppress the slot and prompt indices in this part for brevity. Under worker $i$'s Bernoulli belief $p_j^t \sim \text{Bernoulli}(\mathcal{P}_i)$, by the law of total probability under the symmetric $\epsilon$-flip,
\begin{align*}
    \Pr(\tilde{p}_j^t = 1) &= (1-\epsilon) \cdot \mathcal{P}_i + \epsilon \cdot (1 - \mathcal{P}_i) \\
    &= (1 - 2\epsilon) \mathcal{P}_i + \epsilon.
\end{align*}
Hence $\tilde{p}_j^t \sim \text{Bernoulli}(q_\epsilon)$ with $q_\epsilon := (1-2\epsilon)\mathcal{P}_i + \epsilon$. Intuitively, the symmetric flip contracts the signal toward $1/2$ by a factor of $(1-2\epsilon)$, and recovers $\mathcal{P}_i$ as $\epsilon \to 0$.

By the mean-variance decomposition of $\mathbb{E}[(\hat{\mathcal{P}}_i - \tilde{p}_j^t)^2]$ under $\tilde{p}_j^t \sim \text{Bernoulli}(q_\epsilon)$, the expected weight multiplier at slot $t$ under noisy verification is
\begin{align*}
    \mathbb{E}[w_i^{t+1}] = w_i^t \cdot \big[1 - \alpha (\hat{\mathcal{P}}_i - q_\epsilon)^2 - \alpha q_\epsilon (1 - q_\epsilon)\big],
\end{align*}
which is a strictly concave quadratic in $\hat{\mathcal{P}}_i$ uniquely maximized at $\hat{\mathcal{P}}_i^* = q_\epsilon$. The best-response deviation from truthful reporting is therefore
\begin{align*}
    |\hat{\mathcal{P}}_i^* - \mathcal{P}_i| = |q_\epsilon - \mathcal{P}_i| = \epsilon \cdot |1 - 2 \mathcal{P}_i| \le \epsilon.
\end{align*}

To bound the cumulative strategic gain, note that the per-slot gain from shifting the report from $\mathcal{P}_i$ to $q_\epsilon$ is at most $\alpha w_i^t (q_\epsilon - \mathcal{P}_i)^2 \le \alpha w_i^t \epsilon^2$. Using $w_i^t \le 1$ from \eqref{e4} and summing over $T$ slots,
\begin{align*}
    \sum_{t=1}^T \alpha w_i^t (q_\epsilon - \mathcal{P}_i)^2 \le \alpha \epsilon^2 T = \mathcal{O}(\epsilon^2 \sqrt{T}),
\end{align*}
after substituting $\alpha = \frac{2}{3}\sqrt{2\ln N/T}$ from Theorem~\ref{Thm1}. This is dominated by the $\mathcal{O}(\sqrt{T})$ regret term in Theorem~\ref{Thm1} and vanishes as $\epsilon \to 0$.

\textit{Proof of part (b).} Fix worker $i$ and prompt $j$. For any reported $\hat{\mathcal{P}}_i \in [0, 1]$ and any labels $p_j^t, \tilde{p}_j^t \in \{0, 1\}$,
\begin{align*}
    \big|(\hat{\mathcal{P}}_i - \tilde{p}_j^t)^2 - (\hat{\mathcal{P}}_i - p_j^t)^2\big| \le \mathbf{1}\{\tilde{p}_j^t \neq p_j^t\},
\end{align*}
since when the labels differ, $|(\hat{\mathcal{P}}_i - \tilde{p}_j^t)^2 - (\hat{\mathcal{P}}_i - p_j^t)^2| = |2\hat{\mathcal{P}}_i - 1| \cdot |\tilde{p}_j^t - p_j^t| \le 1$, and when they agree, the left-hand side is zero. Taking expectation and using $\Pr(\tilde{p}_j^t \neq p_j^t) \le \epsilon$,
\begin{align*}
    \big|\mathbb{E}[(\hat{\mathcal{P}}_i - \tilde{p}_j^t)^2] - \mathbb{E}[(\hat{\mathcal{P}}_i - p_j^t)^2]\big| \le \epsilon.
\end{align*}

Averaging over $j = 1, \dots, m_t$ and summing over $t = 1, \dots, T$, the aggregation loss in the first term of \eqref{e3} under noisy verification differs from the clean-case aggregation loss by at most $\epsilon T$. The same $\epsilon T$ bound applies to the best-expert benchmark loss in the second term of \eqref{e3}. By the triangle inequality and the clean-case regret bound in Theorem~\ref{Thm1},
\begin{align*}
   \mathbb{E}[R_{\mathcal{M}}(T)] \le 3\sqrt{\frac{T \ln N}{2}} + 2\epsilon T = \mathcal{O}(\sqrt{T}) + 2\epsilon T.
\end{align*}
Dividing by $T$,
\begin{align*}
    \frac{\mathbb{E}[R_{\mathcal{M}}(T)]}{T} \le \mathcal{O}\bigg(\frac{1}{\sqrt{T}}\bigg) + 2\epsilon,
\end{align*}
recovering the clean-case bound as $\epsilon \to 0$. We finish the proof. 


\end{document}